\def\figref#1{Figure~\ref{#1}}
\def\secref#1{Section~\ref{#1}}
\def\eqref#1{Equation~\ref{#1}}
\def\Algref#1{Algorithm~\ref{#1}}
\def\1{\bm{1}}
\def\eps{{\epsilon}}
\DeclareMathAlphabet{\mathsfit}{\encodingdefault}{\sfdefault}{m}{sl}
\SetMathAlphabet{\mathsfit}{bold}{\encodingdefault}{\sfdefault}{bx}{n}
\def\gG{{\mathcal{G}}}
\def\gN{{\mathcal{N}}}
\def\gZ{{\mathcal{Z}}}
\def\sP{{\mathbb{P}}}
\newcommand{\E}{\mathbb{E}}
\DeclareMathOperator*{\argmin}{arg\,min}
\newcommand{\newadd}[1]{{ #1}} % \color{cyan}\\
\newcommand{\Bo}[1]{{\color{red} [Bo: #1]}}
\newcommand{\RN}[1]{%
	\textup{\lowercase\expandafter{\it \romannumeral#1}}%
}
\newcommand{\estname}{{GenDICE}\xspace}
\newcommand{\estabb}{{GenDICE}\xspace}
\newcommand{\Dset}{\mathcal{D}}
\newcommand{\pval}{\mathcal{R}}
\newcommand{\spn}{\operatorname{span}}
\renewcommand{\cite}{\citep}
\def\ptau{p\cdot\tau}
\def\Ttau{\rbr{\Tpgam \circ\tau}}
\def\Tpgam{\Tcal_{\gamma, \mu_0}^p}
\def\tauhs{\hat\tau_{\Hcal}^*}
\def\uhs{\uhat^*}
\def\fhs{\fhat_\Fcal^*}
\title{\estabb: Generalized Offline Estimation of Stationary Values}
\author{Ruiyi Zhang$^{1}$\thanks{Equal contribution.}\hspace{2mm}\thanks{Work done while interning at Google.}\hspace{1mm},
  Bo Dai$^{2*}$, Lihong Li$^2$, Dale Schuurmans$^2$ \\
  $^1$Duke University, $^2$Google Research, Brain Team\\	
}
\begin{document}

\maketitle
%!TEX root = dual_ratio.tex

\begin{abstract}

An important problem that arises in reinforcement learning and Monte Carlo methods is estimating quantities defined by the stationary distribution of a Markov chain. In many real-world applications, access to the underlying transition operator is limited to a fixed set of data that has already been collected, without additional interaction with the environment being available. We show that consistent estimation remains possible in this challenging scenario, and that effective estimation can still be achieved in important applications. Our approach is based on estimating a ratio that corrects for the discrepancy between the stationary and empirical distributions, derived from fundamental properties of the stationary distribution, and exploiting constraint reformulations based on variational divergence minimization. The resulting algorithm, \estname, is straightforward and effective. We prove its consistency under general conditions, provide an error analysis, and demonstrate strong empirical performance on benchmark problems, including off-line PageRank and off-policy policy evaluation.

\end{abstract}

%!TEX root = dual_ratio.tex

%%%%%%%%%%%%%%%%%%%%%%%%%%%%%%%%%%%%%%%%%%%%%%%%%%%%%%%%%%%%%%%%%%%%%%%%%%%%
\section{Introduction}\label{sec:intro}
%%%%%%%%%%%%%%%%%%%%%%%%%%%%%%%%%%%%%%%%%%%%%%%%%%%%%%%%%%%%%%%%%%%%%%%%%%%%

Estimation of quantities defined by the stationary
distribution of a Markov chain
lies at the heart of many scientific and engineering problems.
Famously, the steady-state distribution of a random walk on the World Wide Web
provides the foundation of the %celebrated 
PageRank
algorithm~\citep{langville04deeper}.
In many areas of machine learning,
Markov chain Monte Carlo (MCMC) methods are used to 
conduct approximate Bayesian inference
by considering Markov chains whose equilibrium distribution is a desired
posterior~\citep{andrieu02introduction}. 
An example from engineering is queueing theory,
where the queue lengths and waiting time under
the limiting distribution have been extensively
studied~\citep{gross18fundamentals}.
As we will also see below, 
stationary distribution quantities are
of fundamental importance in reinforcement learning 
(RL)~\citep[e.g.,][]{tsitsiklis97analysis}.
%  Example in SDE?

Classical algorithms for estimating stationary distribution quantities
rely on the ability to sample next states from the current state 
\emph{by directly interacting with the environment}
(as in on-line RL or MCMC),
or even require the transition probability distribution to be given explicitly
(as in PageRank).
%MCMC methods also typically require a long trajectory of the
%Markov chain to be gathered to allow a state sampled after the burn-in period
%to be approximately sampled from the stationary distribution.  
Unfortunately, these classical approaches are inapplicable
when direct access to the environment is not available,
which is often the case in practice.
There are many practical scenarios where
a collection of sampled trajectories is available,
having been collected off-line by an external mechanism
that chose states and recorded the subsequent next states.
Given such data,
we still wish to estimate a stationary quantity.
One important example is off-policy policy evaluation in RL,
where we wish to estimate the value of a policy different from that
used to collect experience.
Another example is off-line PageRank (OPR), 
where we seek to estimate the relative importance of webpages given a
sample of the web graph.

Motivated by the importance of these off-line scenarios,
and by the inapplicability of classical methods,
we study the problem of \emph{off-line estimation of stationary values}
via a \emph{stationary distribution corrector}.
Instead of having access to the transition probabilities or a next-state
sampler, we assume only access to a \emph{fixed} sample of state transitions,
where states have been sampled from an unknown distribution
and next-states are sampled according to the Markov chain's transition operator.
%This off-line setting is distinct from that considered by most MCMC or
%on-line RL methods, where it is assumed that new observations can be
%continually sampled by demand from the environment.
The off-line setting is indeed more challenging than its 
more traditional on-line counterpart,
given that one must infer an asymptotic quantity from finite data.
Nevertheless, we develop techniques that still allow consistent estimation
under general conditions, and provide effective estimates in practice.
The main contributions of this work are:
\vspace{-2mm}
\begin{itemize}[leftmargin=*]
\item
We formalize the problem of off-line estimation of stationary quantities,
which captures a wide range of practical applications.
\item
We propose a novel stationary distribution estimator, \estname, for this task.
The resulting algorithm is
% We propose a novel algorithm for this task, \estname, by stationary distribution correction estimation. 
based on a new dual embedding formulation for divergence minimization,
with a carefully designed mechanism that explicitly eliminates degenerate solutions.
\item
We theoretically establish consistency and other statistical properties
of~\estname, and empirically demonstrate that it achieves significant
improvements on several behavior-agnostic off-policy evaluation benchmarks and an off-line version of PageRank.
\end{itemize}
\vspace{-2mm}
The methods we develop in this paper fundamentally extend recent work in
off-policy policy evaluation \citep{liu2018breaking,nachum2019dualdice}
by introducing a new formulation that leads to a more general,
and as we will show, more effective estimation method.

%!TEX root = dual_ratio.tex

\vspace{-3mm}
\section{Background}
\vspace{-2mm}
\label{eq:prelim}
\label{eq:background}

% \Bo{We need to keep notation consistent. I will use $\mu\rbr{x'} = \rbr{\Tcal\circ\mu}\rbr{x}dx $ in following section for unification. For $\gamma = 1$ and $\gamma \in (0, 1)$ we need to specify the definition of $\Tcal\circ \mu$. With such notations, we take the OPE and OPR as major application examples. }

We first introduce off-line PageRank (OPR)
and off-policy policy evaluation (OPE)
as two motivating domains,
where the goal is to estimate stationary quantities
given only off-line access to a set of sampled transitions from an environment.

\vspace{-3mm}
\paragraph{Off-line PageRank (OPR)}
The celebrated PageRank algorithm \citep{PagBriMotWin99}
defines the ranking of a web page in terms of its asymptotic visitation
probability under a random walk on the (augmented) directed graph
specified by the hyperlinks.
If we denote the World Wide Web by a directed graph $G = \rbr{V, E}$
with vertices (web pages) $v\in V$ and edges (hyperlinks) $\rbr{v, u}\in E$,
PageRank considers the random walk defined by the Markov transition operator
$v\rightarrow u$:
\begin{equation}
\textstyle
\Pb\rbr{u|v} =
\frac{(1-\eta)}{\abr{v}}\one_{\rbr{v, u}\in E} + \frac{\eta}{\abr{V}}
\,,
\label{eq:pagerank}
\end{equation}
where $\abr{v}$ denotes the out-degree of vertex $v$
and $\eta\in[0,1)$ is a probability of ``teleporting" to any page uniformly.
Define
$d_t\rbr{v} \defeq
\PP\rbr{s_t=v| s_0\sim \mu_0, \forall i< t, s_{i+1}\sim\Pb(\cdot|s_i)}$,
where $\mu_0$ is the initial distribution over vertices,
then the original PageRank algorithm explicitly iterates for the limit
\begin{equation}
d\rbr{v}\defeq
  \begin{cases}
    \lim_{t\rightarrow\infty} d_t\rbr{v}
      & \quad \text{if } \gamma =1
    \\[0.5ex]
    (1-\gamma)\sum_{t=0}^{\infty}\gamma^t d_t\rbr{v}
      & \quad \text{if } \gamma \in (0, 1)\,.
  \end{cases}
\end{equation}
% \vspace{-1mm}
The classical version of this problem
is solved by tabular methods that simulate \eqref{eq:pagerank}.
However, we are interested in a more scalable off-line version of the problem
where the transition model is not explicitly given.
Instead, consider estimating the rank of a particular web page $v'$
from a large web graph, given only a sample $\Dcal = \cbr{\rbr{v, u}_i}_{i=1}^N$
from a random walk on $G$ as specified above.
We would still like to estimate $d(v')$ based on this data.
First, note that if one knew the distribution $p$ by which any vertex $v$
appeared in $\Dcal$,
the target quantity could be re-expressed by a simple importance ratio
$d\rbr{v'} =\EE_{v\sim p}\sbr{\frac{d\rbr{v}}{p\rbr{v}}\one_{v=v'}}$.
Therefore, if one had the correction ratio function 
$\tau\rbr{v}=\frac{d\rbr{v}}{p\rbr{v}}$, 
an estimate of $d\rbr{v'}$ can easily be recovered
via $d\rbr{v'}\approx\hat p\rbr{v'}\tau\rbr{v'}$,
where $\hat p\rbr{v'}$ is the empirical probability of $v'$ 
estimated from $\Dcal$.
The main attack on the problem we investigate is to recover a good
estimate of the ratio function $\tau$.

\vspace{-3mm}
\paragraph{Policy Evaluation}

An important generalization of this stationary value estimation problem
arises in RL in the form of policy evaluation.
Consider a Markov Decision Process~(MDP) 
$\mathcal{M} = \langle S, A, \Pb, R, \gamma, \mu_0 \rangle$~\citep{Puterman14},
where $S$ is a state space, $A$ is an action space,
$\Pb\rbr{s'|s, a}$ denotes the transition dynamics,
$R$ is a reward function, $\gamma\in(0, 1]$ is a discounted factor,
and $\mu_0$ is the initial state distribution.
Given a policy, which chooses actions in any state $s$ according to the
probability distribution $\pi(\cdot|s)$,
a trajectory $\beta=(s_0,a_0,r_0,s_1,a_1,r_1,\ldots)$
is generated by first sampling the initial state $s_0 \sim \mu_0$,
and then for $t \ge 0$, $a_t \sim \pi(\cdot|s_t)$, $r_t \sim R(s_t,a_t)$,
and $s_{t+1} \sim \Pb(\cdot|s_t,a_t)$.
The value of a policy $\pi$ is the expected per-step reward defined as:
\begin{equation}\label{eq:rewardfunc}
\textstyle
\text{\small{Average:}}~~\pval(\pi)\defeq\lim_{T\rightarrow\infty} \frac{1}{T+1}\EE\sbr{\sum_{t=0}^{T} r_t}\,,
~~\text{\small{Discounted:}}~~\pval_\gamma(\pi)\defeq (1-\gamma)\EE\sbr{\sum_{t=0}^{\infty}\gamma^t r_t}\,.
\end{equation}
In the above, the expectation is taken with respect to the randomness in 
the state-action pair $\Pb\rbr{s'|s, a}\pi\rbr{a'|s'}$
and the reward $R\rbr{s_t, a_t}$.
Without loss of generality, we assume the limit exists for the average case,
and hence $\pval(\pi)$ is finite.

\vspace{-3mm}
\paragraph{Behavior-agnostic Off-Policy Evaluation~(OPE)}

An important setting of policy evaluation that often arises in practice
is to estimate $\pval_\gamma\rbr{\pi}$ or $\pval\rbr{\pi}$
given a fixed dataset 
$\Dcal = \cbr{\rbr{s, a, r, s'}_i}_{i=1}^N\sim \Pb\rbr{s'|s, a}p\rbr{s, a}$,
where $p\rbr{s, a}$ is an unknown distribution
induced by multiple unknown behavior policies.
This problem is different from the classical form of OPE,
where it is assumed that a known behavior policy $\pi_b$
is used to collect transitions.
In the behavior-agnostic scenario, however, 
typical importance sampling~(IS)~estimators~\citep[e.g.,][]{PreSutSin00}
do not apply.
Even if one can assume $\Dcal$ consists of trajectories
where the behavior policy can be estimated from data,
it is known that that straightforward IS estimators suffer
a variance exponential in the trajectory length, known as
the ``curse of horizon''~\citep{jiang2015doubly,liu2018breaking}.

Let $d^\pi_t\rbr{s, a} = \PP\rbr{s_t=s, a_t=a| s_0\sim \mu_0, \forall i< t, a_i\sim\pi\rbr{\cdot|s_i}, s_{i+1}\sim\Pb(\cdot|s_i, a_i)}$. 
The stationary distribution can then be defined as
\begin{equation}
\label{eq:mdp_stationary}
\mu_\gamma^\pi\rbr{s, a}\defeq
  \begin{cases}
    \lim_{T\rightarrow\infty} \frac{1}{T+1}\sum_{t=0}^T d^\pi_t\rbr{s, a}
    = \lim_{t\rightarrow\infty} d^\pi_t\rbr{s, a}
      & \quad \text{if } \gamma =1
    \\[0.5ex]
    (1-\gamma)\sum_{t=0}^{\infty}\gamma^t d^\pi_t\rbr{s, a}
      & \quad \text{if } \gamma \in (0, 1)\,.
  \end{cases}
\end{equation}
With this definition, $\pval(\pi)$ and $\pval_\gamma\rbr{\pi}$ 
can be equivalently re-expressed as
\begin{equation}
\label{eq:dual_reward}
\textstyle
\pval_\gamma(\pi)\defeq \EE_{\mu_\gamma^\pi}\sbr{R\rbr{s, a}}
= \EE_{p}\sbr{\frac{\mu_\gamma^\pi\rbr{s, a}}{p\rbr{s, a}}R\rbr{s, a}}\,.
\end{equation}
Here we see once again that if we had the correction ratio function
$\tau\rbr{s,a}=\frac{\mu_\gamma^\pi\rbr{s, a}}{p\rbr{s, a}}$
a straightforward estimate of $\pval_\gamma(\pi)$ could be recovered via
$\pval_\gamma(\pi)\approx\EE_{\hat p}\sbr{\tau\rbr{s, a}R\rbr{s,a}}$,
where $\hat p\rbr{s,a}$ is an empirical estimate of $p\rbr{s,a}$.
In this way, the behavior-agnostic OPE problem can be
reduced to estimating the correction ratio function $\tau$, as above.

We note that \citet{liu2018breaking} and~\citet{nachum2019dualdice}
also exploit~\eqref{eq:dual_reward}
to reduce OPE to stationary distribution correction,
but these prior works are distinct from the current proposal
in different ways.
First, the inverse propensity score~(IPS)~method of \citet{liu2018breaking}
assumes the transitions are sampled from a \emph{single} behavior policy,
which must be \emph{known} beforehand;
hence that approach is not applicable in behavior-agnostic OPE setting.
Second,
the recent DualDICE~algorithm \citep{nachum2019dualdice}
is also a behavior-agnostic OPE estimator,
but its derivation relies on a \emph{change-of-variable} trick
that is only valid for $\gamma<1$.
This previous formulation becomes unstable when $\gamma\rightarrow 1$,
as shown in~\secref{sec:experiments} and~\appref{appendix:exp}.
The behavior-agnostic OPE estimator we derive below in~\secref{sec:dual_est}
is applicable both when $\gamma = 1$ and $\gamma\in (0, 1)$.
This connection is why we name the new estimator \estabb,
for \emph{GENeralized stationary DIstribution Correction Estimation}.
%\Bo{Should we modify this considering the new title?}

%!TEX root = dual_ratio.tex

%%%%%%%%%%%%%%%%%%%%%%%%%%%%%%%%%%%%%%%%%%%%%%%%%%%%%%%%%%%%%%%%%%%%%%%%%%%%
\vspace{-3mm}
\section{\estname}\label{sec:dual_est}
\vspace{-2mm}
%%%%%%%%%%%%%%%%%%%%%%%%%%%%%%%%%%%%%%%%%%%%%%%%%%%%%%%%%%%%%%%%%%%%%%%%%%%%

As noted, there are important estimation problems in the Markov chain and MDP
settings that can be recast as estimating a stationary distribution
correction ratio.
We first outline the conditions that characterize the 
correction ratio function $\tau$,
upon which we construct the objective for the~\estabb estimator,
and design efficient algorithm for optimization.
We will develop our approach for the more general MDP setting,
with the understanding that all the methods and results
can be easily specialized to the Markov chain setting. % as necessary.

%--------------------------------------------------------------------------
\vspace{-2mm}
\subsection{Estimating Stationary Distribution Correction}\label{sec:unbias_estimator}
\vspace{-1mm}
%--------------------------------------------------------------------------

The stationary distribution $\mu^\pi_\gamma$ defined
in~\eqref{eq:mdp_stationary} can also be characterized via
\begin{equation}
\label{eq:stationary}
% \textstyle
\resizebox{0.85\hsize}{!}{$
\mu\rbr{s', a'} = \underbrace{\rbr{1 - \gamma}\mu_0\rbr{s'}\pi\rbr{a'|s'} + \gamma\int \pi\rbr{a'|s'}\Pb\rbr{s'|s, a}\mu\rbr{s, a}ds\,da}_{\rbr{\Tcal\circ \mu}\rbr{s', a'}},\,\,\forall \rbr{s', a'}\in S\times A.
$}
\end{equation}
At first glance, this equation shares a superficial similarity to the Bellman
equation, but there is a fundamental difference.
The Bellman operator recursively integrates out future $\rbr{s', a'}$ pairs
to characterize a current pair $\rbr{s, a}$ value,
whereas the distribution operator $\Tcal$ 
defined in \eqref{eq:stationary}
operates in the reverse temporal direction.

When $\gamma<1$, \eqref{eq:stationary} always has a fixed-point solution.
For $\gamma =1$, in the discrete case, the fixed-point exists
as long as $\Tcal$ is ergodic;
in the continuous case, the conditions for fixed-point existence
become more complicated~\citep{MeyTwe12} and beyond the scope of this paper.

The development below is based on a divergence $D$ and
the following default assumption.
\vspace{-2mm}
\begin{assumption}[Markov chain regularity]\label{asmp:stat_exist}
For the given target policy $\pi$,
the resulting state-action transition operator $\Tcal$
has a unique stationary distribution $\mu$ that satisfies
$D(\Tcal\circ\mu\|\mu)=0$.
\end{assumption}
\vspace{-1mm}
In the behavior-agnostic setting we consider,
one does not have direct access to $\Pb$ for element-wise
evaluation or sampling,
but instead 
is given
a fixed set of samples from $\Pb\rbr{s'|s, a}p\rbr{s, a}$ 
with respect to some distribution $p\rbr{s,a}$ over $S\times A$.
Define $\Tpgam$ to be a mixture of $\mu_0\pi$ and $\Tcal_p$; \ie, let
\begin{equation}\label{eq:ref_tp}
\Tpgam\rbr{\rbr{s', a'}, \rbr{s, a}} \defeq  \rbr{1 - \gamma}\mu_0\rbr{s'}\pi\rbr{a'|s'} +\gamma \underbrace{\pi\rbr{a'|s'}\Pb\rbr{s'|s, a}p\rbr{s, a}}_{\Tcal_p\rbr{\rbr{s', a'}, \rbr{s, a}}}
.
\end{equation}
Obviously, conditioning on $\rbr{s, a, s'}$
one could easily sample $a'\sim \pi\rbr{a'|s'}$
to form
$\rbr{s, a, s', a'}\sim\Tcal_p\rbr{\rbr{s', a'}, \rbr{s, a}}$;
similarly,
a sample $\rbr{s', a'}\sim \mu_0\rbr{s'}\pi\rbr{a'|s'}$
could be formed from $s'$.
Mixing such samples with probability $\gamma$ and $1-\gamma$ respectively
yields a sample $\rbr{s, a, s', a'}\sim \Tpgam\rbr{\rbr{s', a'}, \rbr{s, a}}$.
Based on these observations,
the stationary condition for the ratio from~\eqref{eq:stationary}
can be re-expressed in terms of $\Tpgam$ as
\vspace{-1mm}
\begin{equation}\label{eq:stationary_ratio}
% \textstyle
\resizebox{0.9\hsize}{!}{$
p\rbr{s', a'}\tau^*\rbr{s', a'} =\underbrace{\rbr{1 - \gamma}\mu_0\rbr{s'}\pi\rbr{a'|s'} + \gamma \int\pi\rbr{a'|s'}\Pb\rbr{s'|s, a}p\rbr{s, a}\tau^*\rbr{s, a}ds\,da}_{\rbr{\Tpgam\circ\tau^*}\rbr{s', a'}},
$}
\end{equation}
where $\tau^*\rbr{s, a}\defeq \frac{\mu\rbr{s, a}}{p\rbr{s, a}}$
is the correction ratio function we seek to estimate.
One natural approach to  estimating $\tau^*$ is to match the
LHS and RHS of~\eqref{eq:stationary_ratio} 
with respect to some divergence $D\rbr{\cdot\|\cdot}$
over the empirical samples.
That is, we consider estimating $\tau^*$ by solving the optimization problem
\begin{equation}
\label{eq:naive_est}
\min_{\tau\ge 0}\,\,D\rbr{\Tpgam\circ\tau\|\ptau}.
\end{equation}
Although this forms the basis of our approach,
there are two severe issues with this naive formulation
that first need to be rectified:
\begin{itemize}[leftmargin=*,topsep=0pt, nosep]

\item[{\bf i)}] {\bf Degenerate solutions: }
When $\gamma =1$, the operator $\Tcal_{\gamma=1,\mu_0}^p$ is invariant
to constant rescaling:
if $\tau^* = \Tcal_{\gamma=1,\mu_0}^p\circ\tau^*$
then $c\tau^* = \Tcal_{\gamma=1,\mu_0}^p\circ \rbr{c\tau^*}$ for any $c\ge 0$.
Therefore, simply minimizing the divergence
$D\rbr{\Tcal_{\gamma=1,\mu_0}^p\circ\tau\|\ptau}$
cannot provide a desirable estimate of $\tau^*$.
In fact, in this case the trivial solution
$\tau^*\rbr{s,a} = 0$
cannot be eliminated.

\item[{\bf ii)}] {\bf Intractable objective: }
The divergence $D\rbr{\Tpgam\circ\tau\|\ptau}$ involves the computation
of $\Tpgam\circ\tau$, which in general involves an intractable integral.
Thus, evaluation of the exact objective is intractable,
and neglects the assumption that we only have access to samples from $\Tpgam$
and are not able to evaluate it at arbitrary points.
\end{itemize}

We address each of these two issues in a principled manner.

%--------------------------------------------------------------------------
\vspace{-2mm}
\subsection{Eliminating degenerate solutions} 
\vspace{-1mm}
%--------------------------------------------------------------------------

To avoid degenerate solutions when $\gamma=1$,
we ensure that the solution is a proper density ratio; 
that is,
the property
$\tau\in \Xi\defeq\cbr{\tau\rbr{\cdot}\ge0, \EE_{p}\sbr{\tau} = 1}$
must be true of any $\tau$ that is a ratio of some density to $p$.
This provides an additional constraint that we add to the optimization
formulation
\begin{equation}
\label{eq:constrained_est}
\min_{\tau\ge 0}\,\,D\rbr{\Tpgam\circ\tau\|\ptau}, \quad\st,\quad \EE_{p}\sbr{\tau} = 1. 
\end{equation}
With this additional constraint, it is obvious that the trivial solution
$\tau\rbr{s, a} = 0$ is eliminated as an infeasible point of
\eqnref{eq:constrained_est}, along with other degenerate solutions $\tau\rbr{s, a} = c\tau^*\rbr{s, a}$ with $c\neq 1$.

Unfortunately,
exactly solving an optimization with expectation constraints 
is very complicated in general~\citep{LanZho16}, 
particularly given a nonlinear parameterization for $\tau$.
The penalty method~\citep{LueYe15} provides a much simpler alternative,
where a sequence of regularized problems are solved
%
%\vspace{-1mm}
\begin{equation}
\label{eq:regularized_est}
\min_{\tau\ge 0}\,\,J\rbr{\tau}\defeq D\rbr{\Tpgam\circ\tau\|\ptau} + \smallfrac{\lambda}{2} \rbr{\EE_{p}\sbr{\tau} - 1}^2,
\end{equation}
with $\lambda$ increasing.
The drawback of the penalty method is that it generally requires $\lambda \rightarrow\infty$ to ensure the strict feasibility, which is still impractical, 
\newadd{especially in stochastic gradient descent. The infinite $\lambda$ may induce \emph{unbounded variance} in the gradient estimator, and thus, \emph{divergence} in optimization.} However, by exploiting the special structure of the solution sets to~\eqref{eq:regularized_est}, we can show that, remarkably, it is unnecessary to increase $\lambda$.
\vspace{-2mm}
\begin{theorem}
\label{thm:soundness}
For $\gamma\in (0, 1]$ and any $\lambda >0$,
the solution to \eqref{eq:regularized_est}
is given by $\tau^*\rbr{s,a}=\frac{u\rbr{s, a}}{p\rbr{s, a}}$.
\end{theorem}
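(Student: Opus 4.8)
The plan is to show that the non-negative objective $J$ attains the value $0$ exactly at the claimed ratio — namely $\tau^{*}(s,a)=\mu(s,a)/p(s,a)$ for the stationary distribution $\mu$ of Assumption~\ref{asmp:stat_exist} — so that $\tau^{*}$ is the unique global minimizer for \emph{every} finite $\lambda>0$, and in particular there is no need to drive $\lambda\to\infty$. The argument is essentially the elementary remark that a sum of two non-negative quantities vanishes iff both vanish, combined with a characterization of the solution set of the stationarity equation.

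First I would record the two obvious lower bounds. Since $D(\cdot\|\cdot)$ is a divergence, $D\rbr{\Tpgam\circ\tau\|\ptau}\ge 0$ with equality iff $\Tpgam\circ\tau=\ptau$ (as measures on $S\times A$), and $\rbr{\EE_{p}\sbr{\tau}-1}^{2}\ge 0$ with equality iff $\EE_{p}\sbr{\tau}=1$. Hence $J(\tau)\ge 0$, and $J(\tau)=0$ if and only if $\tau$ simultaneously solves the stationarity equation~\eqref{eq:stationary_ratio} and satisfies the normalization $\EE_{p}\sbr{\tau}=1$. Next, $\tau^{*}=\mu/p$ satisfies~\eqref{eq:stationary_ratio} by the derivation of that identity from~\eqref{eq:stationary}, and $\EE_{p}\sbr{\tau^{*}}=\int\mu\rbr{s,a}\,ds\,da=1$. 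Therefore $J(\tau^{*})=0$, so $\tau^{*}$ is a global minimizer and the minimum value of $J$ equals $0$ irrespective of $\lambda>0$.

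It then remains to prove uniqueness. Any minimizer $\tau$ has $J(\tau)=0$, hence $\ptau=\Tpgam\circ\tau$ and $\EE_{p}\sbr{\tau}=1$. Write $\nu\defeq\ptau$, a non-negative measure with total mass $\EE_{p}\sbr{\tau}$; the first condition reads $\nu=\rbr{1-\gamma}\mu_{0}\pi+\gamma\,\rbr{\Tcal\circ\nu}$, where $\rbr{\Tcal\circ\nu}\rbr{s',a'}=\int\pi\rbr{a'|s'}\Pb\rbr{s'|s,a}\nu\rbr{s,a}\,ds\,da$ is the push-forward by a genuine (mass-preserving) Markov kernel. For $\gamma\in(0,1)$, the affine map $\nu\mapsto\rbr{1-\gamma}\mu_{0}\pi+\gamma\,\rbr{\Tcal\circ\nu}$ is a $\gamma$-contraction on the Banach space of finite signed measures under total variation, so it has a unique fixed point, which must be $\mu$; thus $\tau=\mu/p=\tau^{*}$ (and the normalization is automatic here — integrating the fixed-point equation gives $\int\nu=1$). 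For $\gamma=1$ the equation becomes $\nu=\Tcal\circ\nu$; since $\Tcal$ is linear and mass-preserving, a non-negative $\nu\neq 0$ solving it, rescaled by $c\defeq\int\nu>0$, is a stationary distribution, hence equals $\mu$ by Assumption~\ref{asmp:stat_exist}, so the non-negative solution set is exactly the ray $\cbr{c\mu:c\ge 0}$. The extra constraint $\EE_{p}\sbr{\tau}=\int\nu=1$ then forces $c=1$, i.e. $\nu=\mu$ and $\tau=\tau^{*}$.

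The main obstacle is the $\gamma=1$ step: collapsing ``non-negative fixed point of $\Tcal$'' to ``scalar multiple of $\mu$'' rests entirely on the uniqueness/ergodicity packaged into Assumption~\ref{asmp:stat_exist} — transparent in the discrete ergodic case and, as the paper notes, deliberately not pursued in full generality for continuous $S\times A$. One also tacitly needs $p$ to dominate $\mu$ so that $\tau^{*}=\mu/p$ is well defined with finite $p$-expectation, which is part of the standing behavior-agnostic data assumptions; granting these, the finiteness of $\lambda$ is immediate from the fact that $\tau^{*}$ already zeroes both terms of $J$ at once.
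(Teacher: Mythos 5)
Your proposal is correct and follows essentially the same route as the paper's proof: observe that both terms of $J$ are non-negative and vanish simultaneously at $\tau^{*}=\mu/p$, then set $\nu=p\tau$ and use uniqueness of the stationary distribution (Assumption~\ref{asmp:stat_exist}) together with the normalization $\EE_{p}[\tau]=1$ to rule out the rescaled solutions $c\mu/p$ when $\gamma=1$. You are somewhat more explicit than the paper in the $\gamma\in(0,1)$ case, where you supply the total-variation contraction argument for uniqueness of the fixed point rather than simply asserting that no degenerate solutions exist, but this is a refinement of the same argument rather than a different one.
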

\vspace{-2mm}
The detailed proof for \thmref{thm:soundness} 
is given in~\appref{appendix:soundness}.
By~\thmref{thm:soundness}, we can estimate the desired 
correction ratio function $\tau^*$
by solving only one optimization with an arbitrary $\lambda>0$.

%--------------------------------------------------------------------------
\vspace{-2mm}
\subsection{Exploiting dual embedding}
\vspace{-1mm}
%--------------------------------------------------------------------------

The optimization in~\eqref{eq:regularized_est} 
involves the integrals $\Ttau$ and $\EE_p\sbr{\tau}$ 
inside nonlinear loss functions, hence appears difficult to solve.
Moreover, obtaining unbiased gradients with a naive approach
requires double sampling~\citep{Baird95}. 
Instead, we bypass both difficulties by applying a
dual embedding technique~\citep{DaiHePanBooetal16,DaiShaLiXiaHeetal17}. 
In particular, we assume the divergence $D$ is 
in the form of an $f$-divergence~\citep{nowozin2016f}
\[
\textstyle
D_\phi\rbr{\Ttau\|\ptau}\defeq \int\ptau\rbr{s, a}\phi\rbr{\frac{\Ttau\rbr{s, a}}{\ptau\rbr{s, a}}}ds\,da 
\]
where $\phi\rbr{\cdot}:\RR_+\rightarrow\RR$ is a convex,
lower-semicontinuous function with $\phi\rbr{1} = 0$.
Plugging this into $J\rbr{\tau}$ in~\eqref{eq:regularized_est}
we can easily check the convexity of the objective
\vspace{-2mm}
\begin{theorem}\label{thm:convexity}
For an $f$-divergence with valid $\phi$ defining $D_\phi$,
the objective $J\rbr{\tau}$ is convex w.r.t. $\tau$. 
\end{theorem}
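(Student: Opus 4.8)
The plan is to write $J$ as a sum of two terms, each of which is the composition of a jointly convex functional with an \emph{affine} map of $\tau$, and then invoke the two standard stability principles that affine precomposition preserves convexity and that integration of a pointwise-convex family preserves convexity. Note first that the feasible set $\cbr{\tau\ge 0}$ is convex, so it suffices to verify convexity of $J$ on it. For the penalty term, the map $\tau\mapsto\EE_p\sbr{\tau}$ is linear, hence $\tau\mapsto\EE_p\sbr{\tau}-1$ is affine, and $t\mapsto\frac{\lambda}{2}t^2$ is convex; composing a convex function with an affine map (no monotonicity needed) shows $\frac{\lambda}{2}\rbr{\EE_p\sbr{\tau}-1}^2$ is convex in $\tau$.

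The crux is the $f$-divergence term $D_\phi\rbr{\Ttau\|\ptau}$. The first step is to observe, directly from~\eqref{eq:stationary_ratio}, that $\tau\mapsto\Ttau$ is affine: it is the constant function $(1-\gamma)\mu_0(s')\pi(a'|s')$ plus the linear integral operator $\tau\mapsto\gamma\int\pi(a'|s')\Pb(s'|s,a)p(s,a)\tau(s,a)\,ds\,da$, while $\tau\mapsto\ptau=p\cdot\tau$ is linear; hence $\tau\mapsto\rbr{\Ttau,\ptau}$ is an affine map into pairs of nonnegative functions (nonnegativity because $\mu_0,\pi,\Pb,p\ge 0$ and $\tau\ge 0$). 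The second step is to recall the classical fact that, for convex lower-semicontinuous $\phi$ with $\phi(1)=0$, the perspective $(x,y)\mapsto y\,\phi(x/y)$ is jointly convex on $\RR_+\times\RR_+$, with the lower-semicontinuous extension at $y=0$ given by the recession slope $\phi_\infty'=\lim_{t\to\infty}\phi(t)/t$. Consequently, for each fixed $(s,a)$ the integrand $\ptau(s,a)\,\phi\rbr{\Ttau(s,a)/\ptau(s,a)}$ is jointly convex in the pair $\rbr{\Ttau(s,a),\ptau(s,a)}$, and integrating this pointwise-convex family over $S\times A$ preserves convexity, so $(P,Q)\mapsto\int Q\,\phi(P/Q)$ is convex. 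Composing this with the affine map $\tau\mapsto\rbr{\Ttau,\ptau}$ gives convexity of $\tau\mapsto D_\phi\rbr{\Ttau\|\ptau}$, and adding the convex penalty term finishes the argument.

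The main thing to be careful about — the only place where the argument is not entirely mechanical — is the boundary behavior of the perspective where $\ptau(s,a)=0$, i.e.\ where $p(s,a)=0$: one must invoke the lower-semicontinuous extension of $y\phi(x/y)$ through the recession constant $\phi_\infty'$, under which joint convexity is still valid, and note that on the support of $p$ (all that matters for the $p$-expectations and for the divergence integral) this degeneracy does not arise. Everything else reduces to the two stability principles already used: composition with an affine map preserves convexity, and integration of a pointwise-convex family preserves convexity.
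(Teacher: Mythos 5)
Your proof is correct, but it takes a genuinely different route from the paper's. The paper proves convexity by passing to the Fenchel--Moreau variational representation of the $f$-divergence: writing $D_\phi\rbr{\Ttau\|\ptau} = \max_{f} \ell\rbr{\tau,f}$ with $\ell\rbr{\tau,f} = \rbr{1-\gamma}\EE_{\mu_0\pi}\sbr{f} + \gamma\EE_{\Tcal_p}\sbr{\tau\rbr{x}f\rbr{x'}} - \EE_{p}\sbr{\tau\rbr{x}\phi^*\rbr{f\rbr{x}}}$, observing that $\ell\rbr{\cdot,f}$ is linear in $\tau$ for each fixed $f$, and concluding that the pointwise supremum of linear functionals is convex; the penalty term is handled as you do. You instead argue in the primal: joint convexity of the perspective $\rbr{x,y}\mapsto y\,\phi\rbr{x/y}$, precomposition with the affine map $\tau\mapsto\rbr{\Ttau,\ptau}$, and integration of a pointwise-convex family. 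Both arguments are standard and both require $\phi$ to be convex and lower-semicontinuous (the paper needs $\phi=\phi^{**}$ for the dual representation to be exact; you need lower semicontinuity for the recession-function extension at $y=0$, a boundary case the paper's proof silently avoids and which you handle explicitly). What the paper's route buys is economy and reuse: the same dual representation is exactly the object that becomes the saddle-point objective $J\rbr{\tau,u,f}$ in~\eqref{eq:saddle_est}, so the convexity proof and the algorithmic reformulation are one and the same computation. What your route buys is a self-contained primal argument that does not presuppose the conjugate representation, makes the affine structure of $\tau\mapsto\Tpgam\circ\tau$ explicit, and is more careful about the degenerate set where $p\rbr{s,a}=0$.
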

\vspace{-1mm}
The detailed proof is provided in~\appref{appendix:convexity}.
Recall that a suitable convex function can be represented as
$\phi\rbr{x} = \max_{f} x\cdot f - \phi^*\rbr{f}$,
where $\phi^*$ is the Fenchel conjugate of $\phi\rbr{\cdot}$.
In particular, we have the representation
$\frac{1}{2}x^2 = \max_{u}ux - \frac{1}{2}u^2$,
which allows us to re-express the objective as
\begin{equation}
% \textstyle
\resizebox{0.92\hsize}{!}{$
J\rbr{\tau} = \int \ptau\rbr{s', a'}\cbr{\max_{f}\sbr{\frac{\Ttau\rbr{s', a'}}{\ptau\rbr{s', a'}}f - \phi^*\rbr{f}}}ds'da' + \lambda\cbr{\max_{u}\sbr{u\rbr{\EE_p\sbr{\tau}-1}-\frac{u^2}{2}}}.
$}
\end{equation}
Applying the interchangeability principle~\cite{ShaDen14,DaiHePanBooetal16},
one can replace the inner $\max$ in the first term over scalar $f$ to 
maximize over a function $f\rbr{\cdot, \cdot}:S\times A\rightarrow\RR$
\begin{multline}
\label{eq:saddle_est}
\min_{\tau\ge0}\max_{f:S\times A\rightarrow \RR, u\in\RR}\,\,J\rbr{\tau, u, f} = \rbr{1 - \gamma}\EE_{\mu_0\pi}\sbr{f\rbr{s, a}} + \gamma\EE_{\Tcal_p}\sbr{\tau\rbr{s, a}f\rbr{s', a'}} \\[-0.5ex]
- \EE_{p}\sbr{\tau\rbr{s, a}\phi^*\rbr{f\rbr{s, a}}} + \lambda\rbr{\EE_p\sbr{u\tau\rbr{s, a}-u}-\smallfrac{u^2}{2}}.
\end{multline}
This yields the main optimization formulation, which avoids the aforementioned difficulties and is well-suited for practical optimization as discussed in~\secref{sec:prac_alg}.

\vspace{-3mm}
\paragraph{Remark (Other divergences):}
\newadd{In addition to $f$-divergence, the proposed estimator~\eqref{eq:regularized_est} is compatible with other divergences,
such as the integral probability metrics~(IPM)~\citep{Mueller97,SriFukGreSchetal09}, while retaining consistency.
Based on the definition of the IPM, these divergences directly lead to $\min$-$\max$ optimizations similar to~\eqref{eq:saddle_est} with the identity function as $\phi^*\rbr{\cdot}$ and different feasible sets for the dual functions. Specifically, maximum mean discrepancy~(MMD)~\citep{SmoGreBor06} requires $\nbr{f}_{\Hcal_k}\le 1$ where $\Hcal_k$ denotes the RKHS with kernel $k$; the Dudley metric~\citep{Dudley02} requires $\nbr{f}_{BL}\le 1$ where $\nbr{f}_{BL}\defeq \nbr{f}_\infty + \nbr{\nabla f}_2$; and Wasserstein distance~\citep{ArjChiBot17} requires $\nbr{\nabla f}_2\le 1$.
% $\sup_{x, y\in \Omega}\cbr{\frac{\abr{f\rbr{x} - f\rbr{y}}}{\rho\rbr{x, y}}}$
These additional requirements on the dual function might incur some extra difficulty in practice. For example, with Wasserstein distance and the Dudley metric, we might need to include an extra gradient penalty~\citep{GulAhmArjDumetal17}, which requires additional computation to take the gradient through a gradient. Meanwhile, the consistency of the surrogate loss under regularization is not clear. For MMD, we can obtain a closed-form solution for the dual function, which saves the cost of the inner optimization~\cite{GreBorRasSchetal12}, but with the tradeoff of requiring \emph{two independent} samples in each outer optimization update. Moreover, MMD relies on the condition that the dual function lies in some RKHS, which introduces additional kernel parameters to be tuned and in practice may not be sufficiently flexible compared to neural networks.
}

%--------------------------------------------------------------------------
\vspace{-4mm}
\subsection{A Practical Algorithm}\label{sec:prac_alg}
\vspace{-1mm}
%--------------------------------------------------------------------------

We have derived a consistent stationary distribution correction estimator
in the form of a $\min$-$\max$ saddle point optimization~\eqref{eq:saddle_est}.
Here, we present a practical instantiation of~\estabb with 
a concrete objective and parametrization. 

We choose the $\chi^2$-divergence, which is an $f$-divergence with 
$\phi\rbr{x} = \rbr{x - 1}^2$ and
$\phi^*\rbr{y} = y + \frac{y^2}{4}$.
The objective becomes
\begin{multline}\label{eq:chi_saddle_est}
\textstyle
J_{\chi^2}\rbr{\tau, u, f} = \rbr{1 - \gamma}\EE_{\mu_0\pi}\sbr{f\rbr{s, a}} + \gamma\EE_{\Tcal_p}\sbr{\tau\rbr{s, a}f\rbr{s', a'}} \\
- \EE_{p}\sbr{\tau\rbr{s, a}\rbr{f\rbr{s, a} + \smallfrac{1}{4}f^2\rbr{s, a}}} + \lambda\rbr{\EE_p\sbr{u\tau\rbr{s, a}-u}-\smallfrac{u^2}{2}}.
\end{multline}
There two major reasons for adopting $\chi^2$-divergence: 
\begin{itemize}[leftmargin=*, nosep, topsep=0pt]
	\item[{\bf i)}]
In the behavior-agnostic OPE problem, we mainly use the 
ratio correction function for estimating
$\widehat\EE_{p}\sbr{\hat\tau\rbr{s, a}R\rbr{s, a}}$,
which is an expectation.
Recall that the error between the estimate
and ground-truth can then be bounded by total variation,
which is a lower bound of $\chi^2$-divergence. 
	\item[{\bf ii)}] 
For the alternative divergences, 
% the conjugate function of total variation is unbounded, 
the conjugate of the $KL$-divergence involves $\exp\rbr{\cdot}$, which may lead to instability in optimization; while the IPM variants introduce extra constraints on dual function, which may be difficult to be optimized. The conjugate function of $\chi^2$-divergence enjoys suitable numerical properties and provides squared regularization. \newadd{We have provided an empirical ablation study that investigates the alternative divergences in~Section \ref{subsec:ablation}.}
% \Roy{[Not sure, Bo may help on this] The divergence can also be generalized to integral probability measure (IPM). However, Wasserstein-1 needs the assumption of 1-Lipschitz; MMD divergence may avoid the adversarial training, but the kernel function may make the method computationally intensive~\cite{MMD_RL}.}
\end{itemize}
To parameterize the correction ratio $\tau$ and dual function $f$
we use neural networks,
$\tau\rbr{s, a} = \mathtt{nn}_{w_\tau}\rbr{s, a}$
and $f\rbr{s, a} = \mathtt{nn}_{w_f}\rbr{s, a}$,
where $w_\tau$ and $w_f$ denotes the parameters of $\tau$ and $f$ respectively.
Since the optimization requires $\tau$ to be non-negative,
we add an extra positive neuron, such as $\exp\rbr{\cdot}$,
$\log\rbr{1 + \exp\rbr{\cdot}}$ or $\rbr{\cdot}^2$ at the final layer of 
$\mathtt{nn}_{w_\tau}\rbr{s, a}$.
We empirically compare the different positive neurons 
in~\secref{subsec:ablation}. 

For these representations,
and unbiased gradient estimator 
$\nabla_{\rbr{w_\tau, u, w_f}} J\rbr{\tau, u, f}$ 
can be obtained straightforwardly,
as shown in~\appref{appendix:alg_details}.
This allows us to apply stochastic gradient descent to solve
the saddle-point problem~\eqref{eq:chi_saddle_est}
in a scalable manner,
as illustrated in~\Algref{alg:gendice}.

%!TEX root = dual_ratio.tex

%%%%%%%%%%%%%%%%%%%%%%%%%%%%%%%%%%%%%%%%%%%%%%%%%%%%%%%%%%%%%%%%%%%%%%%%%%%%
\vspace{-2mm}
\section{Theoretical Analysis}\label{sec:theoretical_analysis}
\vspace{-2mm}
%%%%%%%%%%%%%%%%%%%%%%%%%%%%%%%%%%%%%%%%%%%%%%%%%%%%%%%%%%%%%%%%%%%%%%%%%%%%

We provide a theoretical analysis for the proposed~\estabb algorithm,
following a similar learning setting and assumptions
to~\citep{nachum2019dualdice}. 
\vspace{-2mm}
\begin{assumption} %[Reference distribution property] % do we really need name?
\label{asmp:ref_dist}
% For any $\rbr{s, a}\in S\times A$, $\mu\rbr{s, a}>0$ implies $p\rbr{s, a}>0$. Furthermore, 
The target stationary correction are bounded, $\nbr{\tau^*}_\infty\le C<\infty$.
%\lihong{@Bo: the latter implies the former?  Also, for the informal statements below, we don't need the assumption here anyway?}\Bo{We need assumptions, I have the full version in~\appref{appendix:full_error}. I am thinking to put these assumptions somewhere in sec. 3.}
\end{assumption}
\vspace{-2mm}
%
% Basically, the assumption requires $\Tcal$ to be ergodic, \ie, aperiodic, Harris recurrent, and irreducible. For the details, please refer to~\citet{MeyTwe12}.
The main result is summarized in the following theorem.  A formal statement, together with the proof, is given in~\appref{appendix:proofs}.
\vspace{-3mm}
\begin{theorem}[Informal]
\label{thm:total_error}
Under mild conditions, with learnable $\Fcal$ and $\Hcal$,
the error in the objective between the~\estname estimate, ${\hat\tau}$,
to the solution $\tau^*\rbr{s, a} = \frac{u\rbr{s, a}}{p\rbr{s, a}}$
is bounded by
\vspace{-1mm}
$$
\textstyle
\EE\sbr{J\rbr{\hat\tau} - J\rbr{\tau^*}} =\widetilde\Ocal\rbr{\eps_{approx}\rbr{\Fcal, \Hcal} + {\frac{1}{\sqrt{N}}} + \eps_{opt}},
$$
where $\EE\sbr{\cdot}$ is w.r.t. the randomness in $\Dcal$ and in the optimization algorithms, $\eps_{opt}$ is the optimization error, and $\eps_{approx}\rbr{\Fcal, \Hcal}$ is the approximation induced by $\rbr{\Fcal, \Hcal}$ for parametrization of $\rbr{\tau, f}$.
\end{theorem}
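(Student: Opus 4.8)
The plan is to decompose the excess objective $\EE[J(\hat\tau) - J(\tau^*)]$ into three sources of error: the statistical error from replacing population expectations by their empirical counterparts over the $N$ samples in $\Dcal$, the approximation error from restricting $\tau$ and $f$ to the parametric classes $\Hcal$ and $\Fcal$, and the residual optimization error $\eps_{opt}$ from running stochastic gradient descent-ascent for finitely many steps. Concretely, let $\widehat{J}$ denote the empirical saddle-point objective from~\eqref{eq:saddle_est} (with its inner $\max$ over $f\in\Fcal$), let $\tau_{\Hcal}^* \defeq \argmin_{\tau\in\Hcal}\max_{f\in\Fcal} J(\tau,u,f)$ be the best in-class solution, and insert $\tau_{\Hcal}^*$ as a pivot:
$$
J(\hat\tau) - J(\tau^*) = \underbrace{\big(J(\hat\tau) - \widehat{J}(\hat\tau)\big)}_{\text{(I)}} + \underbrace{\big(\widehat{J}(\hat\tau) - \widehat{J}(\tau_{\Hcal}^*)\big)}_{\text{(II)}} + \underbrace{\big(\widehat{J}(\tau_{\Hcal}^*) - J(\tau_{\Hcal}^*)\big)}_{\text{(III)}} + \underbrace{\big(J(\tau_{\Hcal}^*) - J(\tau^*)\big)}_{\text{(IV)}}.
$$
Term (IV) is exactly the approximation error $\eps_{approx}(\Fcal,\Hcal)$, absorbed into the bound by definition. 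Term (II) is controlled by the optimization guarantee: since $\hat\tau$ is the output of the SGD algorithm on $\widehat J$, standard stochastic saddle-point convergence (using convexity of $J$ in $\tau$ from Theorem~\ref{thm:convexity} and concavity in $(u,f)$) gives $\EE[\text{(II)}]\le\eps_{opt}$, with the usual $\Ocal(1/\sqrt{T})$ rate in the number of iterations $T$ hidden inside $\eps_{opt}$.

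The heart of the argument is bounding terms (I) and (III), each of which is a uniform deviation $\sup_{\tau\in\Hcal}|J(\tau)-\widehat J(\tau)|$ between the population and empirical versions of the objective. I would proceed as follows: first, note that the objective~\eqref{eq:saddle_est} is, after the inner maximization, a composition of empirical averages of the functions $(1-\gamma)f(s,a)$, $\gamma\,\tau(s,a)f(s',a')$, $\tau(s,a)\phi^*(f(s,a))$, and the penalty terms $u\tau(s,a)-u-u^2/2$; under Assumption~\ref{asmp:ref_dist} ($\|\tau^*\|_\infty\le C$), boundedness of $\Hcal$ and $\Fcal$, and regularity of $\phi^*$ (for $\chi^2$, $\phi^*(y)=y+y^2/4$ is locally Lipschitz), each integrand is uniformly bounded and Lipschitz in its arguments. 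Then a standard symmetrization and Rademacher complexity argument — or, more simply, a uniform law of large numbers via covering numbers of $\Hcal\times\Fcal$ — yields $\EE\big[\sup_{\tau\in\Hcal}|J(\tau)-\widehat J(\tau)|\big] = \widetilde\Ocal(1/\sqrt{N})$, where the $\widetilde\Ocal$ hides logarithmic factors and the complexity radii of the function classes. One subtlety: the inner $\max$ over $f$ must itself be handled uniformly, so the deviation bound should be stated for the two-argument object $\sup_{\tau,f}|\cdot|$ before taking the max; this is routine since sup of an empirical process over a product class is still controlled by the joint complexity.

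The main obstacle I anticipate is making the optimization term (II) genuinely rigorous: the objective is convex-concave in $(\tau)$ versus $(u,f)$ only at the level of the \emph{functional} arguments, but once we parametrize $\tau=\mathtt{nn}_{w_\tau}$ and $f=\mathtt{nn}_{w_f}$ by neural networks the problem is no longer convex-concave in the weights, so one cannot directly invoke textbook SGDA rates. The clean way around this — which I expect the formal proof in the appendix to take, following~\citep{nachum2019dualdice} — is to state the optimization error $\eps_{opt}$ as an \emph{assumption} (i.e., assume the algorithm returns an $\eps_{opt}$-approximate saddle point of the empirical problem in function space), rather than deriving it; this is why the theorem is billed as holding "under mild conditions." A secondary technical point is that the penalty variable $u$ and the ratio $\tau$ are a priori only bounded in expectation, not pointwise, so to get clean concentration one restricts to a bounded feasible set $\{|u|\le U\}$ and argues (using Theorem~\ref{thm:soundness}, which pins the optimum to a proper density ratio) that the true solution lies in the interior, so the restriction is harmless. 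With those pieces in place, summing (I)–(IV) gives the claimed $\widetilde\Ocal(\eps_{approx}(\Fcal,\Hcal) + 1/\sqrt{N} + \eps_{opt})$ bound.
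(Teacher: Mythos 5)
Your proposal is correct and follows essentially the same route as the paper's proof in Appendix C: the same three-way split into approximation, estimation, and optimization error; the same uniform-deviation treatment of the statistical term (the paper instantiates your "covering number" step via Pollard's tail inequality plus Haussler's pseudo-dimension bound to get $\widetilde\Ocal(\sqrt{\log N/N})$); and the same honest treatment of $\eps_{opt}$ — the paper likewise only derives an $\Ocal(1/\sqrt{T})$ primal--dual-gap rate for tabular/linear parametrizations where $\Jhat$ is genuinely convex--concave, and leaves the neural-network case open, exactly as you anticipated. The one place your bookkeeping is coarser than the paper's: your terms (I) and (III) compare $J$ (inner max over all bounded $f$) against $\widehat J$ (inner max over $f\in\Fcal$ only), so they are \emph{not} pure $\widetilde\Ocal(1/\sqrt N)$ deviations — each carries an additive dual-approximation deficit of order $\eps_{approx}(\Fcal)$. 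The paper handles this by inserting the intermediate objective $L(\tau)\defeq\max_{f\in\Fcal,u}J(\tau,u,f)$ and splitting $J-\Lhat$ into $(J-L)+(L-\Lhat)$, bounding the first piece by a Lipschitz constant times $\eps_{approx}(\Fcal)$ and only the second by concentration. Since your final bound already carries $\eps_{approx}(\Fcal,\Hcal)$, this is a one-line fix rather than a gap. Your choice of the population in-class minimizer $\tau^*_\Hcal$ (rather than the empirical minimizer $\tauhs$, as in the paper) as the pivot for the optimization term is also fine — the resulting term is upper-bounded by the paper's $\hat\eps_{opt}$, so nothing is lost.
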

\vspace{-2mm}
The theorem shows that the suboptimality of \estname's solution,
measured in terms of the objective function value,
can be decomposed into three terms:
(1) the approximation error $\eps_{approx}$,
which is controlled by the representation flexibility of function classes;
(2) the estimation error due to sample randomness,
which decays at the order of $1/\sqrt{N}$;
and
(3) the optimization error,
which arises from the suboptimality of the solution found by the
optimization algorithm.
As discussed in~\appref{appendix:proofs}, in special cases,
this suboptimality can be bounded below by a divergence between $\hat\tau$
and $\tau^*$,
and therefore directly bounds the error in the estimated policy value.

There is also a tradeoff between these three error terms.
With more flexible function classes (\eg, neural networks)
for $\Fcal$ and $\Hcal$, the approximation error $\eps_{approx}$
becomes smaller.
However, it may increase the estimation error 
(through the constant in front of $1/\sqrt{N}$)
and the optimization error (by solving a harder optimization problem).
On the other hand, if $\Fcal$ and $\Hcal$ are linearly parameterized,
estimation and optimization errors tend to be smaller and can often be
upper-bounded explicitly in~\appref{appendix:opt_error}.
However, the corresponding approximation error will be larger.

%!TEX root = dual_ratio.tex

%%%%%%%%%%%%%%%%%%%%%%%%%%%%%%%%%%%%%%%%%%%%%%%%%%%%%%%%%%%%%%%%%%%%%%%%%%%%
\vspace{-3mm}
\section{Related Work}\label{sec:related_work}
\vspace{-2mm}
%%%%%%%%%%%%%%%%%%%%%%%%%%%%%%%%%%%%%%%%%%%%%%%%%%%%%%%%%%%%%%%%%%%%%%%%%%%%
%\Bo{I went through my pass. Ruiyi please move the extra part into appendix.}
%\Lihong{I can go thru the OPE part.  We will also need MCMC and more general Markov chain stationary distribution related work?}

%\input{relatedwork.tex}

\paragraph{Off-policy Policy Evaluation}
Off-policy policy evaluation with %IS 
importance sampling (IS)
has has been explored in the contextual bandits \citep{strehl2010learning, dudik2011doubly,wang2017optimal}, and episodic RL settings~\citep{murphy2001marginal,precup01off}, achieving many empirical successes~\citep[e.g.,][]{strehl2010learning,dudik2011doubly,bottou13counterfactual}.
Unfortunately, IS-based methods suffer from exponential variance
in long-horizon problems,
known as the ``curse of horizon''~\citep{liu2018breaking}.
A few variance-reduction techniques have been introduced,
but still cannot eliminate this fundamental
issue~\citep{jiang2015doubly,thomas2016data,guo2017using}.
By rewriting the accumulated reward as an expectation w.r.t.\ 
a stationary distribution,~\citet{liu2018breaking,gelada2019off} recast OPE
as estimating a correction ratio function,
which significantly alleviates variance.
However, 
% COP-TD algorithm~\citep{gelada2019off} is the first work that attempts to solve this problem via estimating the density ratio of two stationary state distributions, with trajectories only from some behavior distribution. But it can only be applied with discrete state/action spaces. 
% To extend to continuous case, \cite{liu2018breaking} developed a mini-max loss function for the stationary state distributions estimation, and derive a closed-form solution for the case of RKHS. 
these methods still require the off-policy data to be collected by a \emph{single
and known} behavior policy,
which restricts their practical applicability.
The only published algorithm in the literature, to the best of our knowledge, that solves agnostic-behavior off-policy evaluation is DualDICE~\citep{nachum2019dualdice}.
However, DualDICE was developed for discounted problems
and its results become unstable when the discount factor approaches $1$
(see below).
By contrast, \estname can cope with the more challenging problem of
undiscounted reward estimation in the general behavior-agnostic setting. 
 
Note that standard model-based methods~\citep{SutBar98},
which estimate the transition and reward models directly
then calculate the expected reward based on the learned model,
are also applicable to the behavior-agnostic setting considered here.
Unfortunately, model-based methods typically rely heavily
on modeling assumptions about rewards and transition dynamics.
In practice, these assumptions do not always hold,
and the evaluation results can become unreliable.
%usually fail to capture the complex environments and reward, and may induce extra bias from the simulation. 

% For more related work on MCMC, density ratio estimation and PageRank, please refer to~\appref{appendix:more_related_work}.
\vspace{-3mm}
\paragraph{Markov Chain Monte Carlo}
Classical MCMC~\citep{brooks2011handbook, gelman2013bayesian} aims at sampling from $\mu^\pi$ by iteratively simulting from the transition operator. 
% based on a given unnormalized form in physics, statistics and machine learning. 
% Similar methods include variational Bayes~\citep{hoffman2013stochastic, kingma2013auto} and expectation propagation~\citep{minka2001expectation}. 
% Particle-based methods avoid the parametric assumptions on the $\mu^\pi$, such as SVGD, but usually render heavy computational cost to update many particles.
It requires continuous interaction with the transition operator and heavy computational cost to update many particles. Amortized SVGD~\citep{wang2016learning} and Adversarial MCMC~\citep{song2017nice, li2019adversarial} alleviate this issue via combining with neural network, but they still 
% aim at approximating a distribution (learning a sampler) 
interact with the transition operator directly, \ie, in an on-policy setting. The major difference of our~\estname is the learning setting: we only access the  off-policy dataset, and cannot sample from the transition operator. The proposed \estname leverages stationary density ratio estimation for approximating the stationary quantities, 
%Our proposed \estname is derived by exploiting the Fenchel duality of the $f$-divergences, but the $f$-divergence is built upon the property of steady state of a Markov chain. Furthermore, we only require the data distribution, \ie, off-policy dataset, and do not need to sample from our model. Thus \estname uses a very general variational minimization technique to estimate the stationary ratio of a Markov chain, instead of approximate the stationary distribution, 
which distinct it from classical methods.
\vspace{-3mm}
\paragraph{Density Ratio Estimation}  
%\Bo{The references listed here are not quite related. I would use this section for dual embedding and GAN. It seems we do not have enough space. Put this into the Appendix.}
%\Roy{TODO: adding more calssical methods as DualDICE.}
Density ratio estimation is a fundamental tool in machine learning and much related work exists. Classical density ratio estimation includes moment matching \citep{gretton2009covariate}, probabilistic classification~\citep{bickel2007discriminative}, and ratio matching~\citep{NguWaiJor08,SugNakKasBueetal08,kanamori2009least}. 
% Recent progress shows the power of adversarial training on realistic image generation~\citep{goodfellow2014generative}, text generation~\citep{yu2017seqgan} and even imitation learning~\citep{ho2016generative}. 
These classical methods focus on estimating the ratio between two distributions with samples from both of them, while \estname estimates the density ratio to a stationary distribution of a transition operator, from which even one sample is difficult to obtain.
\vspace{-3mm}
\paragraph{PageRank} \citet{yao13reinforcement} developed a reverse-time RL framework for PageRank via solving a reverse Bellman equation, which is
less sensitive to graph topology and shows faster adaptation with graph change. However, \citet{yao13reinforcement} still considers the online manner, which is different with our OPR setting.

\vspace{-3mm}
\section{Experiments}\label{sec:experiments}
\vspace{-3mm}
%%%%%%%%%%%%%%%%%%%%%%%%%%%%%%%%%%%%%%%%%%%%%%%%%%%%%%%%%%%%%%%%%%%%%%%%%%%%

%\input{experiment.tex}

%\Bo{@Ruiyi, please also complete the extra experiment part in Appendix and referred in main text somewhere. }
In this section, we evaluate \estname on OPE and OPR problems.
For OPE, we use one or multiple behavior policies to collect a fixed number
of trajectories at some fixed trajectory length.
This data is used to recover a correction ratio function for a
target policy $\pi$ that is then used to estimate the average reward
in two different settings:
\RN{1}) average reward; and \RN{2}) discounted reward.
In both settings, we compare with a model-based approach and
step-wise weighted IS~\citep{PreSutSin00}. \newadd{We also compare to 
% the estimator of 
\citet{liu2018breaking} (referred to as ``IPS'' here) in the Taxi domain with a learned behavior policy\footnote{We used the released implementation of IPS~\citep{liu2018breaking} from {\url{https://github.com/zt95/infinite-horizon-off-policy-estimation}}.}.} We specifically compare to DualDICE~\citep{nachum2019dualdice} in the discounted reward setting, which is a direct and current state-of-the-art baseline. For OPR, the main comparison is with the model-based method,
%we focus on comparing with the model-based method,
where the transition operator is empirically estimated
and stationary distribution recovered via an exact solver.
We validate \estname in both tabular and continuous cases,
and perform an ablation study to further demonstrate its effectiveness.
All results are based on 20 random seeds,
with mean and standard deviation plotted. 
Our code is publicly available at \url{https://github.com/zhangry868/GenDICE}.

%---------------------------------------------------------------------------
\vspace{-2mm}
\subsection{Tabular Case}\label{exp:tabular}
\vspace{-2mm}
%---------------------------------------------------------------------------
\begin{wrapfigure}{R}{0.39\linewidth}
	\vspace{-5mm}
	\centering
	\includegraphics[width=\linewidth]{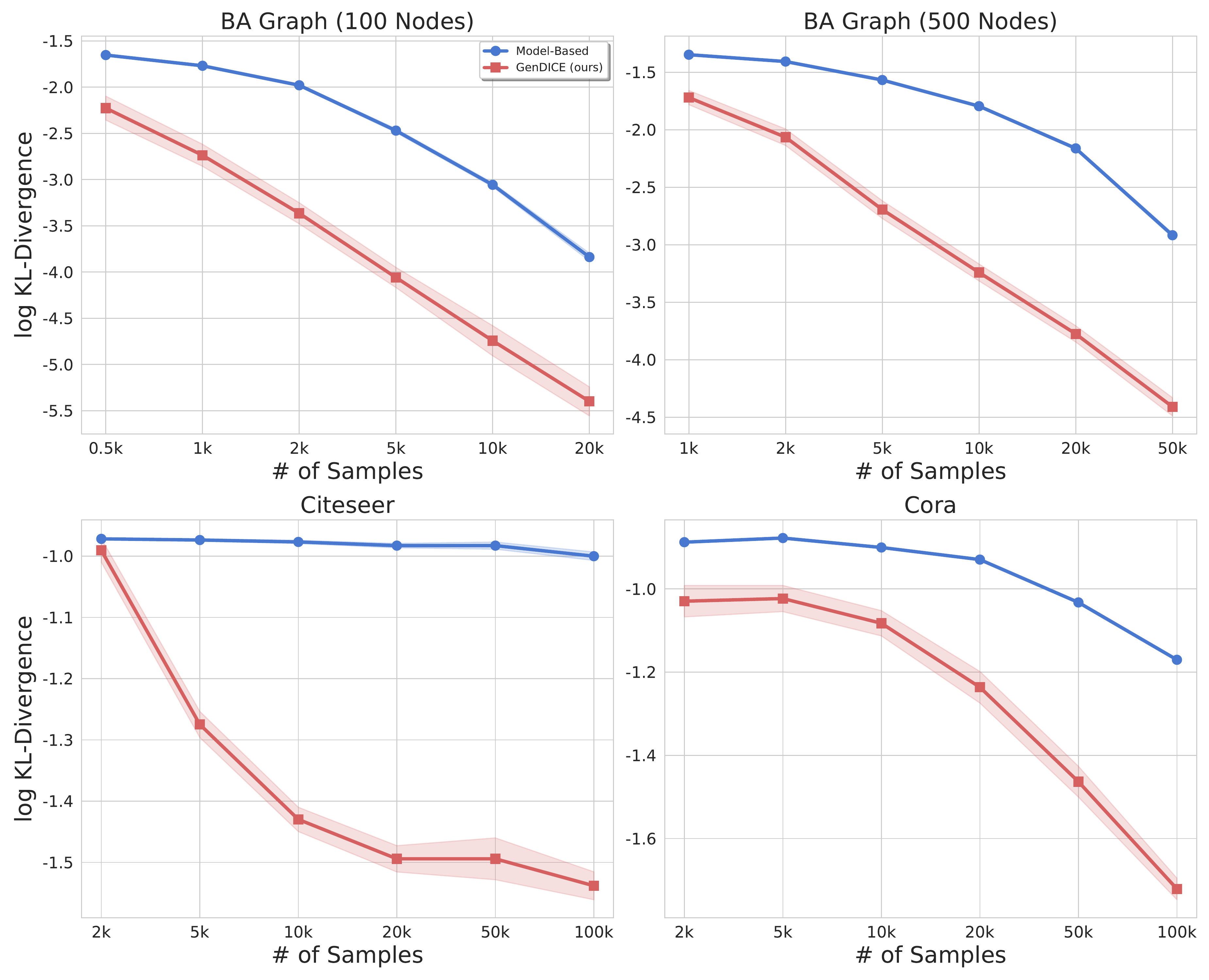}\\
	\vspace{-4mm}
	\caption{Stationary Distribution Estimation on BA and real-world graphs. Each plot shows the $\log$ $KL$-divergence of \estname and model-based method towards the number of samples.}
	\label{fig:offpolicy-graph}
	\vspace{-3mm}
\end{wrapfigure}
\paragraph{Offline PageRank on Graphs}
One direct application of \estname is off-line PageRank (OPR).
% \vspace{-3mm}
% \paragraph{Barabasi-Albert random graphs}
We test \estname on a Barabasi-Albert (BA) graph (synthetic), and two real-world graphs, Cora and Citeseer.
%, which is a random scale-free synthetic network. % using a preferential attachment mechanism (). 
%We have 100 and 500 nodes, with the minimum degree as 4. 
Details of the graphs are given in Appendix \ref{appendix:exp_settings}. 
We use the $\log$ $KL$-divergence between estimated stationary distribution and the ground truth as the evaluation metric,  with the ground truth computed by an exact solver based on the exact transition operator of the graphs.
%
% \vspace{-3mm}
% \paragraph{Real-world graphs}
%
% 
We compared \estname with model-based methods in terms of the sample efficiency. From the results in \figref{fig:offpolicy-graph}, \estname outperforms the model-based method when limited data is %off-policy samples are 
given.
Even with $20k$ samples for a BA graph with $100$ nodes, where a transition matrix has $10k$ entries, \estname still shows better performance in the offline setting. This is reasonable since \estname directly estimates the stationary distribution vector or ratio, while the model-based method needs to learn an entire transition matrix that has many more parameters.
%, then computes the stationary distribution via an explicit solver. %solving the estimated transition matrix. 

\vspace{-3mm}
\paragraph{Off-Policy Evaluation with Taxi}
We use a similar taxi domain as in \citet{liu2018breaking},
where a grid size of $5 \times 5$ yields $2000$ states in total ($25\times16\times5$, corresponding to $25$ taxi locations, 16 passenger appearance status and $5$ taxi status).
We set the target policy to a final policy $\pi$ after running tabular Q-learning for $1000$ iterations, and set another policy $\pi_+$ after $950$ iterations as the base policy. The behavior policy is a mixture controlled by $\alpha$ as 
$\pi_b = (1-\alpha)\pi + \alpha \pi_+$.
% \ie, the larger $\alpha$ is, the behavior policy is more close to the target policy. 
For the model-based method,
we use a tabular representation for the reward and transition functions, whose entries are estimated from behavior data.
%we use a matrix as the approximation of the reward function and another matrix as the approximation of the environment dynamics.
For IS and IPS, we fit a policy via behavior cloning to estimate the policy ratio. 
%with the collected trajectories of unknown behavior policies, and use it to estimate the policy ratio, \ie, $\pi/\pi_b$. 
In this specific setting, our methods achieve better results compared to IS, IPS and the model-based method.  Interestingly, with longer horizons, IS cannot improve as much as other methods even with more data, while \estname consistently improve and achieves much better results than the baselines. DualDICE only works with $\gamma<1$.  %and as we vary $\gamma$. 
\estname is more stable than DualDICE when $\gamma$ becomes larger (close to $1$), while still showing competitive performance for smaller discount factors $\gamma$.

\begin{figure}[ht] \centering
	\vspace{-3mm}
	\includegraphics[width=\linewidth]{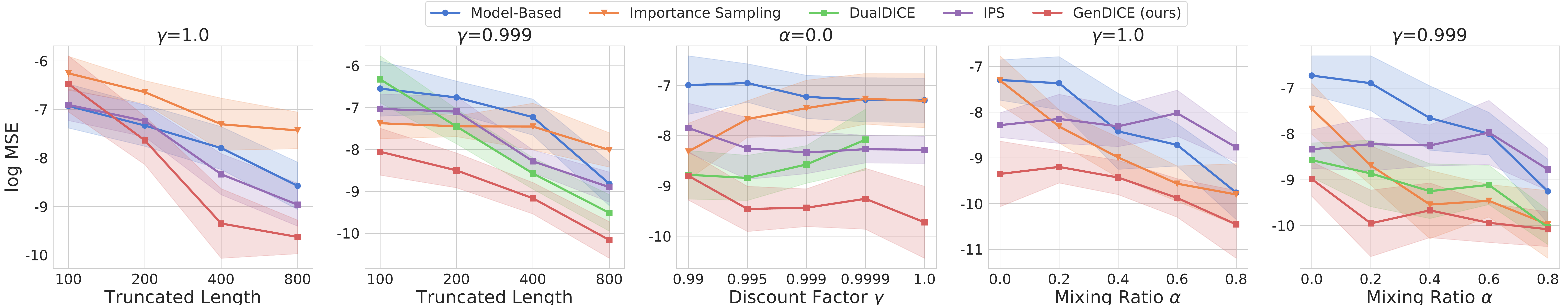}
	\vspace{-6mm}
	\caption{Results on Taxi Domain. The plots show log MSE of the tabular estimator across different trajectory lengths, different discount factors and different behavior policies ($x$-axis).}
	\label{fig:offpolicy-taxi}
	% \vspace{1mm}
\end{figure}
%---------------------------------------------------------------------------
% \vspace{1mm}
\subsection{Continuous Case}
\vspace{-1mm}
%---------------------------------------------------------------------------
% \Bo{There is no need to cite DualDICE everywhere.}
We further test our method for OPE on three control tasks: a discrete-control task Cartpole and two continuous-control
tasks Reacher and HalfCheetah. In these tasks, observations (or states) are continuous, thus we use neural network function
approximators and stochastic optimization. Since DualDICE~\citep{nachum2019dualdice} has shown the state-of-the-art performance on discounted OPE, we mainly compare with it in the discounted reward case. We also compare to IS with a learned policy via behavior cloning and a neural model-based method, similar to the tabular case, but with neural network as the function approximator. All neural networks are feed-forward with two hidden layers of dimension $64$ and $\tanh$ activations. More details can be found in Appendix \ref{appendix:exp_settings}.

%
%\paragraph{OPE for Continuous Control}
Due to limited space, we put the discrete control results in
 Appendix \ref{appendix:exp} 
and focus on the more challenging continuous control tasks.
Here, the good performance of IS and model-based methods in
Section \ref{exp:tabular} quickly deteriorates as the environment
becomes complex, \ie, with a continuous action space. 
Note that \estname is able to maintain good performance in this scenario,
even when using function approximation and stochastic optimization.
This is reasonable because of the difficulty of fitting to the coupled policy-environment dynamics with a continuous action space.
Here we also \textit{empirically} validate \estname with off-policy data collected by multiple policies. 

As illustrated in Figure \ref{fig:offpolicy-cartpole}, all methods perform better with longer trajectory length or more trajectories. When $\alpha$ becomes larger, \ie, the behavior policies are closer to the target policy, all methods performs better, as expected.  Here, \estname demonstrates good performance both on average-reward and discounted reward cases in different settings. The right two figures in each row show the $\log$ MSE curve versus optimization steps, where \estname achieves the smallest loss.
In the discounted reward case, \estname shows significantly better and more stable performance than the strong baseline, DualDICE. Figure \ref{fig:offpolicy-half} also shows better performance of \estname than all baselines in the more challenging HalfCheetah domain.
%\footnote{We found DualDICE is sensitive to the trajectory length, and the settings are slightly different from that in \cite{nachum2019dualdice}.}.
\begin{figure}[ht] \centering
	\vspace{-3mm}
	\includegraphics[width=1\linewidth]{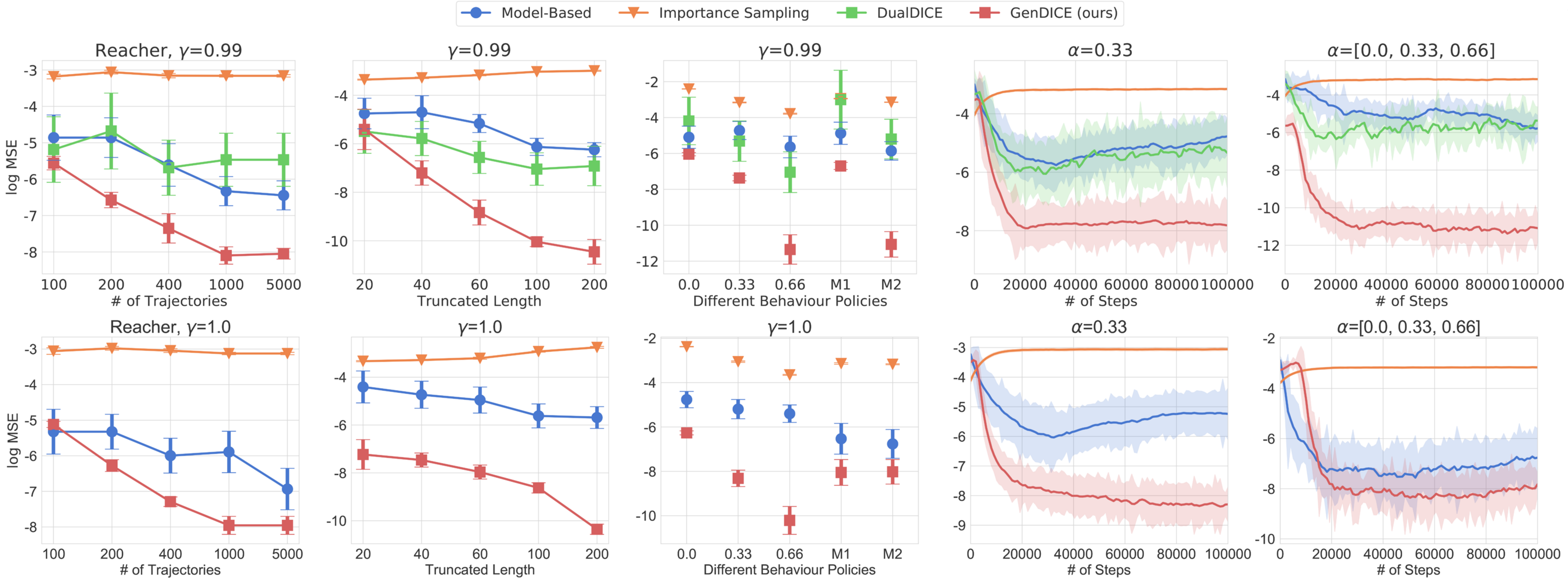}
	\vspace{-6mm}
	\caption{Results on Reacher. The left three plots in the first row show the $\log$ MSE of estimated average per-step reward over different numbers of trajectories, truncated lengths, and behavior policies (M1 and M2 mean off-policy set collected by multiple behavior policies with $\alpha=[0.0, 0.33]$ and $\alpha=[0.0, 0.33, 0.66]$). The right two figures show the loss curves towards the optimization steps. Each plot in the second row shows the average reward case.}
	\label{fig:offpolicy-cartpole}
\end{figure}
\vspace{-6mm}
\begin{figure}[ht] \centering
	\includegraphics[width=1\linewidth]{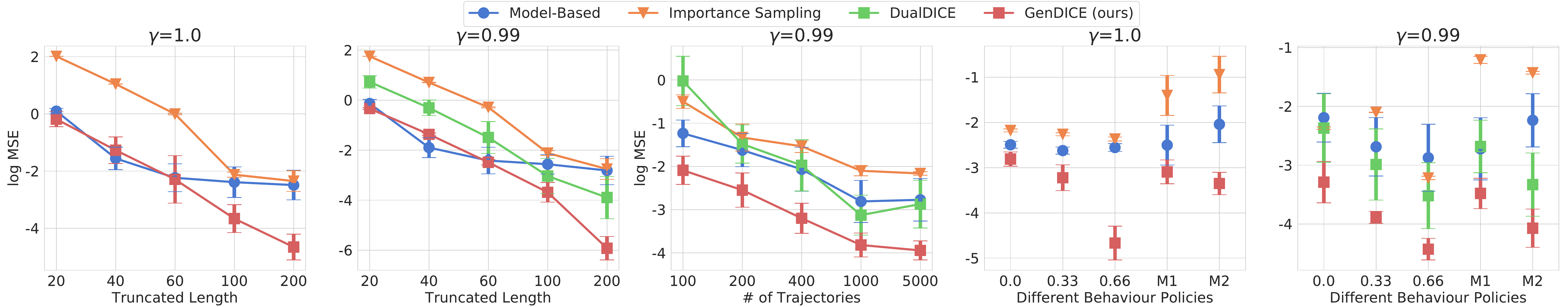}
	\vspace{-7mm}
%	\caption{Results on HalfCheetah. Plots from left to the right show the $\log$ MSE of estimated average per-step reward over different number of trajectories, truncated lengths, and behavior policies in discounted and average reward cases. }
	\caption{Results on HalfCheetah. Plots from left to the right show the $\log$ MSE of estimated average per-step reward over different truncated lengths, numbers of trajectories, and behavior policies in discounted and average reward cases. }
	\label{fig:offpolicy-half}
	\vspace{-3mm}
\end{figure}

%---------------------------------------------------------------------------
\vspace{-3mm}
\subsection{Ablation Study}\label{subsec:ablation}
\vspace{-2mm}
%---------------------------------------------------------------------------
Finally, we conduct an ablation study on \estname to study its robustness and implementation sensitivities. We investigate the effects of learning rate, activation function, discount factor, and the specifically designed ratio constraint. We further demonstrate the effect of the choice of divergences and the penalty weight. 

\vspace{-3mm}
\paragraph{Effects of the Learning Rate}
Since we are using neural network as the function
approximator, and stochastic optimization,
it is necessary to show sensitivity to the learning rate with $\{0.0001, 0.0003, 0.001, 0.003\}$, with results in Figure \ref{fig:ablationlr}. When $\alpha=0.33$, \ie, the OPE tasks are relatively easier and \estname obtains better
results at all learning rate settings.
However, when $\alpha=0.0$, \ie, the estimation becomes more difficult 
and only \estname only obtains reasonable results with the larger learning rate.
Generally, this ablation study shows that the proposed method is not
sensitive to the learning rate, and is easy to train. 
%More results are shown in Appendix \ref{appendix:exp}.
\vspace{-3mm}
\paragraph{Activation Function of Ratio Estimator}
We further investigate the effects of the activation function on the last layer, which ensure the non-negative outputs required for the ratio.
%How to ensure the non-negative outputs of this estimator is important, and we put an activation function on our last layer. 
To better understand which activation function will lead to stable trainig for the neural correction estimator, we empirically compare using \RN{1}) $(\cdot)^2$; \RN{2}) $\log(1+\exp(\cdot))$; and \RN{3}) $\exp(\cdot)$. %The first two activation functions work well; while $\exp(\cdot)$ shows poor performance under some settings. 
In practice, we use the $(\cdot)^2$ since it achieves low variance and better performance in most cases, as shown in Figure \ref{fig:ablationlr}.
\vspace{-3mm}
\paragraph{Effects of Discount Factors} 
We vary $\gamma\in\{0.95, 0.99, 0.995, 0.999, 1.0\}$ to probe the sensitivity of \estname. Specifically, we compare to DualDICE, and find that \estname is stable, while DualDICE becomes unstable when the $\gamma$ becomes large,
as shown in Figure \ref{fig:ablationdis}. 
\estname is also more general than DualDICE, as it can be applied to both the average and discounted reward cases.

\vspace{-3mm}
\paragraph{Effects of Ratio Constraint}
In Section \ref{sec:dual_est}, we highlighted the importance of 
the ratio constraint.
Here we investigate the trivial solution issue without the constraint.
The results in Figure~\ref{fig:ablationdis} demonstrate the necessity of
adding the constraint penalty,
since a trivial solution prevents an accurate corrector from being
recovered (green line in left two figures). 

\begin{figure}[ht] \centering
	\vspace{-1mm}
	\begin{tabular}{cc}
		\hspace{-3mm}
		\includegraphics[width=0.41\linewidth]{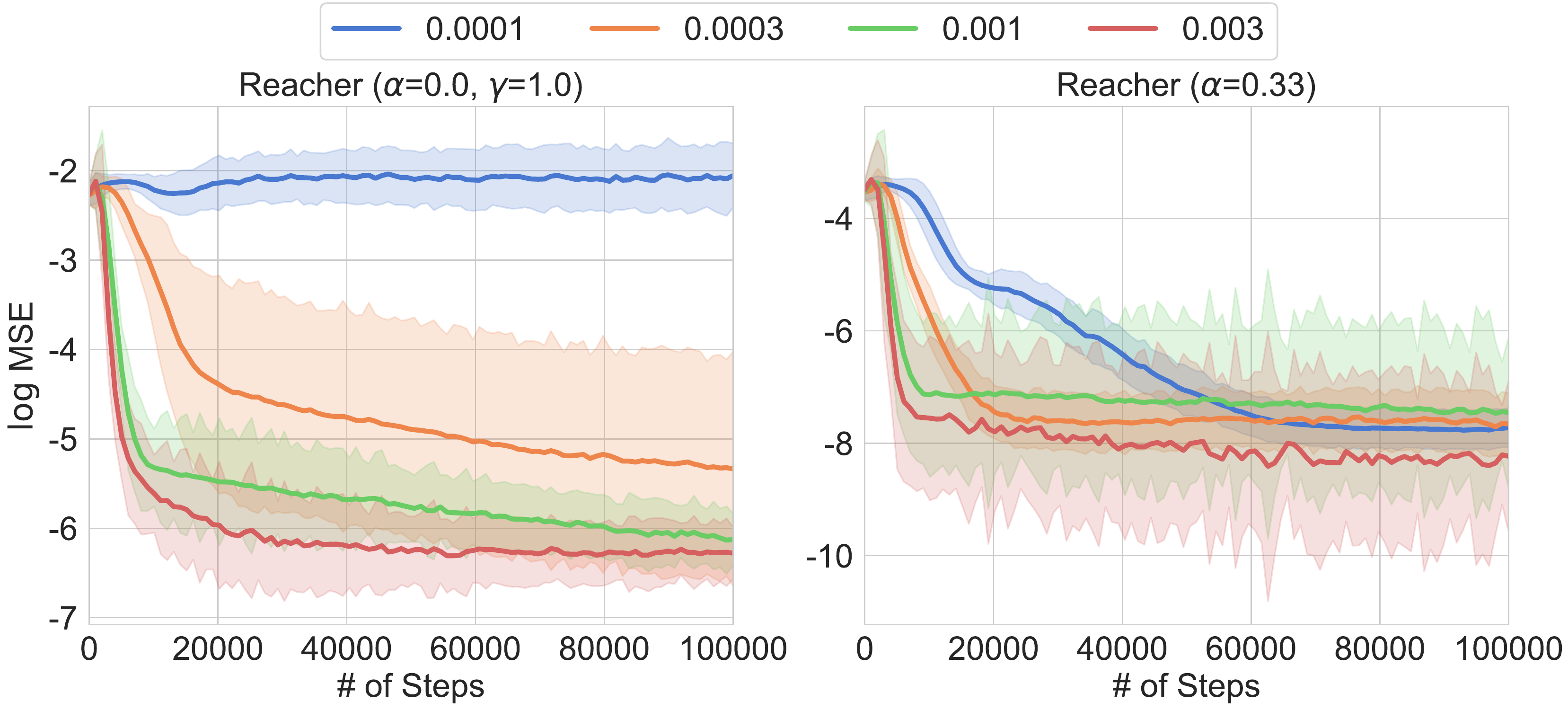}  &
		\hspace{-6mm}
		\includegraphics[width=0.6\linewidth]{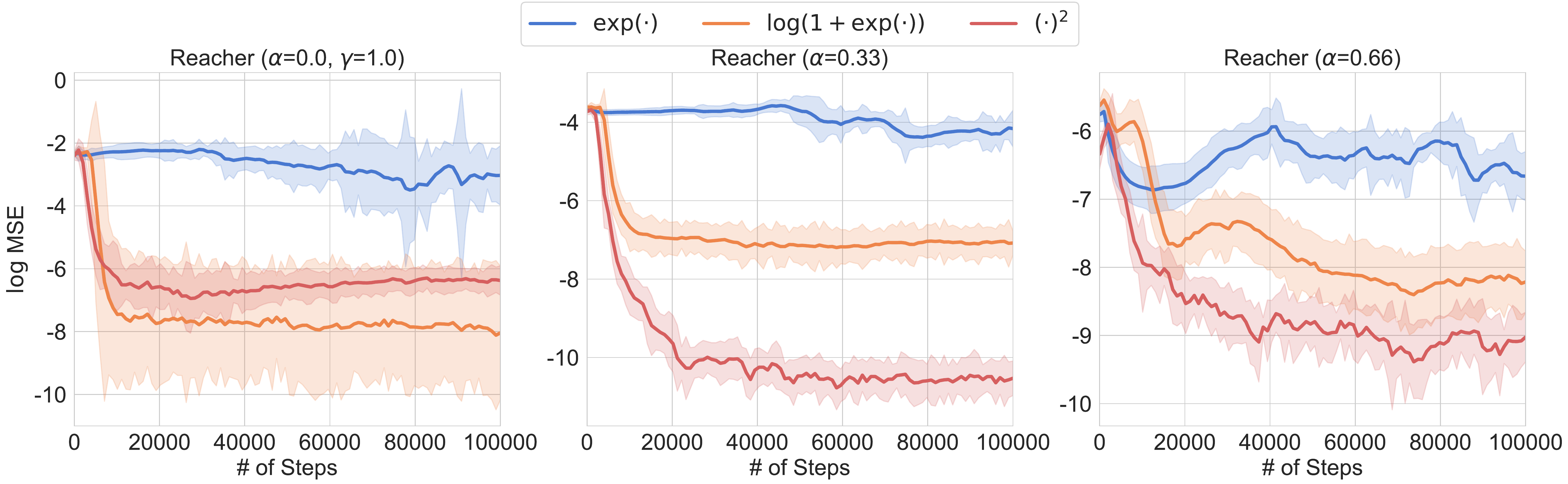}
		\\
	\end{tabular} 
	\vspace{-4mm}
	\caption{{Results of ablation study with different learning rates and activation functions. The plots show the log MSE of estimated average per-step reward over training and different behavior policies.}}
	\label{fig:ablationlr}
\end{figure}
\begin{figure}[ht] \centering
	\begin{tabular}{cc}
		\hspace{-3mm}
		\includegraphics[width=0.4\linewidth]{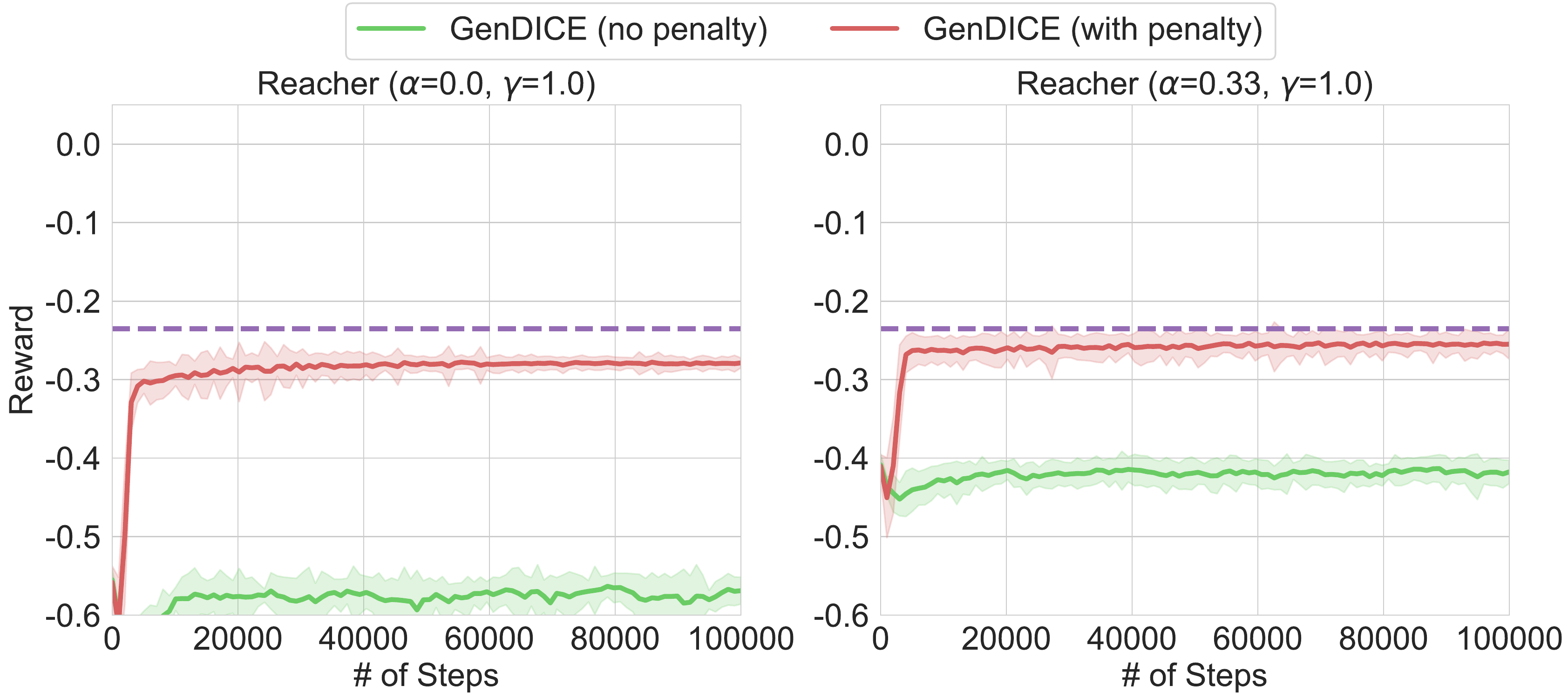}  &
		\hspace{-5mm}
		\includegraphics[width=0.6\linewidth]{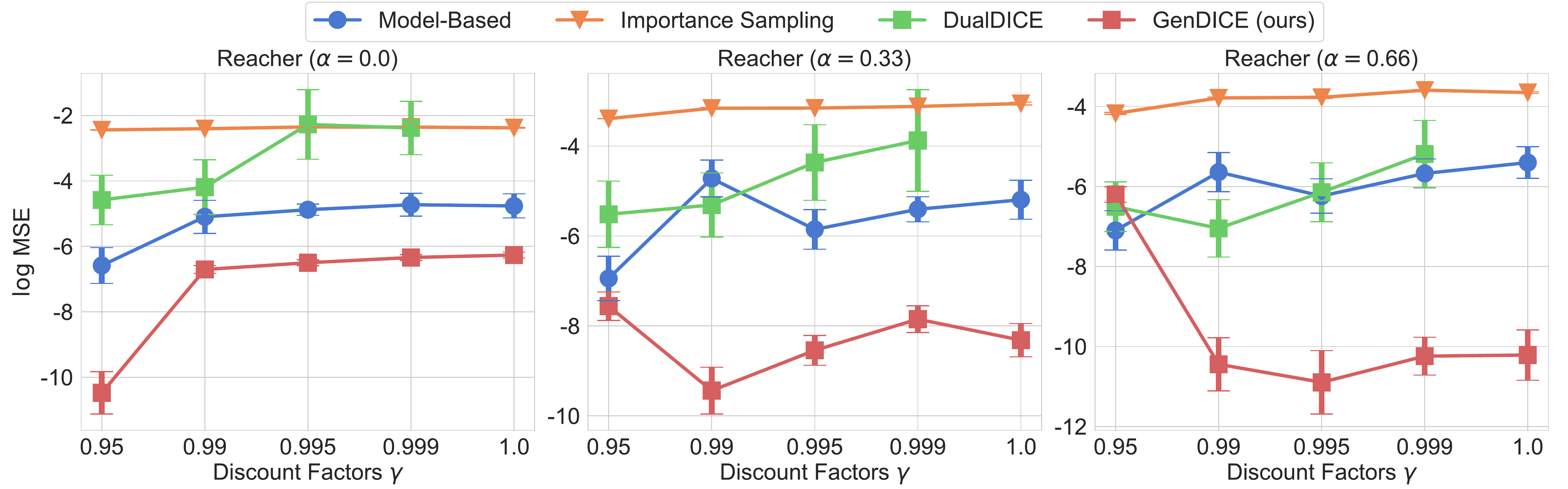}
		\\
	\end{tabular} 
	\vspace{-4mm}
	\caption{{Results of ablation study with constraint penalty and discount factors. The left two figures show the effect of ratio constraint on estimating average per-step reward. The right three figures show the $\log$ MSE for  average per-step reward over training and different discount factor $\gamma$.}}
	\label{fig:ablationdis}
\end{figure}

\begin{wrapfigure}{R}{0.39\linewidth} \centering
	\begin{tabular}{cc}
		\hspace{-3mm}
		\includegraphics[width=0.48\linewidth]{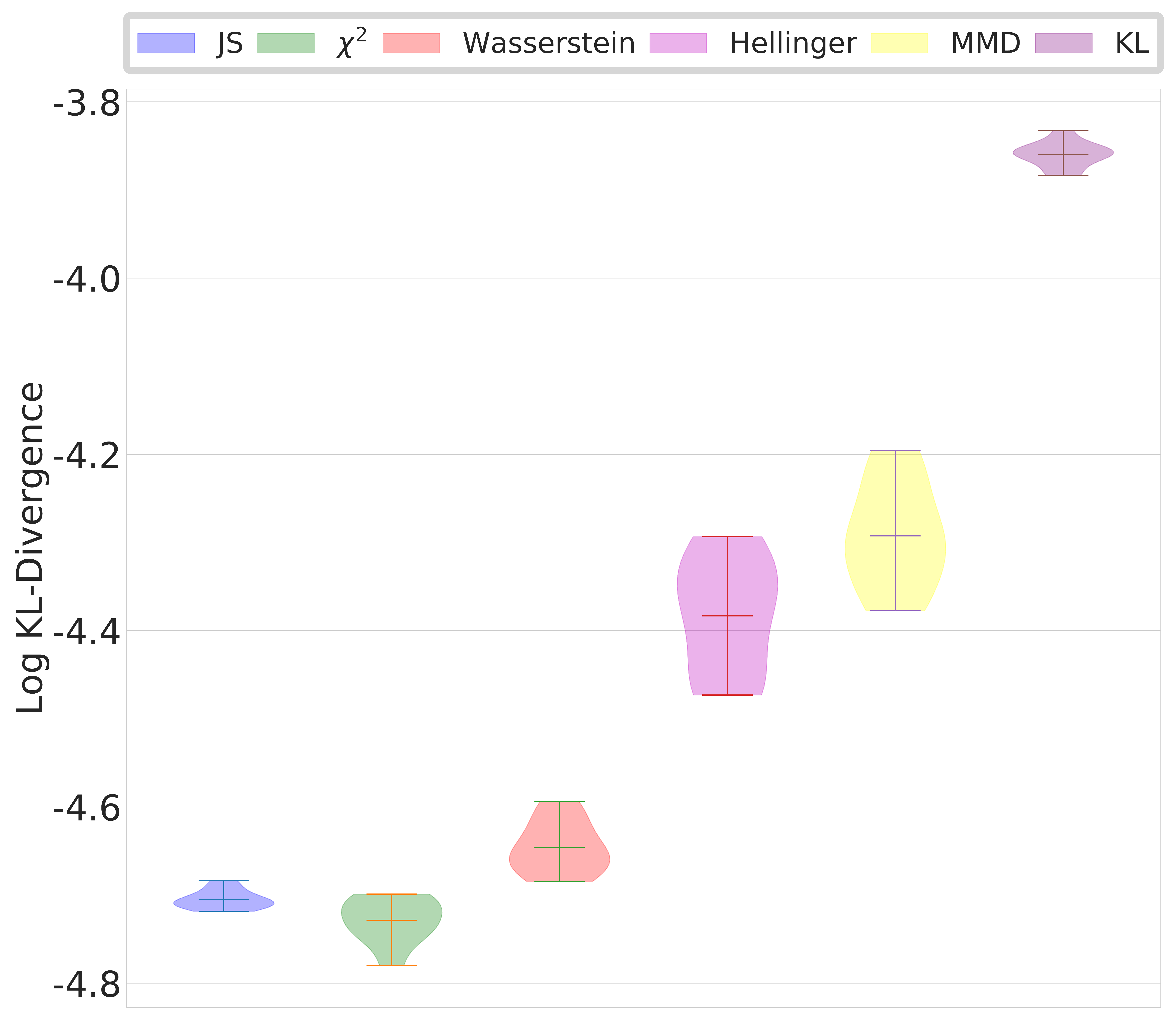}  &
		\hspace{-4mm}
		\includegraphics[width=0.5\linewidth]{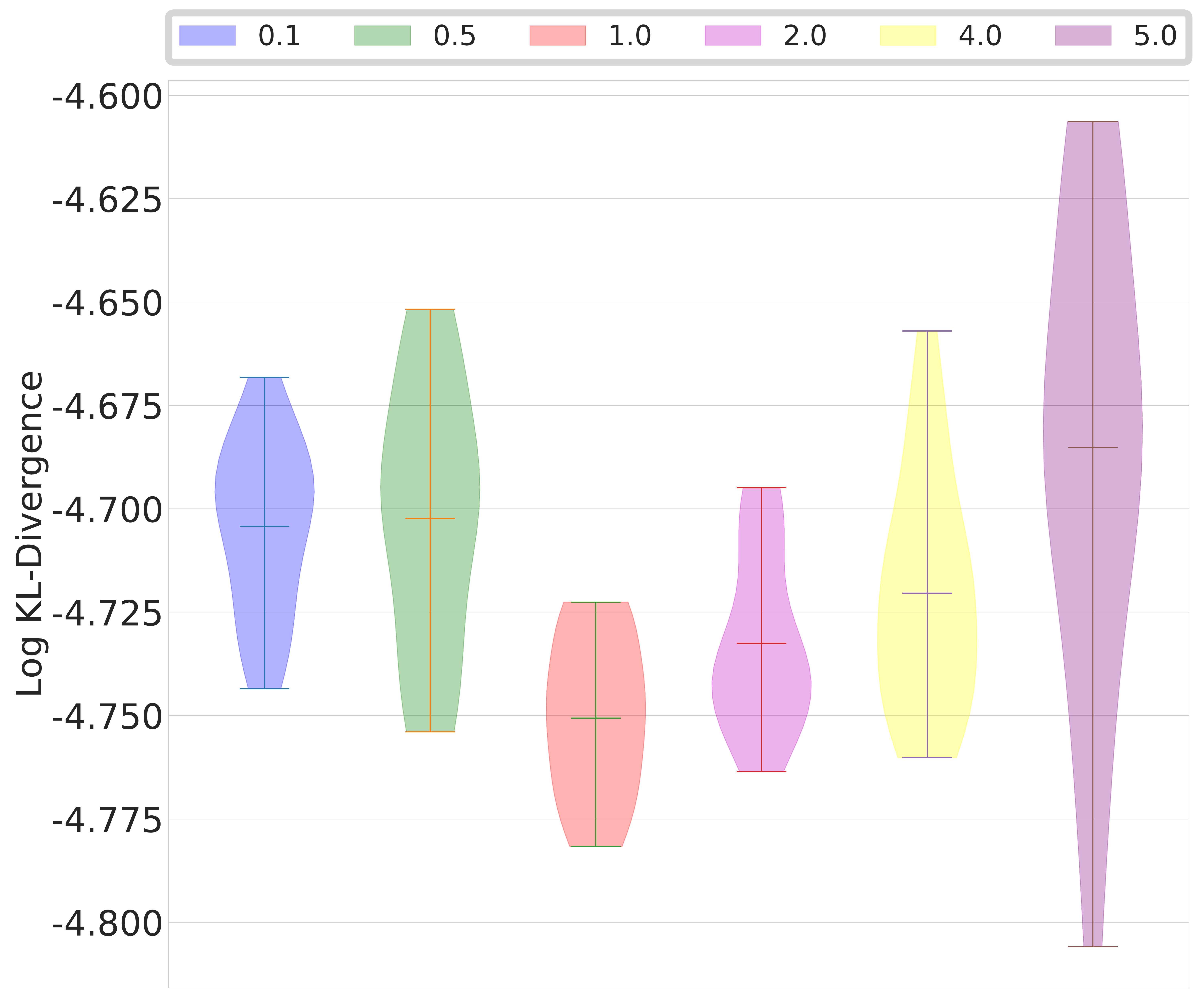}\\
		\vspace{-4mm}
		\\
		(a) &
		(b) \\
	\end{tabular} 
	\vspace{-3mm}
	\caption{{Results of ablation study with (a) different divergence and (b) weight of penalty $\lambda$. The plots show the log $KL$-Divergence of OPR on Barabasi-Albert graph.}}
	\label{fig:opr_ablation_main}
	\vspace{-3mm}
\end{wrapfigure}

\vspace{-3mm}
\paragraph{Effects of the Choice of Divergences} We empirically test the GenDICE with several other alternative divergences, \eg, Wasserstein-$1$ distance, Jensen-Shannon divergence, $KL$-divergence, Hellinger divergence, and MMD. 
To avoid the effects of other factors in the estimator, \eg, function parametrization, we focus on the offline PageRank task on BA graph with $100$ nodes and $10k$ offline samples. All the experiments are evaluated with $20$ random trials. 
To ensure the dual function to be $1$-Lipchitz, we add the gradient penalty. Besides, we use a learned Gaussian kernel in MMD, similar to~\citet{LiChaYuYanetal17}. As we can see in~\figref{fig:opr_ablation_main}(a), the GenDICE estimator is compatible with many different divergences. Most of the divergences, with appropriate extra techniques to handle the difficulties in optimization and carefully tuning for extra parameters, can achieve similar performances, consistent with phenomena in the variants of GANs~\citep{LucKurMicGeletal18}. However, $KL$-divergence is an outlier, performing noticeably worse, which might be caused by the ill-behaved $\exp\rbr{\cdot}$ in its conjugate function. The $\chi^2$-divergence and JS-divergence are better, which achieve good performances with fewer parameters to be tuned.

\vspace{-3mm}
\paragraph{Effects of the Penalty Weight} The results of different penalty weights $\lambda$ are illustrated in~\figref{fig:opr_ablation_main}(b). We vary the $\lambda \in [0.1, 5]$ with $\chi^2$-divergence. Within a large range of $\lambda$, the performances of the proposed GenDICE are quite consistent, which justifies~\thmref{thm:soundness}. 
The penalty multiplies with $\lambda$. Therefore, with $\lambda$ increases, the variance of the stochastic gradient estimator also increases, which explains the variance increasing in large $\lambda$ in~\figref{fig:opr_ablation_main}(b). In practice, $\lambda = 1$ is a reasonable choice for general cases.
\vspace{-3mm}

%%%%%%%%%%%%%%%%%%%%%%%%%%%%%%%%%%%%%%%%%%%%%%%%%%%%%%%%%%%%%%%%%%%%%%%%%%%%
\section{Conclusion}\label{sec:conclusion}
%%%%%%%%%%%%%%%%%%%%%%%%%%%%%%%%%%%%%%%%%%%%%%%%%%%%%%%%%%%%%%%%%%%%%%%%%%%%

In this paper, we proposed a novel algorithm \estname for general stationary distribution correction estimation, which can handle both the discounted and average stationary distribution given multiple behavior-agnostic samples.
% of a Markov chain, without any unnecessary assumptions on the offline data. 
Empirical results on off-policy evaluation and offline PageRank show the superiority of proposed method over the existing state-of-the-art methods.

% compared with DualDICE which can only be applied in discounted setting, and IS-based method~\citep{liu2018breaking} with assumptions restricting its application.  
% Future direction includes applying \estname for imitation learning to improve data efficiency, and incorporating it in policy learning (e.g., DQN). Furthermore, it can be applied in real-world applications, such as off-policy recommendation system~\citep{chen2019top} and navigation tasks~\citep{anderson2018vision}.  

\subsubsection*{Acknowledgments}
The authors would like to thank Ofir Nachum, the rest of the Google Brain team and the anonymous reviewers for helpful discussions and feedback.

% \subsubsection*{Acknowledgments}
% Use unnumbered third level headings for the acknowledgments. All
% acknowledgments, including those to funding agencies, go at the end of the paper.

%%%%%%%%%%%%%%%%%%%%%%%%%%%%%%%%%%%%%%%%%%%%%%%%%%%%%%%%%%%%
%%%% Reference
%%%%%%%%%%%%%%%%%%%%%%%%%%%%%%%%%%%%%%%%%%%%%%%%%%%%%%%%%%%%
% \bibliographystyle{unsrt}
% \bibliographystyle{plainnat}
% \clearpage
% \newpage

\bibliographystyle{iclr2020_conference}
% \bibliography{iclr2020_conference,../../../../bibfile/bibfile}
\bibliography{iclr2020_conference,../../bibfile/bibfile}

%----------------------------------------------------------------------------------------------------------------------------------
%----------------------------------------------------------------------------------------------------------------------------------
\clearpage
\newpage

\appendix
\onecolumn

\begin{appendix}
	
\thispagestyle{plain}
\begin{center}
	{\huge Appendix}
\end{center}

\section{Properties of~\estabb}\label{appendix:properties}
%%%%%%%%%%%%%%%%%%%%%%%%%%%%%%%%%%%%%%%%%%%%%%%%%%%%%%%%%%%%%%%%%%%%%%%%%%%%

For notation simplicity, we denote $x = \rbr{s, a} \in \Omega \defeq S\times A$ and $\Pb^\pi\rbr{x'|x} \defeq \pi\rbr{a'|s'}\Pb\rbr{s'|s, a}$. Also define $\nbr{f}_{p, 2} \defeq \inner{f}{f}_p = \int f\rbr{x}^2p\rbr{x}dx$. We make the following assumption to ensure the existence of the stationary distribution. Our discussion is all based on this assumption. 
% \begin{assumption}[Markov chain regularity]\label{asmp:stat_exist}
\paragraph{\asmpref{asmp:stat_exist}}
\textit{Under the target policy, the resulted state-action transition operator $\Tcal$ has a unique stationary distribution in terms of the divergence $D\rbr{\cdot||\cdot}$.}
% \end{assumption}

If the total variation divergence is selected, the~\asmpref{asmp:stat_exist} requires the transition operator should be ergodic, as discussed in~\citet{MeyTwe12}. 
%\lihong{Why need divergence and/or TV conditions?}
%\lihong{Should also move the assumption to main text.}

%---------------------------------------------------------------------------
\subsection{Consistency of the Estimator}\label{appendix:soundness}
%---------------------------------------------------------------------------

% \lihong{``Unbiasedness'' to ``Soundness''?}
\paragraph{\thmref{thm:soundness}}
\textit{For arbitrary $\lambda >0$, the solution to the optimization~\eqnref{eq:regularized_est} is $\frac{u\rbr{s, a}}{p\rbr{s, a}}$ for $\gamma\in (0, 1]$. }
\begin{proof}
For $\gamma \in(0, 1)$, there is not degenerate solutions to $D\rbr{\Ttau||\ptau}$. The optimal solution is a density ratio. Therefore, the extra penalty $\rbr{\EE_{p\rbr{x}}\sbr{\tau\rbr{x}} - 1}^2$ does not affect the optimality for $\forall\lambda>0$. 

When $\gamma =1$, for $\forall \lambda >0$, recall both $D\rbr{\Ttau||\ptau}$ and $\rbr{\EE_{p\rbr{x}}\sbr{\tau\rbr{x}} - 1}^2$ are non-negative, and the the density ratio $\frac{\mu\rbr{x}}{p\rbr{x}}$ leads to zero for both terms. Then, the density ratio is a solution to $J\rbr{\tau}$. For any other non-negative function $\tau\rbr{x}\ge 0$, if it is the optimal solution to $J\rbr{\tau}$, then, we have 
\begin{eqnarray}\label{eq:opt_condition}
	D\rbr{\Ttau||\ptau}=0 &\Rightarrow&  p\rbr{x'}\tau\rbr{x'} = \Ttau\rbr{x'} = \int\Pb^\pi\rbr{x'|x}\tau\rbr{x}dx,\\ 
	\rbr{\EE_{p\rbr{x}}\sbr{\tau\rbr{x}} - 1}^2 = 0 &\Rightarrow& \EE_{p\rbr{x}}\sbr{\tau\rbr{x}} = 1.
\end{eqnarray}

We denote $\mu\rbr{x} = p\rbr{x}\tau\rbr{x}$, which is clearly a density function. Then, the optimal conditions in \eqref{eq:opt_condition} imply
$$
\mu\rbr{x'} = {\int\Pb^\pi\rbr{x'|x}\mu\rbr{x}dx},
$$
or equivalently, $\mu$ is the stationary distribution of $\Tcal$.  We have thus shown the optimal $\tau\rbr{x} = \frac{\mu\rbr{x}}{p\rbr{x}}$ is the target density ratio.
\end{proof}

%---------------------------------------------------------------------------
\subsection{Convexity of the Objective}\label{appendix:convexity}
%---------------------------------------------------------------------------
\begin{proof}
Since the $\phi$ is convex, we consider the Fenchel dual representation of the $f$-divergence $D_\phi\rbr{\Ttau||\ptau}$, \ie, 
\begin{multline}
D_\phi\rbr{\Ttau||\ptau} = \max_{f\in \Omega\rightarrow \RR} \ell\rbr{\tau, f}\\
\defeq \rbr{1- \gamma}\EE_{\mu_0\pi}\sbr{f\rbr{x}} + \gamma\EE_{\Tcal_p\rbr{x, x'}}\sbr{\tau\rbr{x} f\rbr{x'}}- \EE_{p\rbr{x}}\sbr{\tau\rbr{x}\phi^*\rbr{f\rbr{x}}}.
\end{multline}
% It is obviously $\ell\rbr{\tau, f}$ is convex w.r.t. $\tau$ and concave w.r.t. $f$. Therefore, $D_\phi\rbr{\Ttau||\ptau} = \ell\rbr{\tau, f^*}$ is also convex w.r.t. $\tau$. 
It is obviously $\ell\rbr{\tau, f}$ is convex in $\tau$ for each $f$, then, $D_\phi\rbr{\Ttau||\ptau}$ is convex. The term $\lambda \rbr{\EE_p\rbr{\tau} - 1}^2$ is also convex, which concludes the proof.
\end{proof}

%%%%%%%%%%%%%%%%%%%%%%%%%%%%%%%%%%%%%%%%%%%%%%%%%%%%%%%%%%%%%%%%%%%%%%%%%%%%
\section{Algorithm Details}\label{appendix:alg_details}
%%%%%%%%%%%%%%%%%%%%%%%%%%%%%%%%%%%%%%%%%%%%%%%%%%%%%%%%%%%%%%%%%%%%%%%%%%%%

We provide the unbiased gradient estimator for $\nabla_{w_\tau, u, w_f} J\rbr{\tau, u, f}$ in~\eqnref{eq:chi_saddle_est} below:
\begin{eqnarray}\label{eq:grad_estimator}
% \textstyle
% \hspace{-20mm}
\nabla_{w_\tau}J_{\chi^2}\rbr{\tau, u, f}\hspace{-2mm} &=&\hspace{-2mm} \gamma\EE_{\Tcal_p}\sbr{\nabla_{w_\tau}\tau\rbr{s, a}f\rbr{s', a'}} - \EE_{p}\sbr{\nabla_{w_\tau}\tau\rbr{s, a}\rbr{f\rbr{s, a} + \frac{1}{4}f^2\rbr{s, a}}} \nonumber\\  
&&+\lambda u{\EE_p\sbr{\nabla_{w_\tau}\tau\rbr{s, a}}}, \\
\nabla_{u}J_{\chi^2}\rbr{\tau, u, f} \hspace{-2mm}&=&\hspace{-2mm} \lambda\rbr{\EE_p\sbr{\tau\rbr{s, a}-1}-u},\\
\nabla_{w_f}J_{\chi^2}\rbr{\tau, u, f} \hspace{-2mm}&=&\hspace{-2mm} \rbr{1 - \gamma}\EE_{\mu_0\pi}\sbr{\nabla_{w_f}f\rbr{s, a}} + \gamma\EE_{\Tcal_p}\sbr{\tau\rbr{s, a}\nabla_{w_f}f\rbr{s', a'}} \\
&&- \EE_{p}\sbr{\tau\rbr{s, a}\rbr{1 + \frac{1}{2}f\rbr{s, a}}\nabla_{w_f}f\rbr{s, a}}. \nonumber
\end{eqnarray}

Then, we have the psuedo code which applies SGD for solving~\eqnref{eq:chi_saddle_est}.

\begin{algorithm}[tb]
   \caption{GenDICE (with function approximators)}\label{alg:gendice}
\begin{algorithmic}
   \STATE {\bf Inputs}: Convex function $\phi$ and its Fenchel conjugate $\phi^*$, off-policy data $\Dcal = \{(s^{(i)}, a^{(i)}, r^{(i)}, s^{\prime(i)})\}_{i=1}^N$, initial state $s_0\sim\mu_0$, target policy $\pi$, {distribution corrector}~$\texttt{nn}_{w_\tau}(\cdot,\cdot), \texttt{nn}_{w_f}(\cdot,\cdot)$, constraint scalar $u$,  learning rates $\eta_\tau, \eta_f, \eta_u$, number of iterations $K$,
   batch size $B$.
   \FOR{$t = 1,\dots,K$}
   \STATE Sample batch $\{(s^{(i)}, a^{(i)}, r^{(i)}, s^{\prime(i)})\}_{i=1}^B$ from $\Dset$. 
   \STATE Sample batch $\{s_0^{(i)}\}_{i=1}^B$ from $\mu_0$.
   \STATE Sample actions $a^{\prime(i)}\sim\pi(s^{\prime(i)})$, for $i=1,\dots,B$.
   \STATE Sample actions $a_0^{(i)}\sim\pi(s_0^{(i)})$, for $i=1,\dots,B$.
   \STATE Compute empirical loss $\hat J_{\chi^2}\rbr{\tau, u, f} = \rbr{1 - \gamma}\EE_{\mu_0\pi}\sbr{f\rbr{s, a}} + \gamma\EE_{\Tcal_p}\sbr{\tau\rbr{s, a}f\rbr{s', a'}}$ \\
   $- \EE_{p}\sbr{\tau\rbr{s, a}\rbr{f\rbr{s, a} + \frac{1}{4}f^2\rbr{s, a}}} + \lambda\rbr{\EE_p\sbr{u\tau\rbr{s, a}-u}-\frac{u^2}{2}}$.
   \STATE Update $w_\tau\leftarrow w_\tau - \eta_\tau \nabla_{\theta_\tau}\hat J_{\chi^2}$.
   \STATE Update $w_f\leftarrow w_f + \eta_f \nabla_{\theta_f}\hat J_{\chi^2}$.
   \STATE Update $u\leftarrow u + \eta_u \nabla_{u} \hat J_{\chi^2}$.
   % \Bo{In the proof, I only have guarantees for optimization error from SVRG results. The vanilla SGD result is not known yet. But I think it is possible to derive.}
   \ENDFOR 
   \STATE {\bf Return} $\texttt{nn}_{w_\tau}$.
\end{algorithmic}
\end{algorithm}
%%%%%%%%%%%%%%%%%%%%%%%%%%%%%%%%%%%%%%%%%%%%%%%%%%%%%%%%%%%%%%%%%%%%%%%%%%%%
\section{Proof of~\thmref{thm:total_error}}\label{appendix:proofs}
%%%%%%%%%%%%%%%%%%%%%%%%%%%%%%%%%%%%%%%%%%%%%%%%%%%%%%%%%%%%%%%%%%%%%%%%%%%%

For convenience, we repeat here the notation defined in the main text.  The saddle-point reformulation of the objective function of~\estabb is:
\begin{multline*}
J\rbr{\tau, u, f} \defeq \rbr{1 - \gamma}\EE_{\mu_0\pi}\sbr{f\rbr{x'}} + \gamma\EE_{\Tcal_p\rbr{x, x'}}\sbr{\tau\rbr{x} f\rbr{x'}}\\
- \EE_{p\rbr{x}}\sbr{\tau\rbr{x}\phi^*\rbr{f\rbr{x}}} + \lambda\rbr{\EE_{p\rbr{x}}\sbr{u\tau\rbr{x} - u}  - \frac{1}{2}u^2}.
\end{multline*}
To avoid the numerical infinity in $D_\phi\rbr{\cdot||\cdot}$, we induced the bounded version as
\begin{equation*}
J\rbr{\tau} \defeq \max_{\nbr{f}_\infty\le C, u} \,\, J\rbr{\tau, u, f} = D^C_\phi\rbr{\Ttau || \ptau} + \frac{\lambda}{2} \rbr{\EE_{p\rbr{x}}\sbr{\tau\rbr{x}} - 1}^2,
\end{equation*}
in which $D_\phi^C\rbr{\cdot||\cdot}$ is still a valid divergence, and therefore the optimal solution $\tau^*$ is still the stationary density ratio $\frac{\mu\rbr{x}}{p\rbr{x}}$. We denote the $\Jhat\rbr{\tau,\mu, f}$ as the empirical surrogate of $J\rbr{\tau, \mu, f}$ on samples $\Dcal = \rbr{\rbr{x, x'}_{i=1}^N}\sim \Tpgam\rbr{x, x'}$ with the optimal solution in $\rbr{\Hcal, \Fcal, \RR}$ as $\rbr{\tauhs, \uhs, \fhs}$.  Furthermore, denote 
\begin{align*}
\tau^*_\Hcal &= \argmin_{\tau\in\Hcal} J\rbr{\tau}\,, \\
 \tau^* &= \argmin_{\tau\in S\times A\rightarrow \RR} J\rbr{\tau}
\end{align*}
with optimal $\rbr{f^*, u^*}$, and
\begin{align*}
L\rbr{\tau} &= \max_{f\in \Fcal, u\in \RR}J\rbr{\tau, u, f}\,, \\
\Lhat\rbr{\tau} &= \max_{f\in\Fcal, u\in\RR} \Jhat\rbr{\tau,u, f}\,.
\end{align*}
We apply some optimization algorithm for $\Jhat\rbr{\tau, u, f}$ over space $\rbr{\Hcal, \Fcal, \RR}$, leading to the output $\rbr{\hat\tau, \uhat, \fhat}$. 
% We consider the norm $\nbr{f}_2 \defeq \inner{f}{f}_{p} = \EE_{p}\sbr{f^2\rbr{x}}$. 

Under~\asmpref{asmp:ref_dist}, we need only consider $\nbr{\tau}_\infty\le C$, then, the corresponding dual $u = \EE_p\rbr{\tau} - 1\Rightarrow u\in U\defeq\cbr{\abr{u}\le \rbr{C+1}}$. We choose the $\phi^*\rbr{\cdot}$ is a $\kappa$-Lipschitz continuous, then, the $J\rbr{\tau, u, f}$ is a $C_{\Pb^\pi, \kappa, \lambda} = \max\cbr{\rbr{{\gamma}\nbr{\Pb^\pi}_{p, \infty} +\rbr{1 - \gamma} \nbr{\frac{\mu_0\pi}{p}}_{p, \infty} + \kappa}C, \rbr{C+1}\rbr{\lambda + \frac{1}{2}}}$-Lipschitz continuous function w.r.t. $\rbr{f, u}$ with the norm $\nbr{\rbr{f, u}}_{p, 1}\defeq \int \abr{f\rbr{x}}p\rbr{x}dx + \abr{u}$, and $C_{\phi, C, \lambda}\defeq \rbr{C + \lambda\rbr{C+1} + \max_{t\in \cbr{-C, C}}\rbr{-\phi\rbr{t}}}$-Lipschitz continuous function w.r.t. $\tau$ with the norm $\nbr{\tau}_{p, 1} \defeq \int \abr{\tau\rbr{x}}p\rbr{x}dx$. 

We consider the error between $\hat\tau$ and $\tau^*$ using standard arguments~\citep{ShaBen14,Bach14}, \ie, 
\begin{equation*}
d\rbr{\hat\tau, \tau^*}\defeq J\rbr{\hat\tau} - J\rbr{\tau^*}.
% = \underbrace{J\rbr{\hat\tau} - J\rbr{\tauhs}}_{\eps_1} + \underbrace{J\rbr{\tauhs} - J\rbr{\tau^*}}_{\eps_2}. 
\end{equation*}
The discrepancy $d\rbr{\tau, \tau^*}\ge 0$ and $d\rbr{\tau, \tau^*} = 0$ if and only if $\ptau$ is stationary distribution of $\Tcal$ in the weak sense of $D_\phi\rbr{\cdot||\cdot}$.

\paragraph{Remark:} In some special cases, the suboptimality also implies the distance between $\hat\tau$ and $\tau^*$.  
% \lihong{A bit lost here: (1) what is the intended message? (2) $\Qcal_{\setminus\tau^*}$ undefined?}
% \Bo{ (1), I would like to justify that the $d\rbr{\tau, \tau^*}$ can be used to evalute the difference directly, instead of just the suboptimality. In other words, I would like to have some strong informal claim in main text about the estimated expectation with this argument. (2) $\Qcal_{\setminus\tau^*}$ means the all other eigenfunctions except $\tau^*$. }
Specifically,for $\gamma=1$, if the transition operator $\Pb^\pi$ can be represented as $\Pb^\pi = \Qcal\Lambda \Qcal^{-1}$ where $\Qcal$ denotes the (countable) eigenfunctions and $\Lambda$ denotes the diagonal matrix with eigenvalues, the largest of which is $1$.  We consider $\phi\rbr{\cdot}$ as identity and $f\in \Fcal\defeq\cbr{\spn\rbr{\Qcal}, \nbr{f}_{p, 2}\le 1}$, then the $d\rbr{\tau, \tau^*}$ will bounded from below by a metric between $\tau$ and $\tau^*$. Particularly, we have
\begin{eqnarray*}
D_\phi\rbr{\Ttau||\ptau}= \max_{f\in \Fcal}\,\, \EE_{\Tcal_p\rbr{x, x'}}\sbr{\tau\rbr{x} f\rbr{x'}}- \EE_{p\rbr{x}}\sbr{\tau\rbr{x}{f\rbr{x}}} = \nbr{\tau - \Pb^\pi\circ\tau}_{p, 2}.
\end{eqnarray*}
Rewrite $\tau = \alpha\tau^* + \zeta$, where $\zeta\in \spn\rbr{\Qcal_{\setminus\tau^*}}$, then 
\begin{eqnarray*}
D_\phi\rbr{\Ttau||\ptau} = \nbr{\alpha\tau^* - \alpha\Pb^\pi\circ\tau^* + \zeta - \Pb^\pi\circ\zeta}_{p, 2} = \nbr{\zeta - \Pb^\pi\circ\zeta}_{p, 2}.
\end{eqnarray*}
Recall the optimality of $\tau^*$, \ie, $D_\phi\rbr{\Ttau^*||\ptau^*} = 0$, we have 
\begin{equation*}
d\rbr{\tau, \tau^*} =J\rbr{\tau} \ge \nbr{\zeta - \Pb^\pi\circ\zeta}_{p, 2} \defeq \nbr{\rbr{\tau-\tau^*}}_{p, 2, \rbr{\Pb^\pi - I}}.
\end{equation*}

%---------------------------------------------------------------------------
\subsection{Error Decomposition}

We start with the following error decomposition:
\begin{equation*}
d\rbr{\hat\tau, \tau^*}\defeq J\rbr{\hat\tau} - J\rbr{\tau^*}
= \underbrace{J\rbr{\hat\tau} - J\rbr{\tauhs}}_{\eps_1} + \underbrace{J\rbr{\tauhs} - J\rbr{\tau^*}}_{\eps_2}. 
\end{equation*}

\begin{itemize}[leftmargin=*]
	\item For $\eps_1$, we have
\begin{eqnarray*}
\eps_1 &=& J\rbr{\hat\tau} - L\rbr{\hat\tau} + L\rbr{\hat\tau} - L\rbr{\tauhs} + L\rbr{\tauhs} - J\rbr{\tauhs}.
\end{eqnarray*}
We consider the terms one-by-one. By definition, we have
\begin{eqnarray}
J\rbr{\hat\tau} - L\rbr{\hat\tau} &=& \max_{f, \mu} J\rbr{\hat\tau, u, f} - \max_{f\in \Fcal, \mu}J\rbr{\hat\tau, u, f} \nonumber \\ 
% &=&\max_{f_1, \mu_1}\min_{f_2\in \Fcal, \mu_2} J\rbr{\hat\tau, f_1, \mu_1} - J\rbr{\hat\tau, f_2, \mu_2}
&\le&C_{\Pb^\pi, \kappa, \lambda}\underbrace{\sup_{f_1, u_1\in U}\inf_{f_2\in\Fcal, u_2\in U}\nbr{\rbr{f_1, u_1} - \rbr{f_2, u_2}}_{p, 1}}_{\eps_{approx}\rbr{\Fcal}}, \label{eq:eps_fapprox}
\end{eqnarray}
which is induced by introducing $\Fcal$ for dual approximation.

For the third term $L\rbr{\tauhs} - J\rbr{\tauhs}$, we have 
\begin{eqnarray*}
L\rbr{\tauhs} - J\rbr{\tauhs} = \max_{f\in \Fcal, u\in U} J\rbr{\tauhs, u, f} - \max_{f, u\in U} J\rbr{\tauhs, u, f} \le 0. 
\end{eqnarray*}
For the term $L\rbr{\hat\tau} - L\rbr{\tauhs}$, 
\begin{eqnarray}
 L\rbr{\hat\tau} - L\rbr{\tauhs} &=&  L\rbr{\hat\tau} - \Lhat\rbr{\hat\tau} + \underbrace{\Lhat\rbr{\hat\tau} - \Lhat\rbr{\tauhs}}_{\hat\eps_{opt}} + \Lhat\rbr{\tauhs} - L\rbr{\tauhs} \nonumber \\
 &\le& 2\sup_{\tau\in \Hcal}\abr{L\rbr{\tau} - \Lhat\rbr{\tau}} + \hat\eps_{opt} \nonumber \\
 &\le& 2\sup_{\tau\in \Hcal}\abr{\max_{f\in\Fcal, u\in U} J\rbr{\tau, u, f} - \max_{f\in \Fcal, u\in U}\Jhat\rbr{\tau, u, f}} + \hat\eps_{opt} \nonumber \\
 &\le& 2\sup_{\tau\in\Hcal}\sup_{f\in \Fcal, u\in U}\abr{J\rbr{\tau, u, f} - \Jhat\rbr{\tau, u, f}} + \hat\eps_{opt} \nonumber \\
 &=& 2\cdot \eps_{est} + \hat\eps_{opt}, \label{eq:eps_est}
\end{eqnarray}
where we define $\eps_{est}\defeq \sup_{\tau\in\Hcal, f\in \Fcal, u\in U}\abr{J\rbr{\tau, u, f} - \Jhat\rbr{\tau, u, f}}$. 

Therefore, we can now bound $\eps_1$ as
\begin{equation*}
\eps_1\le C_{\Tcal, \kappa, \lambda}\eps_{approx}\rbr{\Fcal} + 2\eps_{est} + \hat\eps_{opt}. 
\end{equation*}

\item For $\eps_2$, we have
\begin{eqnarray*}
\eps_2 &=& J\rbr{\tauhs} - J\rbr{\tau_\Hcal^*} + J\rbr{\tau_\Hcal^*} - J\rbr{\tau^*}\\
&=& J\rbr{\tauhs} - L\rbr{\tauhs} + L\rbr{\tauhs} - L\rbr{\tau_\Hcal^*} + L\rbr{\tau_\Hcal^*} - J\rbr{\tau_\Hcal^*} + J\rbr{\tau_\Hcal^*} - J\rbr{\tau^*}.
\end{eqnarray*}

We consider the terms from right to left. For the term $J\rbr{\tau_\Hcal^*} - J\rbr{\tau^*}$, we have
\begin{eqnarray*}
J\rbr{\tauhs} - J\rbr{\tau^*} =& J\rbr{\tauhs, \uhs, \fhs} - J\rbr{\tau^*, \uhs, \fhs} + \underbrace{J\rbr{\tau^*, \uhs, \fhs} - J\rbr{\tau^*, u^*, f^*}}_{\le 0}\\
=& J\rbr{\tauhs, \uhs, \fhs} - J\rbr{\tau^*, \uhs, \fhs} \le C_{\phi, C, \lambda}\underbrace{\sup_{\tau_1}\inf_{\tau_2\in \Hcal}\nbr{\tau_1 - \tau_2}_{p, 1}}_{\eps_{approx}\rbr{\Hcal}},
\end{eqnarray*}
which is induced by restricting the function space to $\Hcal$. The second term is nonpositive, due to the optimality of $\rbr{u^*, f^*}$. The final inequality comes from the fact that $J\rbr{\tau, u, f}$ is $C_{\phi, C, \lambda}$-Lipschitz w.r.t. $\tau$. 

For the term $L\rbr{\tauhs} - J\rbr{\tau_\Hcal^*}$, by definition
\begin{equation*}
L\rbr{\tau_\Hcal^*} - J\rbr{\tau_\Hcal^*} = \max_{f\in \Fcal, u\in U} J\rbr{\tau_{\Hcal}^*, f, u} - \max_{f, u\in U} J\rbr{\tau_{\Hcal}^*, f, u}\le 0.
\end{equation*}

For the term $ L\rbr{\tauhs} - L\rbr{\tau_\Hcal^*}$, we have
\begin{eqnarray*}
L\rbr{\tauhs} - L\rbr{\tau_\Hcal^*} &=&  L\rbr{\tauhs} - \Lhat\rbr{\tauhs} + \underbrace{\Lhat\rbr{\tauhs} - \Lhat\rbr{\tau_\Hcal^*}}_{\le 0} + \Lhat\rbr{\tau_\Hcal^*} - L\rbr{\tau_\Hcal^*}\\
&=& 2\sup_{\tau\in\Hcal}\abr{L\rbr{\tau} - \Lhat\rbr{\tau}}\\
&=& 2\sup_{\tau\in\Hcal, f\in \Fcal, u\in U}\abr{J\rbr{\tau, u, f} - \Jhat\rbr{\tau, u, f}}\\
&=& 2\cdot\eps_{est}. 
\end{eqnarray*}
where the second term is nonpositive, thanks to the optimality of $\tauhs$. 

Finally, for the term $J\rbr{\tauhs} - J\rbr{\tau_\Hcal^*}$, using the same argument in~\eqref{eq:eps_fapprox}, we have 
\begin{equation*}
J\rbr{\tauhs} - J\rbr{\tau_\Hcal^*}\le C_{\Pb^\pi,\kappa, \lambda}\eps_{approx}\rbr{\Fcal}. 
\end{equation*}
Therefore, we can bound $\eps_2$ by
\begin{equation*}
\eps_2 \le C_{\phi, C, \lambda}\eps_{approx}\rbr{\Hcal} + C_{\Pb^\pi, \kappa, \lambda}\rbr{\Fcal} + 2\eps_{est}. 
\end{equation*}

\end{itemize}

In sum, we have 
\begin{equation*}
d\rbr{\hat\tau, \tau^*} \le 4\eps_{est} + \hat\eps_{opt} + 2C_{\Pb^\pi, \kappa,\lambda}\eps_{approx}\rbr{\Fcal} + C_{\phi, C, \lambda}\eps_{approx}\rbr{\Hcal}.
\end{equation*}
In the following sections, we will bound the $\eps_{est}$ and $\hat\eps_{opt}$. 

%---------------------------------------------------------------------------
\subsection{Statistical Error}
%---------------------------------------------------------------------------

In this section, we analyze the statistical error 
$$
\eps_{est}\defeq \sup_{\tau\in\Hcal, f\in \Fcal, u\in U}\abr{J\rbr{\tau, u, f} - \Jhat\rbr{\tau, u, f}}.
$$
We mainly focus on the batch RL setting with \iid~samples $\Dcal = \sbr{\rbr{x_i, x_i'}}_{i=1}^N\sim\Tcal_p\rbr{x, x'}$, which has been studied by previous authors~\citep[e.g.,][]{SutSzeGerBow12,nachum2019dualdice}. However, as discussed in the literature~\citep{AntSzeMun08b,LazGhaMun12,DaiShaLiXiaHeetal17,nachum2019dualdice}, using the blocking technique of~\citet{Yu94}, the statistical error provided here can be generalized to $\beta$-mixing samples in a single sample path. We omit this generalization for the sake of expositional simplicity. 

To bound the $\eps_{est}$, we follow similar arguments by~\citet{DaiShaLiXiaHeetal17,nachum2019dualdice} via the covering number. For completeness, the definition is given below.

The Pollard's tail inequality bounds the maximum deviation via the covering number of a function class:
\begin{lemma}[\citet{Pollard12}]\label{lemma:beta_tail}
Let $\gG$ be a permissible class of $\gZ\rightarrow[-M, M]$ functions and $\cbr{Z_i}_{i=1}^N$ are \iid~samples from some distribution.  Then, for any given $\epsilon>0$,
\begin{eqnarray*}
\sP\rbr{\sup_{g\in\gG}\abr{\frac{1}{N}\sum_{i=1}^N g(Z_i) - \E\sbr{g(Z)}} > \epsilon}\le 8\E\sbr{\gN_1\rbr{\frac{\epsilon}{8}, \gG, \cbr{Z_i}_{i=1}^N}}\exp\rbr{\frac{-N\epsilon^2}{512M^2}}.
\end{eqnarray*}
\end{lemma}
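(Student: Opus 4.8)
The plan is to establish this bound by the classical two-step symmetrization argument combined with a covering-number union bound, since the statement is exactly a Glivenko--Cantelli-type uniform deviation inequality. Throughout, write $P_N g \defeq \frac{1}{N}\sum_{i=1}^N g(Z_i)$ and $Pg \defeq \E[g(Z)]$, so that the object to be bounded is $\sP\rbr{\sup_{g\in\gG}\abr{P_N g - Pg} > \epsilon}$.

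First I would symmetrize with a ghost sample: introduce an independent copy $\cbr{Z_i'}_{i=1}^N$ of the data, with empirical average $P_N'g$. On the event that some $g\in\gG$ satisfies $\abr{P_N g - Pg} > \epsilon$, permissibility (which is precisely the hypothesis that makes the relevant selection measurable) lets one choose a witness $g^\star$ depending only on the original sample; since $g^\star$ is bounded in $[-M,M]$ one has $\Var(P_N' g^\star) \le M^2/N$, so Chebyshev gives $\sP\rbr{\abr{P_N' g^\star - Pg^\star} \le \epsilon/2} \ge 1 - 4M^2/(N\epsilon^2) \ge \tfrac12$ as long as $N\epsilon^2 \ge 8M^2$ --- and when that fails the claimed right-hand side already exceeds $1$, so the inequality is vacuous. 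This gives $\sP\rbr{\sup_g\abr{P_N g - Pg} > \epsilon} \le 2\,\sP\rbr{\sup_g\abr{P_N g - P_N' g} > \epsilon/2}$. Next I would randomize: because the pairs $(Z_i, Z_i')$ are exchangeable, for i.i.d.\ Rademacher signs $\sigma_i$ independent of the data the variable $\sup_g\abr{P_N g - P_N' g}$ has the same law as $\sup_g\abr{\frac1N\sum_i \sigma_i\rbr{g(Z_i) - g(Z_i')}}$, after which I condition on $\cbr{Z_i, Z_i'}$ and treat only the signs as random.

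Then I would discretize. Conditionally on the sample, pick a minimal $\epsilon/8$-net $\gG_\epsilon$ of $\gG$ in the empirical $L^1$ metric supported on the observed points, of cardinality $\gN_1\rbr{\epsilon/8, \gG, \cbr{Z_i}_{i=1}^N}$ (reconciling the net on the doubled sample with the $N$-point covering number in the statement is bookkeeping). Replacing any $g$ by its nearest $\bar g\in\gG_\epsilon$ perturbs $\abr{\frac1N\sum_i\sigma_i(g(Z_i)-g(Z_i'))}$ by at most $\epsilon/4$, so the symmetrized supremum can exceed $\epsilon/2$ only if $\max_{\bar g\in\gG_\epsilon}\abr{\frac1N\sum_i\sigma_i(\bar g(Z_i)-\bar g(Z_i'))} > \epsilon/4$. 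For each fixed $\bar g$ this is an average of $N$ independent, mean-zero terms each lying in $[-2M, 2M]$, so Hoeffding's inequality controls its tail at level $\epsilon/4$ by $2\exp\rbr{-N\epsilon^2/(512M^2)}$; a union bound over $\gG_\epsilon$, an expectation over the sample, and the leading factor $2$ from the symmetrization step then assemble into $8\,\E\sbr{\gN_1\rbr{\epsilon/8, \gG, \cbr{Z_i}_{i=1}^N}}\exp\rbr{-N\epsilon^2/(512M^2)}$.

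I expect the only real obstacles to be technical rather than conceptual: (i) the measurability of the witness selection in the first symmetrization step, which I would handle by simply invoking Pollard's permissibility machinery rather than redeveloping it; and (ii) tracking the numerical constants so that the net radius $\epsilon/8$, the successive reductions to $\epsilon/2$ and $\epsilon/4$, the $[-2M,2M]$ range fed into Hoeffding, and the passage from the doubled to the single sample all line up to produce exactly the stated factors $8$ and $512$ rather than some larger constants. The constant-chasing, not any new idea, is the part that demands care.
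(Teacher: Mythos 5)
The paper does not actually prove this statement: Lemma~\ref{lemma:beta_tail} is imported verbatim from \citet{Pollard12} (Pollard's maximal inequality) and used as a black box in the proof of Lemma~\ref{lemma:stat_error}, so there is no internal proof to compare against. Judged on its own terms, your proposal is the standard proof of exactly this result --- ghost-sample symmetrization, Rademacher randomization, an empirical $L^1$ cover, Hoeffding on the net, union bound --- which is precisely Pollard's original argument, so you are reconstructing the cited source rather than diverging from it. The outline is correct, and your observation that the inequality is vacuous when $N\epsilon^2 < 8M^2$ correctly disposes of the regime where the Chebyshev step in the first symmetrization would fail.

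The one place where your accounting does not literally deliver the stated bound is the step you dismiss as ``bookkeeping.'' As written, your net lives on the doubled sample $\cbr{Z_i,Z_i'}_{i=1}^N$ and your factors multiply to $2\times 2=4$, giving $4\,\E\sbr{\gN_1\rbr{\epsilon/8,\gG,\cbr{Z_i,Z_i'}}}\exp(-N\epsilon^2/(128M^2))$; the covering number over $2N$ points is not bounded by the covering number over the first $N$ points, so this is not yet the displayed inequality. The standard repair is one more triangle-inequality split, bounding $\sup_g\abr{\tfrac1N\sum_i\sigma_i(g(Z_i)-g(Z_i'))}>\epsilon/2$ by twice the probability that $\sup_g\abr{\tfrac1N\sum_i\sigma_i g(Z_i)}>\epsilon/4$ for a single sample; that split is simultaneously what reduces the cover to $\cbr{Z_i}_{i=1}^N$ and what supplies the missing factor of $2$ to make $8$. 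Carried through, Hoeffding with terms in $[-M,M]$ and threshold $\epsilon/8$ gives $\exp(-N\epsilon^2/(128M^2))$, which is stronger than the $\exp(-N\epsilon^2/(512M^2))$ claimed in the lemma, so the stated (weaker) constant follows a fortiori. With that one step made explicit, your proof is complete and matches the source.
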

The covering number can then be bounded in terms of the function class's pseudo-dimension:
\begin{lemma}[\citet{Haussler95}, Corollary~3]\label{lemma:cover_pseudo}
For any set $\Xcal$, any points $x^{1:N}\in\Xcal^N$, any class $\Fcal$ of functions on $\Xcal$ taking values in $[0, M]$ with pseudo-dimension $D_{\Fcal}<\infty$, and any $\epsilon>0$, 
$$
\Ncal_1\rbr{\epsilon, \Fcal, x^{1:N}}\le e\rbr{D_\Fcal + 1}\rbr{\frac{2eM}{\epsilon}}^{D_\Fcal}.
$$
\end{lemma}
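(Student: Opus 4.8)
`\documentclass{article}`\textbf{Proof sketch (plan).} This is a classical covering‑number estimate, so the plan is to follow Haussler's two–step route: (i) reduce covering numbers to packing numbers and reduce the real‑valued class to an associated set system, and (ii) bound packing numbers of a set system of bounded Vapnik--Chervonenkis dimension. Let $P$ be the uniform probability measure on the multiset $\cbr{x^1,\dots,x^N}$ and write $\rho\rbr{f, g}\defeq\frac1N\sum_{i=1}^N\abr{f\rbr{x^i}-g\rbr{x^i}}$ for the induced empirical $L^1$ pseudometric. Since a maximal $\epsilon$‑separated subfamily of $\Fcal$ is automatically an $\epsilon$‑cover, it suffices to bound the $\epsilon$‑packing number $\mathcal{M}\rbr{\epsilon,\Fcal,\rho}$; the same argument works verbatim with $P$ replaced by any probability measure on $\Xcal$, which is how one gets the ``for any points $x^{1:N}$'' form.

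Next I would linearize $\rho$ by passing to subgraphs. On $\Xcal\times[0,M]$ form the product measure $Q\defeq P\otimes\mathrm{Unif}[0,M]$, and to each $f\in\Fcal$ associate its open subgraph $G_f\defeq\cbr{\rbr{x,t}: t<f\rbr{x}}$. By the definition of the pseudo‑dimension, the set system $\mathcal{G}\defeq\cbr{G_f: f\in\Fcal}$ has VC dimension exactly $D_\Fcal$, and a direct computation (integrating out the uniform coordinate) gives $\rho\rbr{f,g}=M\cdot Q\rbr{G_f\triangle G_g}$. Hence an $\epsilon$‑separated family in $\rbr{\Fcal,\rho}$ maps bijectively to a $\delta$‑separated family of sets in $\rbr{\mathcal{G},d_Q}$, where $d_Q\rbr{A,B}\defeq Q\rbr{A\triangle B}$ and $\delta\defeq\epsilon/M$. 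So the claim reduces to: a set system of VC dimension $d$ has $\delta$‑packing number at most $e\rbr{d+1}\rbr{2e/\delta}^d$ in the symmetric‑difference pseudometric of an arbitrary probability space.

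For the VC‑class packing bound I would argue by random subsampling. Given pairwise $\delta$‑separated sets $B_1,\dots,B_m\in\mathcal{G}$, draw $n$ i.i.d.\ points $Z_1,\dots,Z_n\sim Q$. For a fixed pair $i\neq j$, the probability that no $Z_\ell$ lands in $B_i\triangle B_j$ is at most $\rbr{1-\delta}^n\le e^{-n\delta}$, so as soon as $n>\delta^{-1}\ln\binom m2$ a union bound over pairs shows that with positive probability all $m$ sets induce distinct traces on $\cbr{Z_1,\dots,Z_n}$; then $m$ is at most the number of distinct traces, which by the Sauer--Shelah lemma is $\sum_{k=0}^d\binom nk\le\rbr{en/d}^d$. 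Optimizing $n$ subject to this constraint already yields a bound of the shape $C\rbr{d}\,\delta^{-d}$, and substituting $\delta=\epsilon/M$ gives $C\rbr{d}\rbr{M/\epsilon}^{d}$ — the right order. To sharpen the constants to the stated $e\rbr{d+1}\rbr{2eM/\epsilon}^d$ one replaces the crude pairwise union bound by Haussler's refinement: a random‑permutation / one‑inclusion‑graph argument on the trace of $\mathcal{G}$ on $Z_{1:n}$ that bounds the \emph{expected number of $\delta$‑close pairs} directly rather than forcing every pair to be separated, after which one lets $n\to\infty$ with the appropriate normalization.

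\textbf{Main obstacle.} Everything up to the shape $C\rbr{d}\rbr{M/\epsilon}^d$ is routine; the delicate part is exactly Haussler's counting refinement that removes the spurious $\ln\rbr{1/\delta}$ factor and pins down the precise constants $e\rbr{d+1}$ and $2e$. Reproducing that argument carefully — together with checking the minor measurability/permissibility hypotheses needed for the random‑sampling step and handling the strict‑versus‑nonstrict threshold convention in the definition of the pseudo‑dimension (which affects $G_f$ only on a $Q$‑null set) — is where the real work lies; for the purposes of this paper the bound is simply quoted from \citet{Haussler95}.
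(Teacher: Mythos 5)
This lemma is an external result that the paper quotes verbatim from \citet{Haussler95} without proof, so there is no in-paper argument to compare against. Your sketch follows the standard (and essentially the only known) route to this bound --- reduce covering to packing, pass from the real-valued class to the subgraph set system so that the empirical $L^1$ distance becomes $M$ times the $Q$-measure of a symmetric difference, identify the VC dimension of the subgraph class with the pseudo-dimension, and then invoke Haussler's packing theorem for VC classes --- and you are candid that the genuinely hard step (the one-inclusion-graph / expected-close-pairs refinement that yields the constants $e\rbr{D_\Fcal+1}$ and $2e$) is only named, not executed. That is an accurate map of where the difficulty lies, and for a quoted result it is an appropriate level of detail. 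One internal inconsistency worth fixing: you claim that optimizing $n$ in the crude pairwise union bound ``already yields a bound of the shape $C\rbr{d}\,\delta^{-d}$,'' but solving $m^{1/d}\le 2e\ln m/(d\delta)$ self-consistently gives $m\le\rbr{O\rbr{\log\rbr{1/\delta}}/\delta}^{d}$, i.e.\ the crude argument carries an extra $\rbr{\log\rbr{1/\delta}}^{d}$ factor --- which is exactly the ``spurious $\ln\rbr{1/\delta}$'' you then (correctly) say Haussler's refinement is needed to remove. The two sentences cannot both be right; the second one is. As a proof the proposal is therefore incomplete at its central step, but as a description of how the cited bound is obtained it is faithful.
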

The statistical error $\eps_{est}$ can be bounded using these lemmas.
\begin{lemma}[Stochastic error]\label{lemma:stat_error}
Under the~\asmpref{asmp:ref_dist}, if $\phi^*$ is $\kappa$-Lipschitz continuous and the psuedo-dimension of $\Hcal$ and $\Fcal$ are finite, with probability at least $1-\delta$, we have
$$
\eps_{est} = \Ocal\rbr{\sqrt{\frac{\log N + \log\frac{1}{\delta}}{N}}}.
$$
\end{lemma}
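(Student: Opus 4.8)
The plan is to decompose the uniform deviation $\eps_{est}$ into a sum of empirical-process terms, one for each data-dependent expectation in $J\rbr{\tau,u,f}$, and bound each by Pollard's tail inequality (Lemma~\ref{lemma:beta_tail}) after controlling the relevant $L_1$ covering numbers via pseudo-dimension (Lemma~\ref{lemma:cover_pseudo}). Since the $-\tfrac12 u^2$ term involves no sampling, $\abr{J\rbr{\tau,u,f} - \Jhat\rbr{\tau,u,f}}$ is at most the sum of the deviations of the empirical averages of $f\rbr{x'}$ under $\mu_0\pi$, of $\tau\rbr{x}f\rbr{x'}$ under $\Tcal_p$, of $\tau\rbr{x}\phi^*\rbr{f\rbr{x}}$ under $p$, and of $u\tau\rbr{x}-u$ under $p$, weighted by $1-\gamma$, $\gamma$, $1$, and $\lambda$ respectively. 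Hence $\eps_{est}$ is bounded by a constant (depending on $\gamma,\lambda$) times the maximum of four uniform deviations over the function classes $\gG_1=\Fcal$, $\gG_2=\cbr{\rbr{x,x'}\mapsto\tau\rbr{x}f\rbr{x'}}$, $\gG_3=\cbr{x\mapsto\tau\rbr{x}\phi^*\rbr{f\rbr{x}}}$, and $\gG_4=\cbr{x\mapsto u\tau\rbr{x}-u}$, indexed by $\tau\in\Hcal$, $f\in\Fcal$, $u\in U$.

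Next I would establish uniform boundedness. Under \asmpref{asmp:ref_dist} we restrict to $\nbr{\tau}_\infty\le C$; the truncation defining $D_\phi^C$ gives $\nbr{f}_\infty\le C$; and $\abr{u}\le C+1$ on $U$. Since $\phi^*$ is $\kappa$-Lipschitz, $\abr{\phi^*\rbr{f\rbr{x}}}\le\abr{\phi^*\rbr{0}}+\kappa C$, so every $g\in\gG_j$ maps into $[-M,M]$ for an $M$ depending only on $C,\kappa,\lambda$. I would then reduce the covering numbers of the composite classes to those of $\Hcal$ and $\Fcal$ using boundedness: from $\abr{\tau_1 f_1 - \tau_2 f_2}\le C\rbr{\abr{\tau_1-\tau_2}+\abr{f_1-f_2}}$ and $\abr{\tau_1\phi^*\rbr{f_1}-\tau_2\phi^*\rbr{f_2}}\le C\kappa\abr{f_1-f_2}+\rbr{\abr{\phi^*\rbr{0}}+\kappa C}\abr{\tau_1-\tau_2}$, an $\epsilon$-cover of $\gG_2$ or $\gG_3$ is built from $O(\epsilon)$-covers of $\Hcal$ and $\Fcal$, and $\gG_4$ from a cover of $\Hcal$ together with a one-dimensional grid over $U$. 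Combined with Lemma~\ref{lemma:cover_pseudo} and the finiteness of $D_\Hcal, D_\Fcal$, this yields $\log\Ncal_1\rbr{\epsilon,\gG_j, Z^{1:N}} = \Ocal\rbr{\rbr{D_\Hcal+D_\Fcal}\log\rbr{M/\epsilon}}$ uniformly in the sample.

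Then apply Lemma~\ref{lemma:beta_tail} to each $\gG_j$ with this covering bound and $D=\Ocal\rbr{D_\Hcal+D_\Fcal}$: $\sP\rbr{\sup_{g\in\gG_j}\abr{\tfrac1N\sum_i g\rbr{Z_i}-\EE\sbr{g}}>\epsilon}\le 8e\rbr{D+1}\rbr{16eM/\epsilon}^{D}\exp\rbr{-N\epsilon^2/(512M^2)}$. A union bound over the four terms, equating the right-hand side to $\delta$ and inverting, gives: whenever $\epsilon^2\ge\tfrac{512M^2}{N}\rbr{D\log\rbr{16eM/\epsilon}+\log\rbr{32e\rbr{D+1}/\delta}}$ the event fails with probability at most $\delta$; since a self-consistent choice has $\epsilon\gtrsim 1/\sqrt N$, the $\log\rbr{1/\epsilon}$ factor is $\Ocal\rbr{\log N}$, and solving for $\epsilon$ yields $\eps_{est}=\Ocal\rbr{\sqrt{\rbr{\log N+\log\rbr{1/\delta}}/N}}$ with constants absorbing $D_\Hcal, D_\Fcal, M$. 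The main obstacle is the covering-number control of the \emph{product} class $\gG_2$ and the composed class $\gG_3$: pseudo-dimension is not additive under products, so rather than seeking a direct pseudo-dimension bound for $\gG_2$ I would route through the Lipschitz-in-factors estimates above to reduce to separate covers of $\Hcal$ and $\Fcal$ — this is precisely where uniform boundedness is essential and where the problem-dependent constants enter.
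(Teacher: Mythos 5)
Your proposal is correct and follows essentially the same route as the paper: uniform boundedness of the integrand, reduction of the covering number of the composite class to covers of $\Hcal$, $\Fcal$, and $U$ via Lipschitz-in-factors estimates, Haussler's pseudo-dimension bound, and Pollard's tail inequality, inverted to give the $\sqrt{(\log N+\log(1/\delta))/N}$ rate. The only (immaterial) difference is that you apply Pollard separately to four sub-classes and union bound, whereas the paper applies it once to the single combined class $h_{\tau,u,f}$ after bounding its empirical $L_1$ metric by those of the factors.
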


\begin{proof}
The proof works by verifying the conditions in~\lemref{lemma:beta_tail} and computing the covering number. 

Denote the $h_{\tau, u, f}\rbr{x,x'} =\rbr{1 - \gamma} f\rbr{x'} + \gamma \tau\rbr{x}f\rbr{x'} - \tau\rbr{x}\phi^*\rbr{f\rbr{x}} + \lambda u\tau\rbr{x} - \lambda u - \lambda\frac{1}{2}u^2$, we will apply~\lemref{lemma:beta_tail} with $\Zcal = \Omega\times\Omega$, $Z_i = \rbr{x_i, x_i'}$, and $\Gcal = h_{\Hcal\times\Fcal\times U}$. 

We check the boundedness of $h_{\zeta, u, f}\rbr{x, x'}$. Based on~\asmpref{asmp:ref_dist}, we only consider the $\tau\in \Hcal$ and $u\in U$ bounded by $C$ and $C+1$. We also rectify the $\nbr{f}_\infty \le C$. Then, we can bound the $\nbr{h}_\infty$:
\begin{eqnarray*}
\nbr{h_{\tau, u, f}}_\infty &\le& \rbr{1 + \nbr{\tau}_\infty}\nbr{f}_\infty + \nbr{\tau}_\infty\rbr{\max_{t\in [-C, C]}\,\, -\phi^*\rbr{t}} + \lambda C\rbr{\nbr{\tau}_\infty + 1} + \lambda C^2\\
&\le& \rbr{C+1}^2 + C\cdot C_\phi +  \lambda C\rbr{2C+1} =: M,
\end{eqnarray*}
where $C_\phi = \max_{t\in [-C, C]}\,\, -\phi^*\rbr{t}$. Thus, by~\lemref{lemma:beta_tail}, we have
\begin{eqnarray}
\lefteqn{\sP\rbr{\sup_{\tau\in\Hcal, f\in \Fcal, u\in U}\abr{\Jhat\rbr{\tau, u, f} - J\rbr{\tau, u, f}}}} \nonumber \\
&=& \sP\rbr{\sup_{\tau\in\Hcal, f\in \Fcal, u\in U}\abr{\frac{1}{n}\sum_{i=1}^N h_{\zeta, u, f}\rbr{Z_i} - \EE\sbr{h_{\zeta, u, f}}}} \nonumber \\
&\le& \E\sbr{\gN_1\rbr{\frac{\epsilon}{8}, \gG, \cbr{Z_i}_{i=1}^N}}\exp\rbr{\frac{-N\epsilon^2}{512M^2}}. \label{eq:intermediate}
\end{eqnarray}
Next, we check the covering number of $\Gcal$. Firstly, we bound the distance in $\Gcal$, 
\begin{eqnarray*}
&&\frac{1}{N}\sum_{i=1}^N\abr{h_{\tau_1, u_1, f_1}\rbr{Z_i} - h_{\tau_2, u_2, f_2}\rbr{Z_i}}\\
&\le&\frac{C + C_\phi + \lambda(C+1)}{N}\sum_{i=1}^N \abr{\tau_1\rbr{x_i} -\tau_2\rbr{x_i}} + \frac{1 + \gamma C}{N}\sum_{i=1}^N \abr{f_1\rbr{x'_i} - f_2\rbr{x'_i}} \\
&&+ \frac{\kappa C}{N}\sum_{i=1}^N \abr{f_1\rbr{x_i} - f_2\rbr{x_i}} + \lambda\rbr{2C +1}\abr{u_1 - u_2}, 
\end{eqnarray*}
which leads to
% \lihong{How does the inequality below work?  Maybe expand a bit or explain?} 
% \Bo{This comes from above inequality about the distance in $\Gcal$. The covering number of the direct product of two function space is bounded by the product of the covering number of the two spaces (with the diameter of the covering balls summed up.) We used this for SBEED and DualDICE.....I think it is correct. }
\begin{eqnarray*}
&&\Ncal_1\rbr{\rbr{C_\phi + \rbr{3\lambda + 2 + \gamma + \kappa}\rbr{C+1}}\eps', \Gcal, \cbr{Z_i}_{i=1}^N}\\
&\le& \Ncal_1\rbr{\eps', \Hcal, \rbr{x_i}_{i=1}^N}\Ncal_1\rbr{\eps', \Fcal, \rbr{x'_i}_{i=1}^N}\Ncal_1\rbr{\eps', \Fcal, \rbr{x_i}_{i=1}^N}\Ncal_1\rbr{\eps', U}.
\end{eqnarray*}
For the set $U = [-C-1, C+1]$, we have,
\begin{equation*}
\Ncal_1\rbr{\eps', U}\le \frac{2C+2}{\eps'}.
\end{equation*}
Denote the pseudo-dimension of $\Hcal$ and $\Fcal$ as $D_\Hcal$ and $D_\Fcal$, respectively, we have
\begin{align*}
% \textstyle
& \Ncal_1\rbr{\rbr{C_\phi + \rbr{3+2\lambda + \kappa}\rbr{C+1}}\eps', \Gcal, \cbr{Z_i}_{i=1}^N} \\
\le&  e^3\rbr{D_\Hcal+1}\rbr{D_\Fcal+1}^2\rbr{\frac{2C+2}{\eps'}}\rbr{\frac{4eC}{\eps'}}^{D_\Hcal + 2D_\Fcal},
\end{align*}
which implies
\begin{eqnarray*}
&&\Ncal_1\rbr{\frac{\eps}{8}, \Gcal, \cbr{Z_i}_{i=1}^N}\\
&\le& \frac{C+1}{2C}e^2\rbr{D_\Hcal+1}\rbr{D_\Fcal+1}^2\rbr{\frac{32\rbr{C_\phi + \rbr{3\lambda + 2 + \gamma + \kappa}\rbr{C+1}}eC}{\eps}}^{D_\Hcal+D_\Fcal + 1}\\
&=& C_1\rbr{\frac{1}{\eps}}^{D_1},
\end{eqnarray*}
where $D_1 = D_\Hcal + D_\Fcal + 1$ and
$$
C_1 = \frac{C+1}{2C}e^2\rbr{D_\Hcal+1}\rbr{D_\Fcal+1}^2\rbr{32\rbr{C_\phi + \rbr{3\lambda + 2 + \gamma + \kappa}\rbr{C+1}}eC}\,.
$$

Combine this result with~\eqref{eq:intermediate}, we obtain the bound for the statistical error:
\begin{equation}
\sP\rbr{\sup_{\tau\in\Hcal, f\in \Fcal, u\in U}\abr{\Jhat\rbr{\tau, u, f} - J\rbr{\tau, u, f}}} \le 8C_1\rbr{\frac{1}{\eps}}^{D_1}\exp\rbr{\frac{-N\epsilon^2}{512M^2}}.
\end{equation}
Setting $\eps = \sqrt{\frac{C_2\rbr{\log N + \log \frac{1}{\delta}}}{N}}$ with $C_2 = \max\rbr{\rbr{8C_1}^{\frac{2}{D_1}}, 512MD_1, 512M, 1}$, we have
\begin{equation*}
 8C_1\rbr{\frac{1}{\eps}}^{D_1}\exp\rbr{\frac{-N\epsilon^2}{512M^2}}\le \delta.
\end{equation*}
\end{proof}

%---------------------------------------------------------------------------
\subsection{Optimization Error}\label{appendix:opt_error}
%---------------------------------------------------------------------------
In this section, we investigate the optimization error
$$
\hat\eps_{opt}\defeq\Lhat\rbr{\hat\tau} - \Lhat\rbr{\tauhs}.
$$
Notice our estimator $\min_{\tau\in\Hcal}\max_{f\in \Fcal, u\in U}\Jhat\rbr{\tau, u, f}$ is compatible with different parametrizations for $\rbr{\Hcal, \Fcal}$ and different optimization algorithms, the optimization error will be different. For the general neural network for $\rbr{\tau, f}$, although there are several progress recently~\citep{LinLiuRafYan18,JinNetJor19,LinJinJor19} about the convergence to a stationary point or local minimum, it remains a largely open problem to quantify the optimization error, which is out of the scope of this paper.  Here, we mainly discuss the convergence rate with tabular, linear and kernel parametrization for $\rbr{\tau, f}$.

% Without loss of generality, \lihong{Not really ``WLOG''?}
% Particularly, we consider the linear parametrization particularly, \ie, $\tau\rbr{x} =  \sigma\rbr{w_\tau^\top \psi\rbr{x}}$, $f\rbr{x} = w_f^\top \psi\rbr{x}$, and $\sigma\rbr{\cdot}:\RR\rightarrow\RR_+$ is convex. There are many choices of the $\sigma\rbr{\cdot}$, \eg, $\exp\rbr{\cdot}$, $\log\rbr{1 + \exp\rbr{\cdot}}$ and $\rbr{\cdot}^2$. Obviously, even with such nonlinear mapping, the $\Jhat\rbr{\tau, u, f}$ is still convex-concave w.r.t $\rbr{w_\tau, w_f, u}$ by the convex composition rule. 

Particularly, we consider the linear parametrization particularly, \ie, $\tau\rbr{x} =  {w_\tau^\top \psi_\tau\rbr{x}}$ with $\cbr{w_\tau, \psi_\tau\rbr{x}\ge 0} $ and $f\rbr{x} = w_f^\top \psi_f\rbr{x}$.  With such parametrization, the $\Jhat\rbr{\tau, u, f}$ is still convex-concave w.r.t $\rbr{w_\tau, w_f, u}$.

We can bound the $\hat\eps_{opt}$ by the primal-dual gap $\eps_{gap}$: 
\begin{eqnarray*}
\hat\eps_{opt} &=& \Lhat\rbr{\hat\tau} - \Lhat\rbr{\tauhs}\\
&\le& \max_{f\in\Fcal, u\in U}\Jhat\rbr{\hat\tau, u, f} - \Jhat\rbr{\tauhs, \uhs, \fhs} +  \Jhat\rbr{\tauhs, \uhs, \fhs} - \min_{\tau\in \Hcal}\Jhat\rbr{\tau, \uhat, \fhat}\\
&=& \underbrace{\max_{f\in\Fcal, u\in U}\Jhat\rbr{\hat\tau, u, f} - \min_{\tau\in \Hcal}\Jhat\rbr{\tau, \uhat, \fhat}}_{\eps_{gap}}.
\end{eqnarray*}
With vanilla SGD, we have $\eps_{gap} = \Ocal\rbr{\frac{1}{\sqrt{T}}}$, where $T$ is the optimization steps~\citep{NemJudLanSha09}. 
%rate in terms of the $\eps_{gap}$~\citep{NemJudLanSha09},
Therefore, $\eps_{opt} = \EE\sbr{\hat\eps_{opt}} = \Ocal\rbr{\frac{1}{\sqrt{T}}}$, where the $\EE\sbr{\cdot}$ is taken w.r.t. randomness in SGD. 

%---------------------------------------------------------------------------
\subsection{Complete Error Analysis}\label{appendix:full_error}
%---------------------------------------------------------------------------

We are now ready to state the main theorm in a precise way:
\paragraph{\thmref{thm:total_error}}
\textit{
Under Assumptions~\ref{asmp:ref_dist} and~\ref{asmp:stat_exist} , the stationary distribution $\mu$ exists, \ie, $\max_{f\in\Fcal^*}\EE_{\Tcal\circ\mu}\sbr{f} - \EE_{\mu}\sbr{\phi^*\rbr{f}} = 0$. If the $\phi^*\rbr{\cdot}$ is $\kappa$-Lipschitz continuous, $\nbr{f}_\infty\le C<\infty,\,\,\forall f\in \Fcal^*$, and the psuedo-dimension of $\Hcal$ and $\Fcal$ are finite, the error between the~\estname estimate to $\tau^*\rbr{x} = \frac{u\rbr{x}}{p\rbr{x}}$ is bounded by
$$
\EE\sbr{J\rbr{\hat\tau} - J\rbr{\tau^*}} =\widetilde\Ocal\rbr{\eps_{approx}\rbr{\Fcal, \Hcal} + \sqrt{\frac{1}{N}} + \eps_{opt}},
$$
where $\EE\sbr{\cdot}$ is w.r.t. the randomness in sample $\Dcal$ and in the optimization algorithms. $\eps_{opt}$ is the optimization error, and $\eps_{approx}\rbr{\Fcal, \Hcal}$ is the approximation induced by $\rbr{\Fcal, \Hcal}$ for parametrization of $\rbr{\tau, f}$.
% \lihong{``stochastic algorithms'' to ``optimization algorithm''?}\Bo{sure!}
}
\begin{proof}
We have the total error as 
\begin{equation}\label{eq:bound}
\EE\sbr{J\rbr{\hat\tau} - J\rbr{\tau^*}}  \le 4\EE\sbr{\eps_{est}} + \EE\sbr{\eps_{opt}} + \eps_{approx}\rbr{\Fcal, \Hcal}, 
\end{equation}
where $\eps_{approx}\defeq 2C_{\Tcal, \kappa,\lambda}\eps_{approx}\rbr{\Fcal} + C_{\phi, C, \lambda}\eps_{approx}\rbr{\Hcal}$. For ${\eps_{opt}}$, we can apply the results for SGD in~\appref{appendix:opt_error}. 

We can bound the $\EE\sbr{\eps_{est}}$ by \lemref{lemma:stat_error}. Specifically, we have
\begin{equation*}
\EE\sbr{\eps_{est}} = \rbr{1 - \delta}\sqrt{\frac{C_2\rbr{\log N + \log \frac{1}{\delta}}}{N}} + \delta M =\Ocal\rbr{\sqrt{\frac{\log N}{N}}},
\end{equation*}
by setting $\delta = \frac{1}{\sqrt{N}}$. 

Plug all these bounds into~\eqref{eq:bound}, we achieve the conclusion.

\end{proof}

%!TEX root = dual_ratio.tex

% \clearpage

%%%%%%%%%%%%%%%%%%%%%%%%%%%%%%%%%%%%%%%%%%%%%%%%%%%%%%%%%%%%%%%%%%%%%%%%%%%%
\section{Experimental Settings}~\label{appendix:exp_settings}
%%%%%%%%%%%%%%%%%%%%%%%%%%%%%%%%%%%%%%%%%%%%%%%%%%%%%%%%%%%%%%%%%%%%%%%%%%%%

\subsection{Tabular Case}
For the Taxi domain, we follow the same protocol as used in~\citet{liu2018breaking}. The behavior and target policies are also taken from~\citet{liu2018breaking} (referred in their work as the behavior policy for $\alpha=0$).
We use a similar taxi domain, where a grid size of $5 \times 5$ yields $2000$ states in total ($25\times16\times5$, corresponding to $25$ taxi locations, 16 passenger appearance status and $5$ taxi status). We set our target policy as the final policy $\pi_*$ after running Q-learning~\citep{SutBar98} for $1000$ iterations, and set another policy $\pi_+$ after $950$ iterations as our base policy. The behavior policy is a mixture policy controlled by $\alpha$ as
$\pi = (1-\alpha)\pi_* + \alpha \pi_+$, \ie, the larger $\alpha$ is, the behavior policy is more close to the target policy. In this setting, we solve for the optimal stationary ratio $\tau$ exactly using matrix operations.  Since~\citet{liu2018breaking} perform a similar exact solve for $|S|$ variables $\mu(s)$, for better comparison we also perform our exact solve with respect to $|S|$ variables $\tau(s)$.  Specifically, the final objective of importance sampling will require knowledge of the importance weights $\mu(a|s)/p(a|s)$.
~~~~~~~~~~
\begin{wraptable}{r}{0.55\textwidth}
	\small
	\caption{Statistics of different graphs.}
	\begin{tabular}{lcc}
 	% \begin{adjustbox}{scale=0.9,tabular= lcc,center}
		\toprule[1.2pt]
		\textbf{Dataset} & \textbf{Number of Nodes} & \textbf{Number of Edges}\\
		\midrule
		BA (Small) &  $100$ & $400$\\
		BA (Large) & $500$ & $2000$\\
		Cora & $2708$ & $5429$\\
		Citeseer & $3327$ & $4731$\\
		\bottomrule[1.2pt]
		\vspace{-6mm}
	% \end{adjustbox}
	\end{tabular}
	% \vspace{-4mm}
	\label{tab:oprdata}
\end{wraptable} 

\vspace{-2mm}
For offline PageRank, the graph statistics are illustrated in Table \ref{tab:oprdata}, and the degree statistics and graph visualization are shown in Figure \ref{fig:offpolicy-graphs}. For the Barabasi–Albert (BA) Graph, it begins with an initial connected network of $m_0$ nodes in the network. Each new node is connected to $m\leq m_{0}$ existing nodes with a probability that is proportional to the number of links that the existing nodes already have. 
% We are targeting on matching $p$ and $\frac{\Ttau}{\tau}$ by minimizing $D\rbr{p||\frac{\Ttau}{\tau}}$ via its Fenchel dual form \Roy{Using $D\rbr{p{\tau}||{\Ttau}}$ now.}. 
Intuitively, heavily linked nodes (`hubs') tend to quickly accumulate even more links, while nodes with only a few links are unlikely to be chosen as the destination for a new link. The new nodes have a `preference' to attach themselves to the already heavily linked nodes. For two real-world graphs, it is built upon the real-world citation networks. In our experiments, the weights of the BA graph is randomly drawn from a standard Gaussian distribution with normalization to ensure the property of the transition matrix. The offline data is collected by a random walker on the graph, which consists the initial state and next state in a single trajectory. In experiments, we vary the number of off-policy samples to validate the effectiveness of \estname with limited offline samples provided.
\begin{figure}[ht] \centering
	\begin{tabular}{cccc}
		\hspace{-3mm}
		\includegraphics[width=0.25\linewidth]{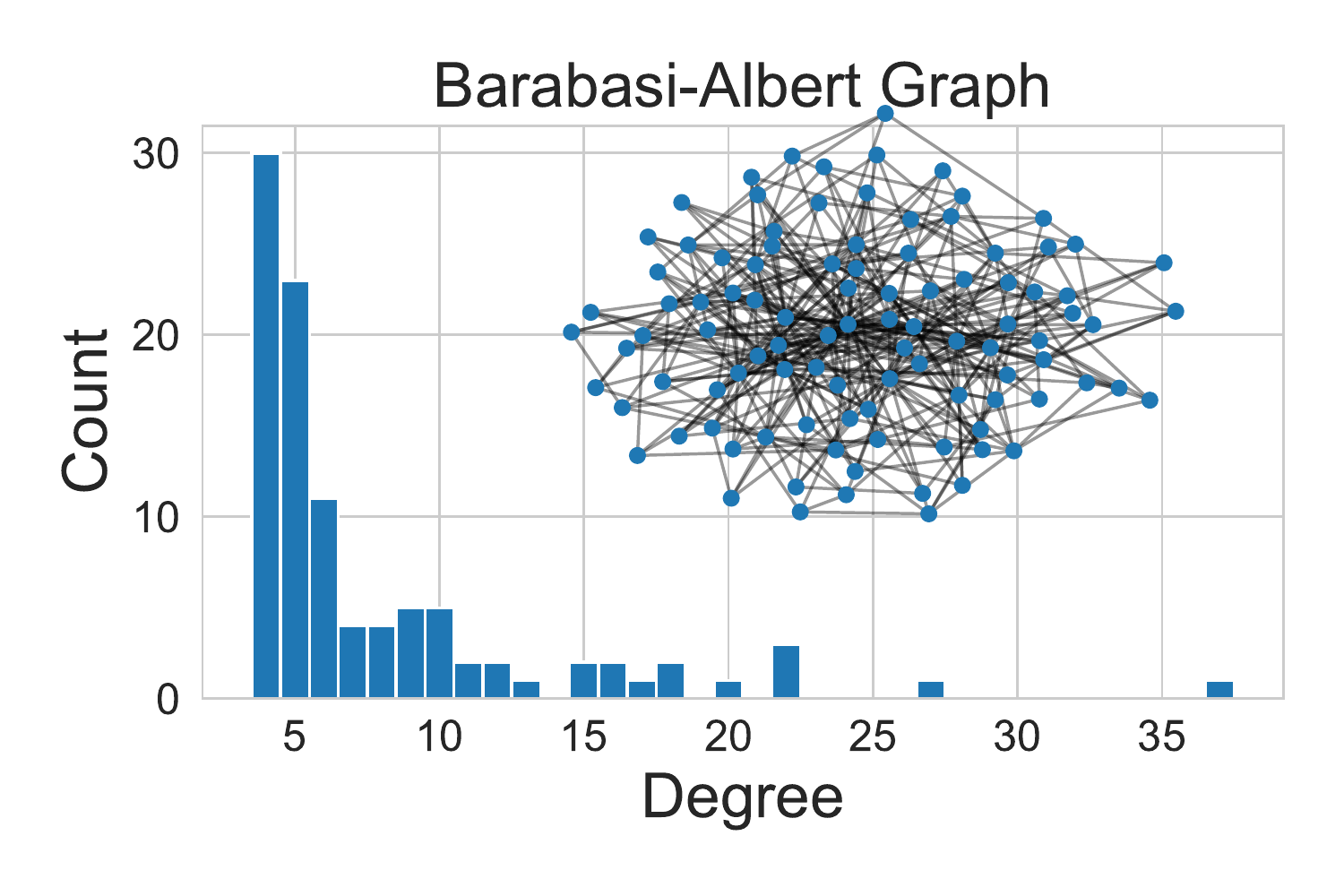}  &
		\hspace{-5mm}
		\includegraphics[width=0.25\linewidth]{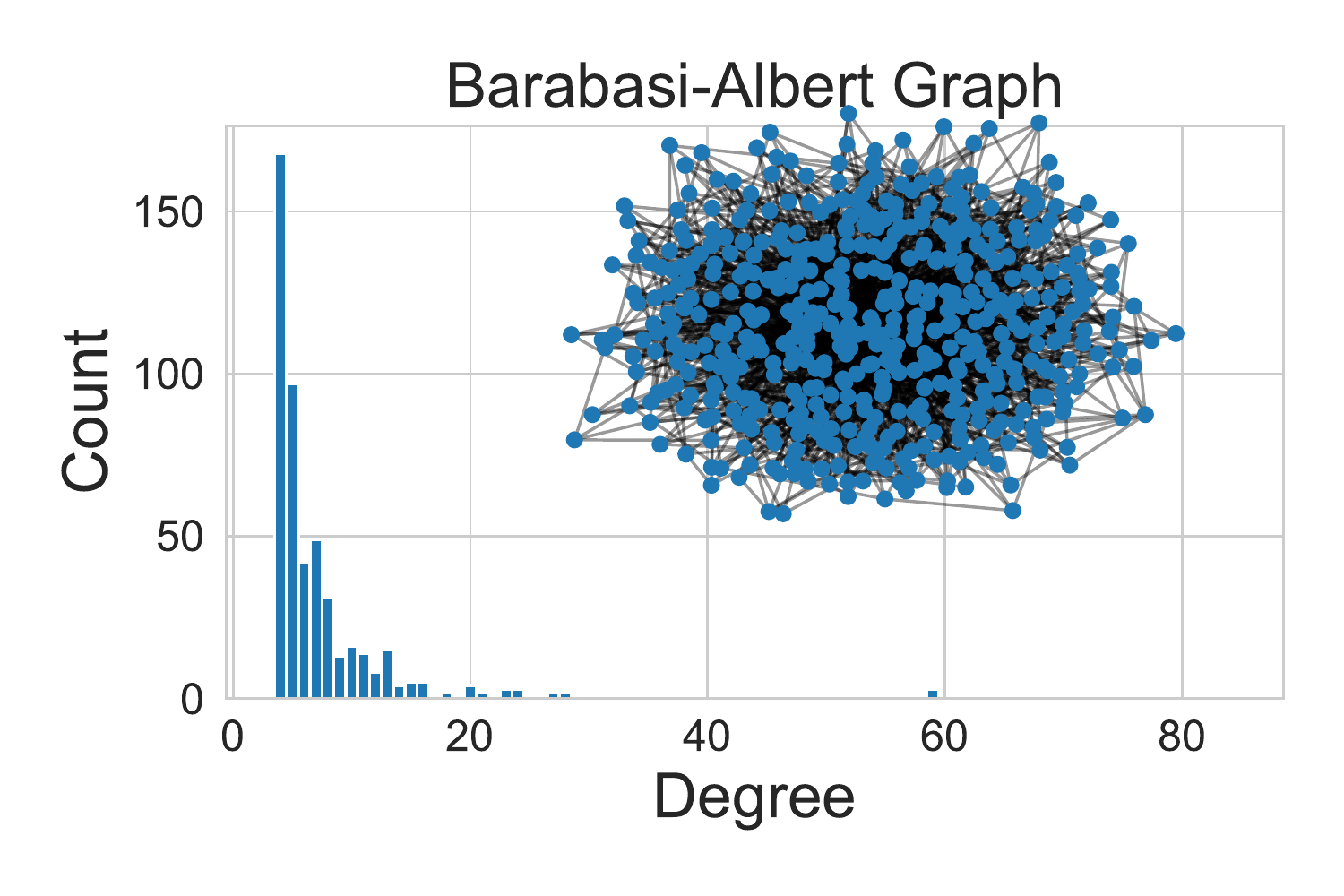}
		&
		\hspace{-5mm}
		\includegraphics[width=0.25\linewidth]{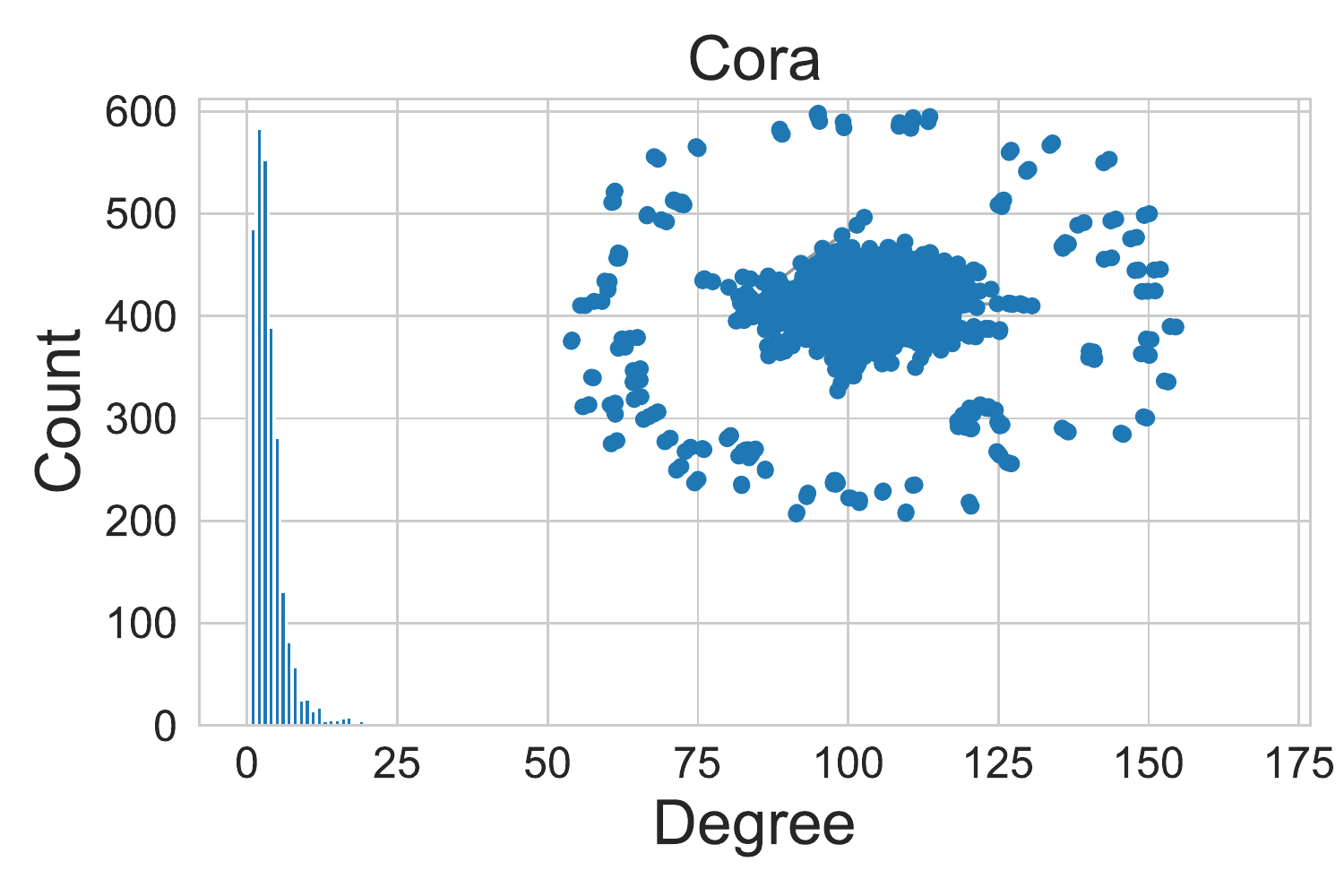}
		&
		\hspace{-5mm}
		\includegraphics[width=0.25\linewidth]{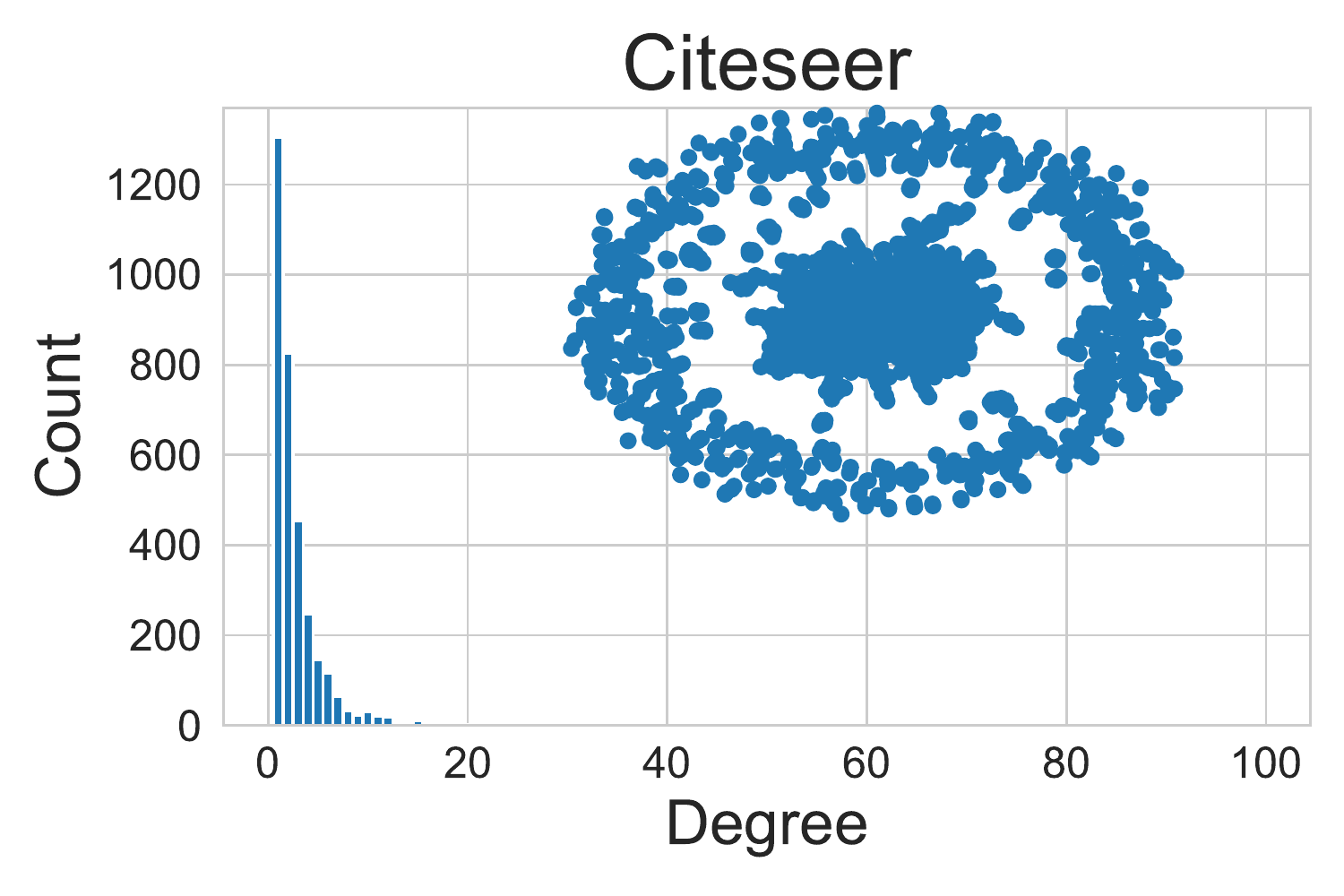}
		\\
	\end{tabular} 
	\vspace{-4mm}
	\caption{Degree statistics and visualization of different graphs.}
	\label{fig:offpolicy-graphs}
	\vspace{-4mm}
\end{figure}

\subsection{Continuous Case}
%\Roy{TODO: further check the experiment settings, make sure it is consistent with the implementation.} 
We use the Cartpole, Reacher and HalfCheetah tasks as given by OpenAI Gym.  In importance sampling, we learn a neural network policy via behavior cloning, and use its probabilities for computing importance weights $\pi_*(a|s)/\pi(a|s)$. All neural networks are feed-forward with two hidden layers of dimension $64$ and $\tanh$ activations.

\paragraph{Discrete Control Tasks} We modify the Cartpole task to be infinite horizon: We use the same dynamics as in the original task but change the reward to be $-1$ if the original task returns a termination (when the pole falls below some threshold) and $1$ otherwise. We train a policy on this task with standard Deep Q-Learning~\citep{MniKavSilGraetal13} until convergence.  

We then define the target policy $\pi_*$ as a weighted combination of this pre-trained policy (weight $0.7$) and a uniformly random policy (weight $0.3$).  The behavior policy $\pi$ for a specific $0\le \alpha\le 1$ is taken to be a weighted combination of the pre-trained policy (weight $0.55 + 0.15\alpha$) and a uniformly random policy (weight $0.45 - 0.15\alpha$). We train each stationary distribution correction estimation method using the Adam optimizer with batches of size $2048$ and learning rates chosen using a hyperparameter search from $\{0.0001, 0.0003, 0.001, 0.003\}$ and choose the best one as $0.0003$.

\paragraph{Continuous Control Tasks} For the Reacher task, we train a deterministic policy until convergence via DDPG~\citep{LilHunPriHeeetal15}.  We define the target policy $\pi$ as a Gaussian with mean given by the pre-trained policy and standard deviation given by $0.1$. The behavior policy $\pi_b$ for a specific $0\le \alpha\le 1$ is taken to be a Gaussian with mean given by the pre-trained policy and standard deviation given by $0.4 - 0.3\alpha$.  We train each stationary distribution correction estimation method using the Adam optimizer with batches of size $2048$ and learning rates chosen using a hyperparameter search from $\{0.0001, 0.0003, 0.001, 0.003\}$ and the optimal learning rate found was $0.003$).

For the HalfCheetah task, we also train a deterministic policy until convergence via DDPG~\citep{LilHunPriHeeetal15}.  We define the target policy $\pi$ as a Gaussian with mean given by the pre-trained policy and standard deviation given by $0.1$. The behavior policy $\pi_b$ for a specific $0\le \alpha\le 1$ is taken to be a Gaussian with mean given by the pre-trained policy and standard deviation given by $0.2 - 0.1\alpha$.  We train each stationary distribution correction estimation method using the Adam optimizer with batches of size $2048$ and learning rates chosen using a hyperparameter search from $\{0.0001, 0.0003, 0.001, 0.003\}$ and the optimal learning rate found was $0.003$.

%%%%%%%%%%%%%%%%%%%%%%%%%%%%%%%%%%%%%%%%%%%%%%%%%%%%%%%%%%%%%%%%%%%%%%%%%%%%
\section{Additional Experiments}\label{appendix:exp}
%%%%%%%%%%%%%%%%%%%%%%%%%%%%%%%%%%%%%%%%%%%%%%%%%%%%%%%%%%%%%%%%%%%%%%%%%%%%

\subsection{OPE for Discrete Control}
On the discrete control task, we modify the Cartpole task to be infinite horizon: the original dynamics is used but with a modified reward function: the agent will receive $-1$ if the environment returns a termination (\ie, the pole falls below some threshold) and 1 otherwise. As shown in Figure \ref{fig:offpolicy-cartpole}, our method shows competitive results with IS and Model-Based in average reward case, but our proposed method finally outperforms these two methods in terms of log MSE loss. Specifically, it is relatively difficult to fit a policy with data collected by multiple policies, which renders the poor performance of IS.
\vspace{-2mm}
\begin{figure}[ht] \centering
	\includegraphics[width=1\linewidth]{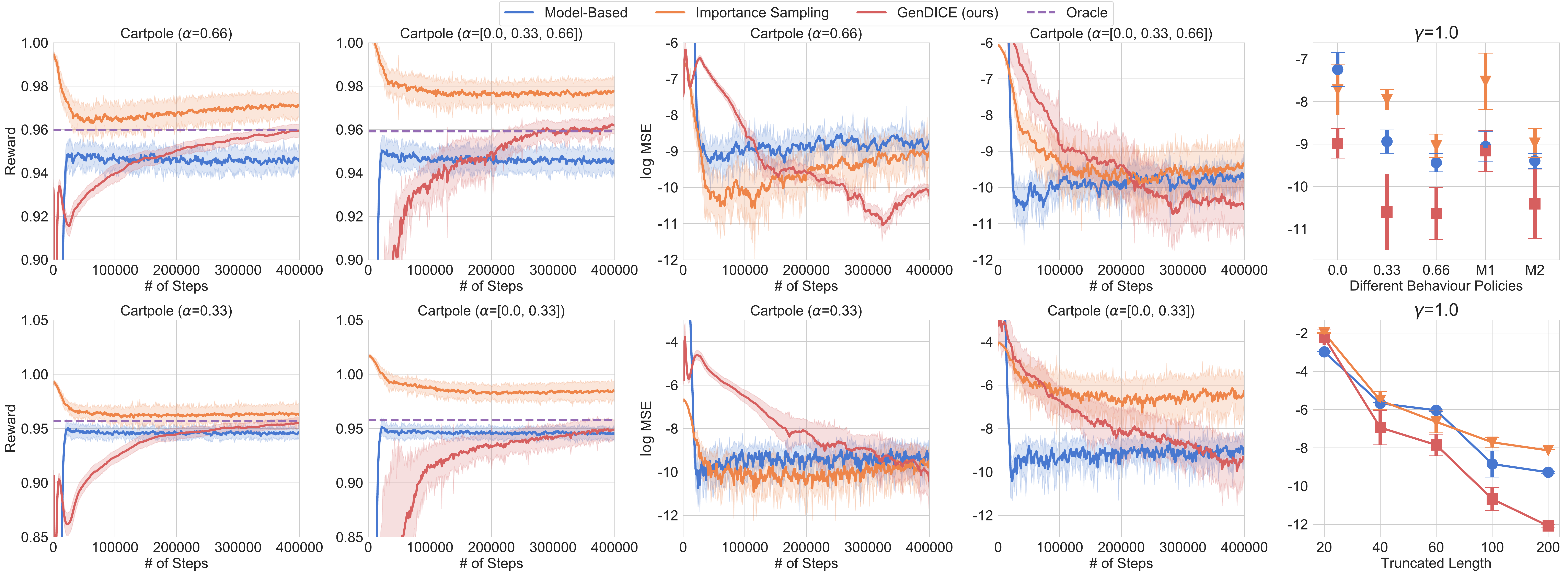}
	\vspace{-6mm}
	\caption{Results on Cartpole. Each plot in the first row shows the estimated average step reward over training and different behavior policies (higher $\alpha$ corresponds to a behavior policy closer to the target policy; the same in other figures); M1:$\alpha=[0.0, 0.33]$;  M2: $\alpha=[0.0, 0.33, 0.66]$)}
	\label{fig:offpolicy-cartpole_l}
\end{figure}

\subsection{Additional Results on Continuous Control}
In this section, we show more results on the continuous control tasks, \ie, HalfCheetah and Reacher. Figure \ref{fig:offpolicy-reacher_l} shows the $\log$ MSE towards training steps, and \estname outperforms other baselines with different behavior policies. Figure \ref{fig:offpolicy-reacher_r} better illustrates how our method beat other baselines, and can accurately estimate the reward of the target policy. Besides, Figure \ref{fig:offpolicy-half_r} shows \estname gives better reward estimation of the target policy. In these figures, the left three figures show the performance with off-policy dataset collected by single behavior policy from more difficult to easier tasks. The right two figures show the results, where off-policy dataset collected by multiple behavior policies. 

Figure \ref{fig:ablationlr_l} shows the ablation study results in terms of estimated rewards. 
The left two figures shows the effects of different learning rate. When $\alpha=0.33$, \ie, the OPE tasks are relatively easier, \estname gets relatively good results in all learning rate settings. However, when $\alpha=0.0$, \ie, the estimation becomes more difficult, only \estname in larger learning rate gets reasonable estimation. Interestingly, we can see with larger learning rates, the performance becomes better, and when learning rate is $0.001$ with $\alpha=0.0$, the variance is very high, showing some cases the estimation becomes more accurate.
The right three figures show different activation functions with different behavior policy. The square and softplus function works well; while the exponential function shows poor performance under some settings. In practice, we use the square function since its low variance and better performance in most cases.

\begin{figure}[ht] \centering
	\includegraphics[width=1\linewidth]{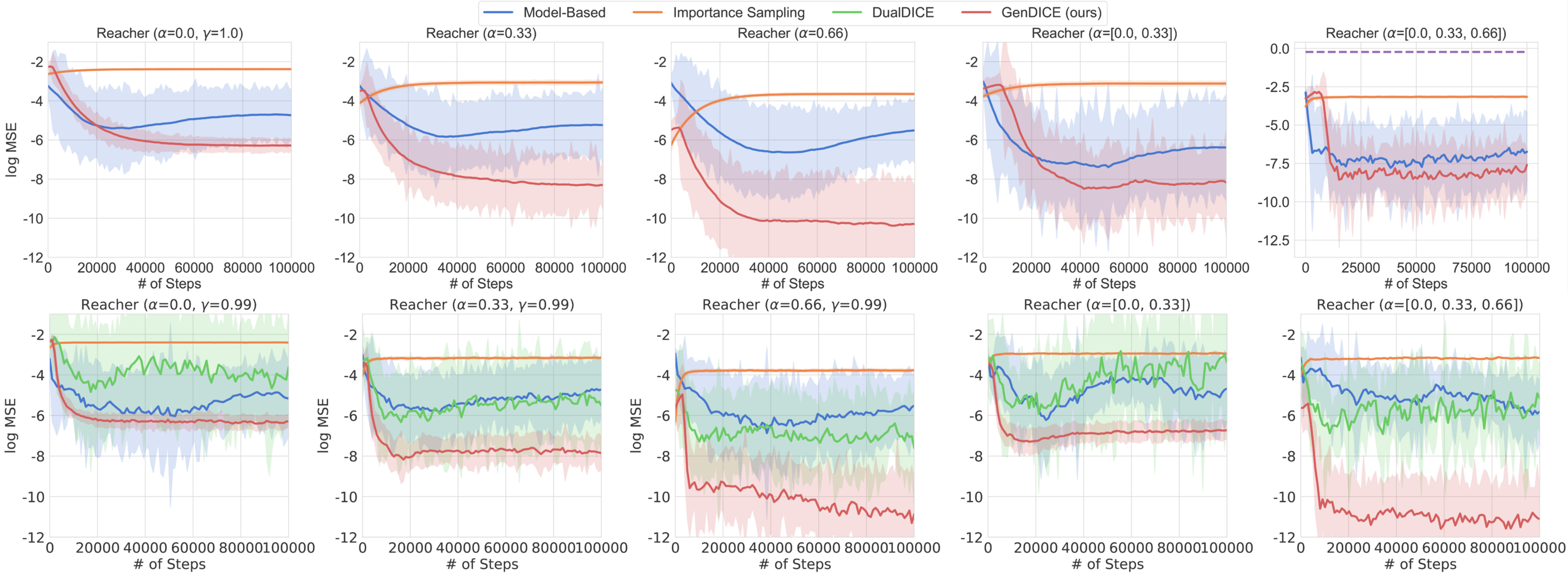}
	\vspace{-6mm}
	\caption{Results on Reacher. Each plot in the first row shows the estimated average step reward over training and different behavior policies (higher $\alpha$ corresponds to a behavior policy closer to the target policy; the same in other figures).}
	\label{fig:offpolicy-reacher_l}
\end{figure}
\begin{figure}[h!] \centering
	\includegraphics[width=1\linewidth]{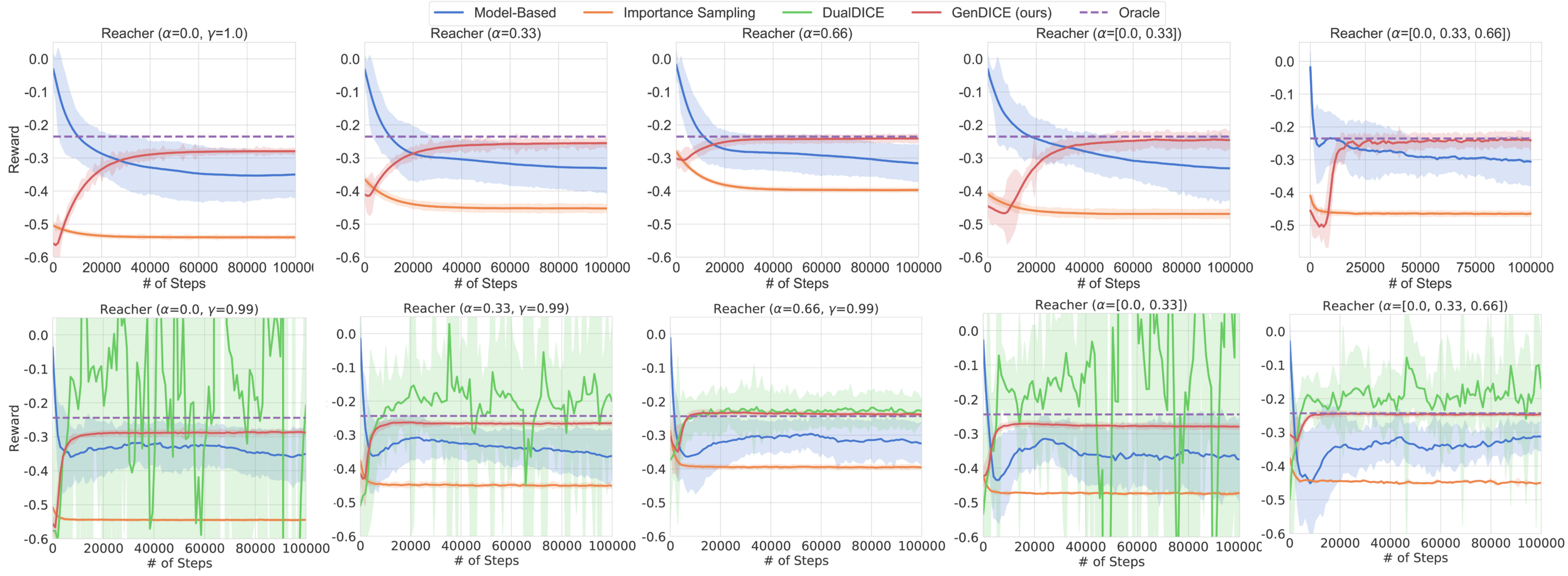}
	\vspace{-6mm}
	\caption{Results on Reacher. Each plot in the first row shows the estimated average step reward over training and different behavior policies (higher $\alpha$ corresponds to a behavior policy closer to the target policy; the same in other figures).}
	\label{fig:offpolicy-reacher_r}
\end{figure}
\begin{figure}[h!] \centering
	\includegraphics[width=1\linewidth]{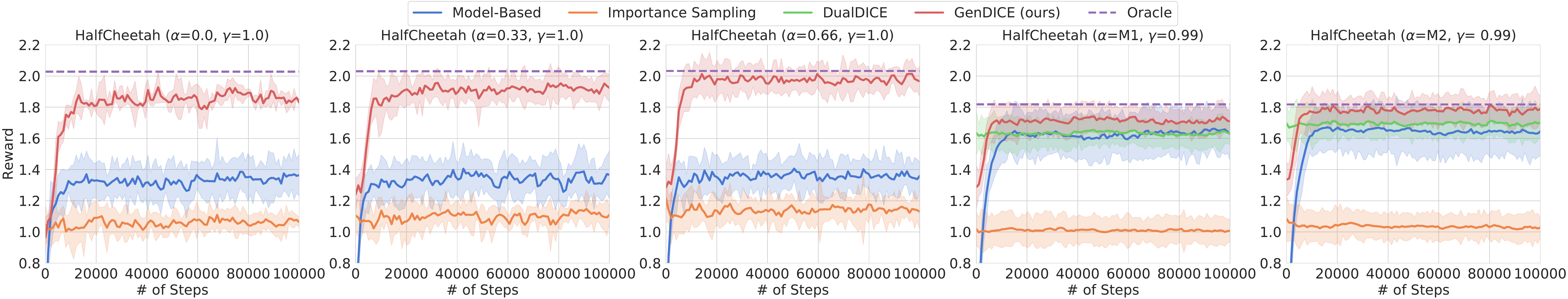}
	\vspace{-6mm}
	\caption{Results on HalfCheetah. Each plot in the first row shows the estimated average step reward over training and different behavior policies (higher $\alpha$ corresponds to a behavior policy closer to the target policy.}
	\label{fig:offpolicy-half_r}
\end{figure}
\begin{figure}[h!] \centering
	\begin{tabular}{cc}
		\hspace{-3mm}
		\includegraphics[width=0.4\linewidth]{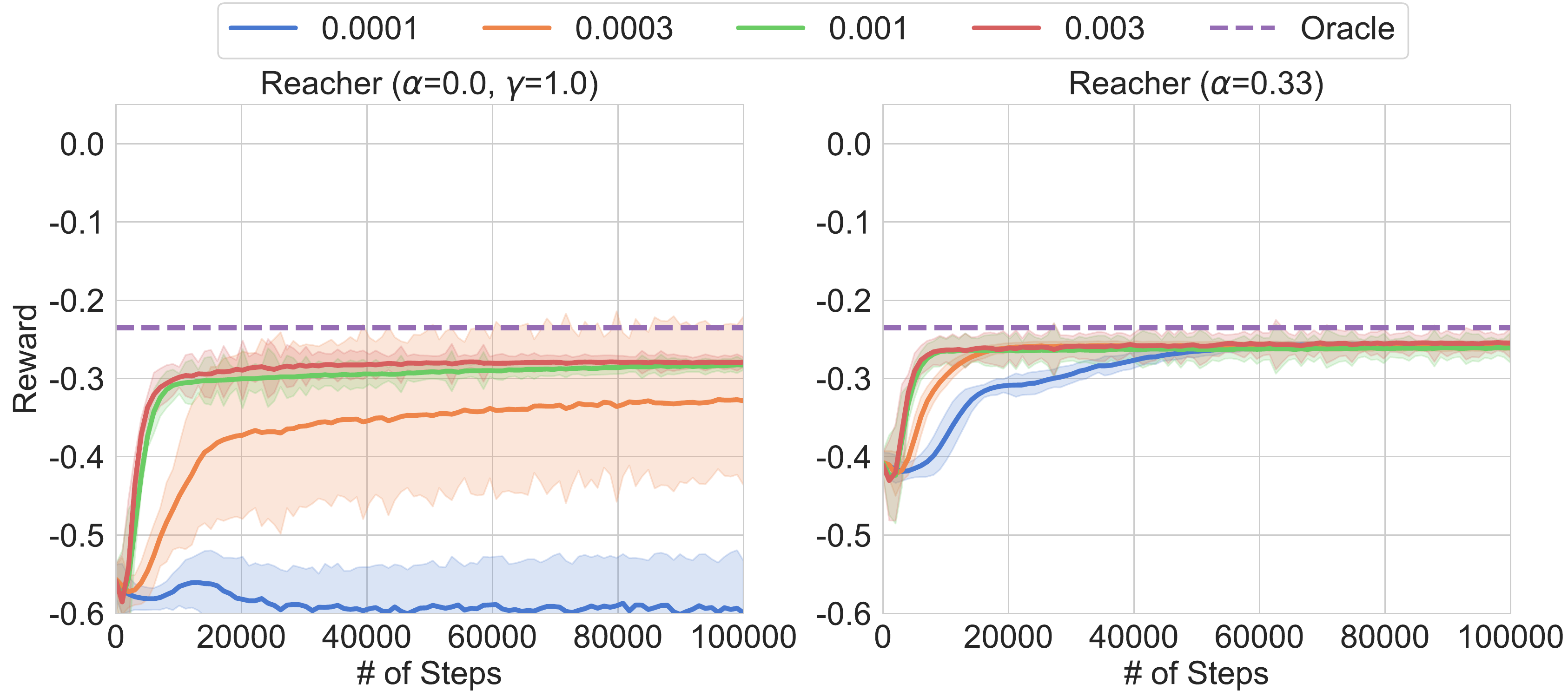}  &
		\hspace{-5mm}
  		\includegraphics[width=0.6\linewidth]{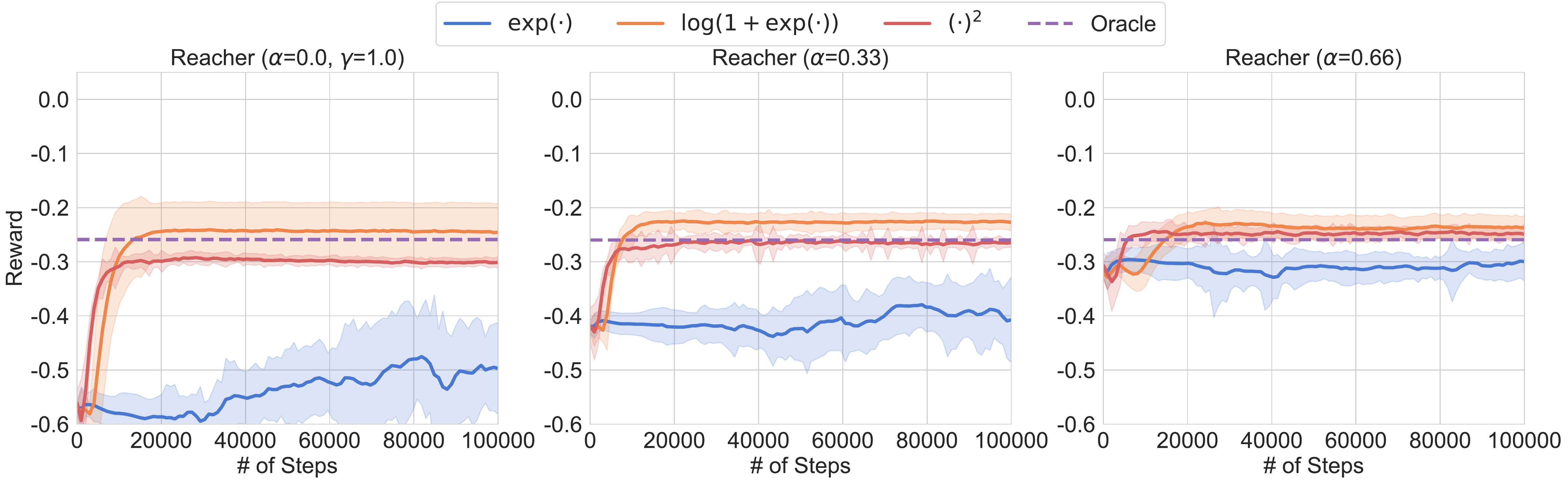}
		\\
	\end{tabular} 
	\vspace{-3mm}
	\caption{{Results of ablation study with different learning rates and activation functions. The plots show the estimated average step reward over training and different behavior policies .}}
	\label{fig:ablationlr_l}
\end{figure}

\subsection{Comparison with self-normalization trick}\label{appendix:selfnormal}

The self-normalization trick used in \cite{liu2018breaking} encodes the normalization constraint in $\tau$, while the principled optimization technique is considered in GenDICE. Further, the self-normalization trick will lead to several disadvantages theoretically, in both statistical and computational aspects :
\begin{itemize}[leftmargin=*, nosep, topsep=0pt]
	\item[{\bf i)}] It will generally not produce an unbiased solution. Although $\frac{1}{| \mathcal{D}|}\sum_{(s’, a’)\in \mathcal{D}}\tau(s’, a’)$ is an unbiased estimator for $\mathbb{E}[\tau]$, the plugin estimator $\frac{\tau(s, a)}{\frac{1}{| \mathcal{D}|}\sum_{(s’, a’)\in \mathcal{D}}\tau(s’, a’)}$ will be \emph{biased} for $\frac{\tau(s, a)}{\mathbb{E}[\tau]}$. 
	
	\item[{\bf ii)}] It will induce more computational cost. Specifically, the self-normalized ratio will be in the form of $\frac{\tau(s, a)}{\frac{1}{| \mathcal{D}|}\sum_{(s’, a’)\in \mathcal{D}}\tau(s’, a’)}$, which requires to go through all the samples in training set $\mathcal{D}$ even for just estimating one stochastic gradient, and thus, is prohibitive for large dataset. 
\end{itemize}
Empirically, self-normalization is the most natural and the first idea we tried during this project. 
We have some empirical results about this method in the OPR setting. 

\begin{wraptable}{r}{0.46\textwidth}
	\vspace{-7mm}
	\caption{Comparison between regularization and self-normalization.}
	\vspace{2mm}
	\centering
	\begin{tabular}{lc}
		% \begin{adjustbox}{scale=0.9,tabular= lcc,center}
		\toprule[1.2pt]
		& \textbf{$\log$ KL-divergence} \\
		\midrule
		self-normalization &  $-4.26 \pm 0.157  $ \\ 
		regularization &  $-4.74  \pm 0.163  $\\
		\bottomrule[1.2pt]
		\vspace{-6mm}
		% \end{adjustbox}
	\end{tabular}
	% \vspace{-4mm}
	\label{tab:oprselfaba}
\end{wraptable} 
Despite the additional computational cost, it performs worse than the proposed regularization technique used in the current version of GenDICE. 
Table \ref{tab:oprselfaba} shows a comparison between self-normalization and regularization on OPR with $\chi^2$-divergence for BA graph with 100 nodes, 10,000 offline samples, and 20 trials. We stop the algorithms in the same running-time budget. 

%\section{More Related Work}\label{appendix:more_related_work}

% Add later

% data distribution and the model distribution, 
% and can be generalized as minimizing the $f$-divergence~\citep{nowozin2016f}. 

%\vspace{-2mm}
%\paragraph{Off-policy Learning} 
%Off-policy learning~\citep{precup01off,munos2016safe,gelada2019off,liu2019off}  aims at value or policy learning from off-policy data, \ie, policy improvement given a policy, which is different from the off-policy evaluation~\citep{thomas2016data,irpan2019off}. 
%Furthermore, off-policy learning mainly focus on how to train a policy in a more stable manner with better convwergence. The off-policy evaluation can be intergrated into off-policy learning to estimate the reward of target policy for its optimization, but off-policy is not only restricted in this setting~\citep{swaminathan2017off}. Thus applying \estname in the off-policy learning setting to enhance off-polcy learning is an interesting future direction.

\end{appendix}

\end{document}